\documentclass{article} %
\usepackage{iclr2025_conference,times}

\usepackage{amsmath,amsfonts,bm}

\def\eqref#1{equation~\ref{#1}}

\def\1{\bm{1}}

\DeclareMathAlphabet{\mathsfit}{\encodingdefault}{\sfdefault}{m}{sl}
\SetMathAlphabet{\mathsfit}{bold}{\encodingdefault}{\sfdefault}{bx}{n}

\PassOptionsToPackage{dvipsnames}{xcolor}
\usepackage{xcolor}
\definecolor{darkblue}{rgb}{0.0, 0.0, 0.5} %
\usepackage[colorlinks=true, linkcolor=darkblue, citecolor=darkblue, urlcolor=magenta]{hyperref}

\usepackage{url}

\usepackage{booktabs}       %
\usepackage{amsfonts}       %
\usepackage{nicefrac}       %
\usepackage{microtype}      %

\newcommand{\method}{QMP}
\usepackage{url}
\usepackage{subcaption}
\usepackage{enumitem}
\usepackage{multirow}
\usepackage{booktabs}
\usepackage{graphicx}
\usepackage{dsfont}
\usepackage{float}
\usepackage{wrapfig}
\usepackage{algorithm}
\usepackage{algorithmic}[noend]

\usepackage{amsmath}
\usepackage{amssymb}
\usepackage{mathtools}
\usepackage{amsthm}
\usepackage{bbm}
\usepackage[most]{tcolorbox}

\usepackage[nottoc]{tocbibind}
\usepackage{minitoc}

\usepackage[capitalize,noabbrev]{cleveref}

\theoremstyle{plain}
\newtheorem{theorem}{Theorem}[section]

\newtheorem{lemma}[theorem]{Lemma}

\theoremstyle{definition}
\newtheorem{definition}[theorem]{Definition}

\theoremstyle{remark}

\definecolor{ForestGreen}{RGB}{34, 139, 34}
\newcommand{\Skip}[1]{}
\newcommand{\changed}[1]{}

\newcommand{\mixp}[1]{{\color{black}{#1}}}
\newcommand{\singlep}[1]{{\color{black}{#1}}}
\newcommand{\oldp}[1]{{\color{black}{#1}}}
\newcommand{\primary}{$\singlep{\pi_i}$}
\newcommand{\mixture}{\(\mixp{\pi_i^\text{mix}}\)}

\newcommand{\myfig}[1]{Figure \ref{#1}}

\newcommand{\myparagraph}[1]{\noindent \textbf{#1.}}

\title{QMP: Q-switch Mixture of Policies\\
for Multi-Task Behavior Sharing}

\author{%
  Grace Zhang$^{1*}$
  \And
  Ayush Jain$^{1}$\thanks{Equal contribution. Correspondence to: \{gracez, ayushj\}@usc.edu}
  \And
  Injune Hwang$^{2}$
  \And
  Shao-Hua Sun$^{3}$
  \And
  Joseph J. Lim$^{2}$ 
  \UnspacedAND
  \normalfont $^{1}$University of Southern California  \: $^{2}$KAIST \: $^{3}$National Taiwan University
}

\iclrfinalcopy %
\begin{document}

\doparttoc %
\faketableofcontents %

\maketitle

\begin{abstract}

Multi-task reinforcement learning (MTRL) aims to learn several tasks simultaneously for better sample efficiency than learning them separately. Traditional methods achieve this by sharing parameters or relabeled data between tasks.
In this work, we introduce a new framework for sharing \textit{behavioral policies} across tasks, which can be used in addition to existing MTRL methods. The key idea is to improve each task's off-policy data collection by employing behaviors from other task policies. Selectively sharing helpful behaviors acquired in one task to collect training data for another task can lead to higher-quality trajectories, leading to more sample-efficient MTRL.
Thus, we introduce a simple and principled framework called Q-switch mixture of policies (QMP) that selectively shares behavior between different task policies by using the task's Q-function to evaluate and select useful shareable behaviors.
We theoretically analyze how QMP improves the sample efficiency of the underlying RL algorithm.
Our experiments show that QMP's behavioral policy sharing provides complementary gains over many popular MTRL algorithms and outperforms alternative ways to share behaviors in various manipulation, locomotion, and navigation environments. 
Videos are available at~\href{https://qmp-mtrl.github.io/}{https://qmp-mtrl.github.io/}.

\end{abstract}

\section{Introduction}
\label{sec:introduction}

\begin{wrapfigure}[17]{t}{0.5\textwidth} %
    \vspace{-12pt}
    \centering
    \includegraphics[width=0.48\textwidth]{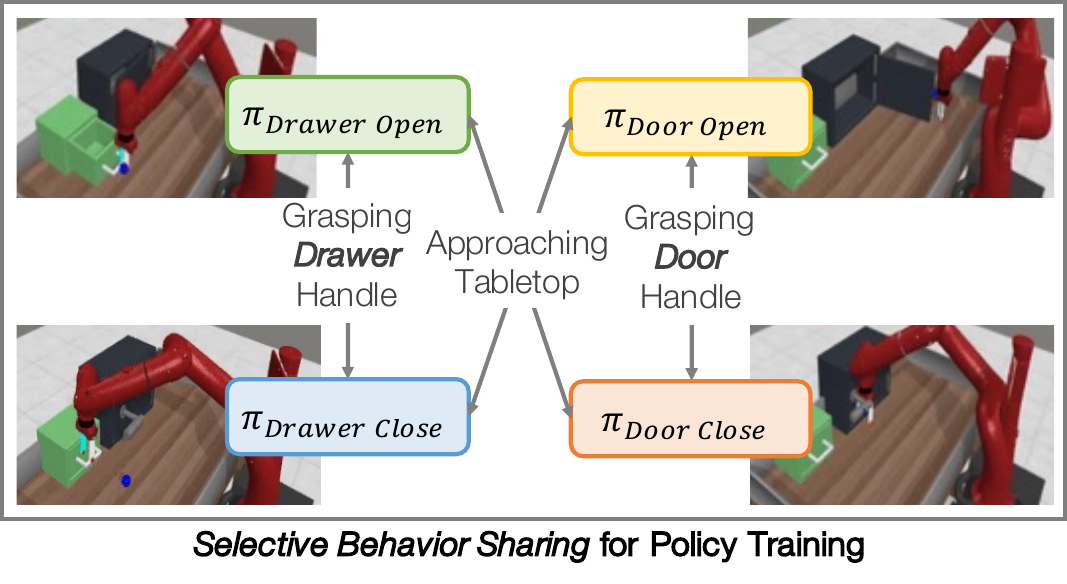}
    \vspace{-8pt}
        \caption[]{We propose a sample-efficient MTRL framework that selectively shares behaviors by acting with other task policies for data collection.
        For example, \texttt{Drawer Open} and \texttt{Drawer Close} can share behaviors performed for grasping drawer handle, while \texttt{Drawer Open} and \texttt{Door Close} share behaviors for approaching the tabletop.}

    \label{fig:teaser-fig}
\end{wrapfigure}

In multi-task reinforcement learning, each task can benefit from the behaviors learned in others.
Consider a robot learning four tasks simultaneously: opening and closing a drawer and a door on a tabletop, as illustrated in Figure 1. A behavior is defined as the policy of how the robot acts in a certain state, with the optimal behavior representing the best response, such as opening its gripper (action) when near the drawer handle (state) in the drawer-open task. As the robot learns, such behaviors are often shareable between tasks. For instance, both drawer-open and drawer-close tasks require behaviors for grasping the handle. Consequently, as the robot refines its ability to grasp the drawer handle in one task, it can incorporate these behaviors into the other, reducing the need to explore the entire action space randomly. Following this intuition, can we develop a general framework that leverages such behavior sharing across tasks to accelerate overall learning?

Most multi-task reinforcement learning (MTRL) methods share task information via policy parameters~\citep{vithayathil2020survey} or data relabeling~\citep{kaelbling1993learning}. We propose a new framework for MTRL: \emph{share behaviors} between tasks to improve data collection by employing potentially useful policies from other tasks for more informative training data. This approach offers a simple, general, and sample-efficient approach that complements existing off-policy MTRL methods.

Prior works~\citep{teh2017distral, ghosh2018divideandconquer} share behaviors between task policies uniformly by regularizing to one shared distilled
policy~\citep{rusu2015distillation}.  
This introduces a bias towards the mean behavior and causes negative interference when tasks might require differing optimal behaviors from the same state. In contrast, reusing other policies for data collection does not introduce any bias.

We propose \textit{selective behavioral policy sharing} as a novel and general mechanism to improve sample efficiency in any MTRL architecture.
Our key insight is that behaviors being acquired in other tasks can help when appropriately selected and shared, as shown in human learners~\citep{tomov2021multi}. In the Drawer Open task, while learning to approach the drawer handle, the robot should share behaviors between the Drawer policies, but avoid Door policies which would lead it to the wrong object.

The key question with selective behavioral policy sharing is how to identify helpful behaviors from other policies in a principled way.  
We propose a principled way of selecting shared behaviors: a Q-switch Mixture of Policies (QMP). At each state, one policy from a mixture of all policies is selected to collect data. The Q-switch makes this selection based on which policy best optimizes the current task's soft Q-value because that is an estimate of the most helpful behavior for the current task. We prove that this selection mechanism preserves the convergence guarantees of the underlying RL algorithm and potentially improves sample efficiency.
Crucially, QMP uses other tasks' policies only for data collection, allowing policy training to remain unbiased under any off-policy RL algorithm.

Our primary contribution is introducing behavioral policy sharing for MTRL as a novel avenue of information sharing between tasks and addressing the problem of principled selective behavior sharing. Our proposed framework, Q-switch Mixture of Policies (QMP), can effectively identify shareable behaviors between tasks and incorporates them to gather more informative training data for off-policy RL. We prove that QMP's behavior sharing not only preserves the policy convergence of the underlying RL algorithm, but is at least as sample efficient. We demonstrate that QMP provides complementary gains to other forms of MTRL in a range of manipulation, locomotion, and navigation tasks and performs well over diverse task families when compared to other behavior sharing methods.

\section{Related Work}
\label{sec:related}

\myparagraph{Information Sharing in Multi-Task RL}
\label{sec:sharing_related_work}
There are multiple, mostly complementary ways to share information in MTRL, including sharing \emph{data}, sharing \emph{parameters} or \emph{representations}, and sharing \emph{behaviors}.
In offline MTRL, prior works selectively share \emph{data} between tasks \citep{yu2021conservative, yu2022leverage}. 
Sharing parameters across policies can speed up MTRL through shared \textit{representations}~\citep{xu2020knowledge, d2020sharing, yang2020multi, pmlr-v139-sodhani21a, misra2016cross, Perez_Strub_de, devin2017learning, vuorio2019, rosenbaum2019routing, yu2023t3s, cheng2022provable, hong2022structureaware} and can be easily combined with other types of information sharing. Most similar to our work, \citet{teh2017distral} and \citet{ghosh2018divideandconquer} share \textit{behaviors} between multiple policies through policy distillation and regularization. 
~\citet{vuong2019sharing} identify which states between tasks share optimal behavior and regularize to each other there.
These works share behaviors through regularization, biasing the policy objective when tasks have differing optimal behaviors. In contrast, our work selectively shares behavioral policies without modifying the training objective.

\myparagraph{Multi-Task Learning for Diverse Task Families}
Multi-task learning in diverse task families is susceptible to \textit{negative transfer} between dissimilar tasks, hindering training. Prior works combat this by measuring task relatedness through validation loss on tasks \citep{liu2022auto_lambda, ackermann2021cluster} or influence of one task to another \citep{fifty2021efficiently, st2019tasks} to find task groupings for training.  Other works focus on the challenge of multi-objective optimization \citep{sener2018multi, hessel2019multi, yu2020gradient, liu2021conflictaverse, schaul2019, pmlr-v80-chen18a, kurin2022unitary}.
Similar to these works, we identify that prior behavior-sharing MTRL approaches are susceptible to negative transfer. However, we avoid the challenge of negative transfer entirely by selectively sharing behaviors only during off-policy data collection.

\myparagraph{Exploration in Multi-Task Reinforcement Learning}
Our approach of modifying the behavioral policy to leverage shared task structures can be seen as a form of MTRL exploration, which we discuss further in Appendix Section~\ref{fig:sac_exploration}.
\citet{bangaru2016exploration} encourage agents to
increase their state coverage by providing an exploration bonus.
\citet{zhang2021provably} study sharing information between
agents to encourage exploration under tabular MDPs.
~\citet{kalashnikov2021scaling} directly leverage data from policies of other specialized tasks (like grasping a ball) for their general task variant (like grasping an object). 
In contrast to these approaches, we do not require a pre-defined task similarity measure or exploration bonus;
we demonstrate in \Cref{sec:experiments} that QMP works across many tasks and domains without these additional measures.
Skill learning can be seen as behavior sharing in a single task setting such as learning options for exploration or heirarchical RL~\citep{machado2017eigenoption, jinnai2019exploration, jinnai2019discovering, hansen2019fast, riemer2018options}.  We also discuss the difference to single-task exploration in Appendix Section ~\ref{sec:sac_exploration}.

\myparagraph{Using Q-functions as filters}
\citet{yu2021conservative} uses Q-functions to filter which data should be shared between tasks in a multi-task setting. In the imitation learning setting, \citet{nair2018overcoming} and \citet{sasaki2020behavioral} use Q-functions to filter out low-quality demonstrations, so they are not used for training. In both cases, the Q-function is used to evaluate some data that can be used for training. \citet{CUP} reuses pre-trained policies to learn a new task, using a Q-function as a filter to choose which pre-trained policies to regularize to as guidance. In contrast to prior works, our method uses a Q-function to \emph{evaluate} different task policies to gather training data.

\vspace{-3pt}
\section{Problem Formulation}
\label{sec:problem}
\vspace{-3pt}

Multi-task reinforcement learning (MTRL) addresses sequential decision-making tasks, where an agent learns a policy to act optimally in an environment~\citep{kaelbling1996reinforcement, wilson2007multi}. Therefore, in addition to typical multi-task learning techniques, MTRL can also share \textit{behaviors}, i.e., actions, to improve sample efficiency. 
However, current approaches share behaviors uniformly (Section~\ref{sec:sharing_related_work}), which assumes that different tasks' behaviors do not conflict. 
To address this limitation, we seek to develop a selective behavior-sharing method that can be applied in more general task families for sample-efficient MTRL.

\myparagraph{Multi-Task RL with Behavior Sharing}
We aim to simultaneously learn a set $\{\mathbb{T}_1,\dots,\mathbb{T}_N \}$ of $N$ tasks.  Each task $\mathbb{T}_i$ is a Markov Decision Process
(MDP) defined by
state space $\mathcal{S}$, action space $\mathcal{A}$, transition probabilities $\mathcal{T}_i$, reward functions $\mathcal{R}_i$, initial state distribution $\rho_i$, and discount factor $\gamma \in [0,1]$.
While we use $\mathcal{S}$ to denote shared state spaces for simplicity, our formulation extends to tasks with different state spaces as it complements policy architectures that share state encoders. 
The agent learns a set of $N$ policies $\{\pi_1, \dots, \pi_N\}$, where each policy $\pi_i(a | s)$ represents the behavior on task $\mathbb{T}_i$. The objective is to maximize the average expected return over all tasks,
$$
\{\pi^*_1, \dots, \pi^*_N\} =
\max_{\{\pi_1, \dots, \pi_N\}} \frac{1}{N}\sum_{i=1}^N \left[ \mathbb{E}_{ a_t \sim \pi_i
}
\sum_{t=0}^{\infty} \gamma^t \mathcal{R}_i(s_t, a_t)\right].
$$
Unlike prior works, our tasks can exhibit conflicting optimal behaviors:
for any $s$, $\pi^*_i(a|s)$ may differ from $\pi^*_j(a|s)$. Thus, prior methods that bias policy learning objectives like direct policy sharing~\citep{kalashnikov2021mt} or behavior regularization~\citep{teh2017distral} would be suboptimal.

\section{Approach}
\label{sec:approach}

\changed{
To improve the sample efficiency of simultaneous multi-task RL, our key insight is that behaviors acquired in other tasks could be \emph{selectively} shared in the current task. This leads to two challenges: how to incorporate other tasks' behaviors without biasing the current task, and how to identify which behaviors are likely to help the current task. To address these, we propose a mixture of all task policies $\pi_i^\text{mix}$ used only as the data collection policy for Task $\mathbb{T}_i$, and using the task-specific Q-function $Q_i$ to identify which other task's policy to activate in a given state. We prove our algorithm converges while improving the convergence rate over the underlying RL algorithm. Finally, we discuss how letting an activated policy collect data for multiple steps can further improve sample efficiency. Algorithm~\ref{alg:method} and Figure~\ref{fig:method} describe our simple approach for sample-efficient MTRL as well.
}

To improve the sample efficiency of multi-task RL, we propose a framework that \emph{selectively} incorporates behaviors from policies of other tasks without introducing bias into the RL objective for the current task.
We achieve this by using a mixture of all policies as the behavioral policy for the current task, thereby modifying only its \emph{off-policy training data}. However, naively mixing other policies into the current task's behavioral policy does not necessarily improve its sample efficiency. To address this, we derive a specific definition of this mixture, named Q-switch Mixture of Policies (QMP), that selects a policy based on the current task's Q-function (see Figure~\ref{fig:method} and Algorithm~\ref{alg:method}) and prove that QMP guarantees greater than or equal sample efficiency than using the current task's policy alone.

\begin{figure}[t]
    \centering
    \vspace{-10pt}
    \includegraphics[width=\textwidth]{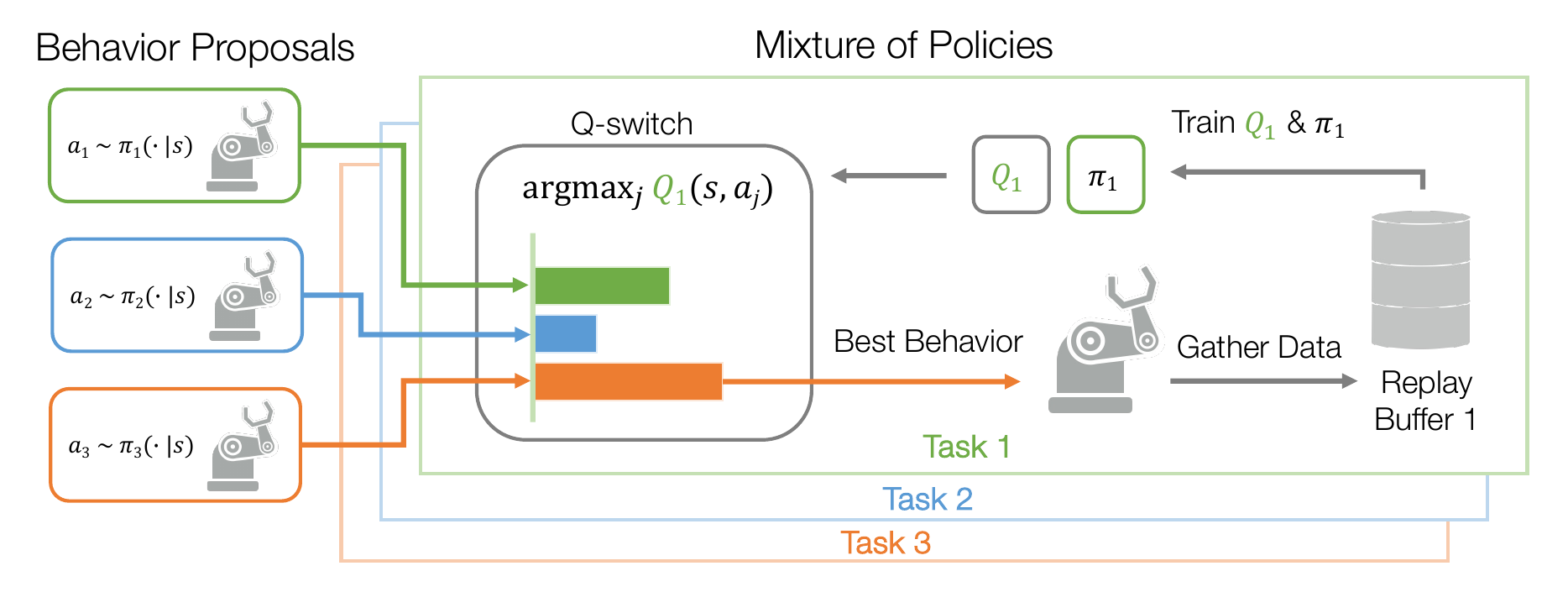}    
    \vspace{-0.5cm}
    \caption[]{
    Our method (QMP) shares behavior between task policies in the data collection phase using a mixture of these policies.  For example, in Task 1, each task policy proposes an action $a_j$. The task-specific Q-switch evaluates each $Q_1(s, a_j)$ and selects the best scored policy to gather reward-labeled data to train $Q_1$ and $\pi_1$.  
    Thus, Task 1 will be boosted by incorporating high-reward shareable behaviors into $\pi_1$ and improving $Q_1$ for subsequent Q-switch evaluations.
    }
    \vspace{-10pt}
    \label{fig:method}
\end{figure}

\changed{
\subsection{Incorporating Behaviors from Other Tasks}
}
\subsection{Multi-Task Behavior Sharing via Off-Policy Data Collection}
\label{sec:incorporating}
MTRL methods like~\citet{teh2017distral} use \emph{regularization to a common average policy} to enforce task policies to share behaviors. However, this introduces bias to each policy's RL objective, leading to suboptimal actions in states where tasks require different actions.
To address this, we propose using a \emph{mixture of policies for off-policy data collection} as the means of behavior-sharing. At each state in any given task, one of the task policies is selected to gather training data as the current behavioral policy. This approach is compatible with any off-policy RL algorithm~\citep{watkins1992q} because the environment rewards help determine the best actions from the collected data.  However, an effective mixture policy must choose the behavioral policies in a selective and principled way.

\begin{definition}[Mixture of Policies]
\label{def:mop}
For each task \( \mathbb{T}_i \), the mixture policy \( \pi_i^{\text{mix}}(a \mid s) \) is defined as
\(
\pi_i^{\text{mix}}(a \mid s) = \pi_{f_i\big(s, \pi_1, \ldots, \pi_N\big)}(a \mid s),
\)
where \( f_i(s, \pi_1, \ldots, \pi_N): \mathcal{S} \times \Pi^N \to \{1, \ldots, N\} \) is a mixture-switch function that selects one of the policies \( \pi_1, \ldots, \pi_N \) based on the current state \( s \).
\end{definition}

Our intuition of policy mixture shares inspiration with hierarchical RL~\citep{elik2021SpecializingVS, JMLR:v17:15-188, 7989761, Goyal2019ReinforcementLW} where a \emph{mixture} of options is learned according to the downstream task(s). However, a key difference in an MTRL mixture is that each policy is optimized for its own specific task and not designed to fit the task where the mixture is employed.

\subsection{Q-switch Mixture of Policies (QMP)}
\label{sec:identifying}

We aim to derive a principled mixture-switch function $f_i$ such that the mixture policy $\pi_i^\text{mix}$ selectively incorporates behaviors from other policies while being guaranteed to be no worse than the current task's policy $\pi_i$.
We recall the generalized policy iteration procedure~\citep{sutton2018reinforcement} underlying single-task SAC~\citep{haarnoja2018soft}: policy evaluation learns $Q$ by minimizing the bellman error on the collected data, and policy improvement follows $Q$ by minimizing the KL divergence between the new policy and the exponential of the current $Q$-function, $Q^{\pi^\text{old}}$:
\begin{equation}
\label{eq:sac_policy_improvement}
\pi^{\text{new}} = \arg \min_{\pi' \in \Pi} \mathrm{D}_{\mathrm{KL}} \left( \pi'(\cdot \mid s_t) \Bigg\| \frac{\exp \left(\frac{1}{\alpha}Q^{\pi^{\text{old}}}(s_t, \cdot)\right)}{Z^{\pi^{\text{old}}}(s_t)} \right)
\end{equation}
In practice, the gradient updates in SAC are gradual and do not instantly achieve this optimization in Eq.~\ref{eq:sac_policy_improvement}, leaving a suboptimality gap to catch up to the Q-function. 
Thus, a mixture policy $\pi_i^{\text{mix}}$ that selects the best policy from a set of all given policy candidates, \emph{including the current policy}, ensures that $\pi_i^{\text{mix}}$ is at least as good as $\pi_{i}$ for the current state $s$, while potentially being a better optimizer of Eq.~\ref{eq:sac_policy_improvement} due to shareable behaviors from the other task policies:
\begin{equation}
\min_{\pi' \in \{ \pi_1, ..., \pi_N \}} \mathrm{D}_{\mathrm{KL}} \left( \pi'(\cdot \mid s) \Bigg\| \frac{\exp (\frac{1}{\alpha}Q^{\pi_{i}}(s, \cdot))}{Z^{\pi_{i}}(s)} \right)
\leq
\mathrm{D}_{\mathrm{KL}} \left( \pi_{i}(\cdot \mid s_t) \Bigg\| \frac{\exp (\frac{1}{\alpha}Q^{\pi_{i}}(s, \cdot))}{Z^{\pi_{i}}(s)} \right)
\end{equation}

Simplifying the expression on the left results in the following definition (derivation in Appendix~\ref{sec:derivation}).

\begin{definition}[Q-switch Mixture of Policies: QMP]
\label{def:qmp}
For a task \( \mathbb{T}_i \) and available candidate policies $\{ \pi_1, ..., \pi_N \}$, the QMP \( \pi_i^{\text{mix}}(a \mid s) \) selects a policy according to:
\begin{equation}
    \label{eq:pi_mix_final}
    \pi_i^{\text{mix}} = \arg\max_{\pi' \in \{ \pi_1, \dots, \pi_N \}} \:\: \mathbb{E}_{a \sim \pi'(\cdot \mid s)} \left[ Q^{\pi_i}(s, a) \right] + \alpha \mathcal{H}\left[ \pi'(\cdot \mid s) \right]
\end{equation}
\end{definition}
\begin{wrapfigure}[19]{R}{0.54\textwidth} %
\begin{minipage}{0.54\textwidth}
\vspace{-10pt}
\begin{algorithm}[H]
   \caption{Q-switch Mixture of Policies (QMP)}
\begin{algorithmic}
\STATE \textbf{Input:} Task Set $\{\mathbb{T}_1,\dots,\mathbb{T}_N \}$
\STATE Initialize $\{\pi_i\}_{i=1}^{N}$, $\{Q_i\}_{i=1}^{N}$, Data buffers $\{\mathcal{D}_i\}_{i=1}^{N}$
\FOR{each epoch}
    \FOR{$i=1$ to $N$}
        \WHILE{Task $\mathbb{T}_i$ episode not terminated}
            \STATE Observe state $\mathbf{s}$
            \STATE \textcolor{blue}{Compute $\pi_i^\text{mix}$ as in Eq.~\ref{eq:pi_mix_final}.}
            \STATE \textcolor{blue}{Take action proposal from $a \sim \pi_i^\text{mix}$}
            \STATE $\mathcal{D}_i$ $\leftarrow$ $\mathcal{D}_i$ $\cup$ $(s, a, r_i, s')$
        \ENDWHILE
    \ENDFOR
    \FOR{$i=1$ to $N$}
        \STATE Update $\pi_i$, $Q_i$ using $\mathcal{D}_i$ with SAC 
    \ENDFOR
\ENDFOR
\STATE \textbf{Output:} Trained policies $\{\pi_i\}_{i=1}^{N}$
\end{algorithmic}
\label{alg:method}
\end{algorithm}
\end{minipage}
\end{wrapfigure}
Algorithm~\ref{alg:method} shows that QMP can be plugged into any MTRL framework, making it complementary with various MTRL frameworks like parameter-sharing and data relabeling (see Section~\ref{sec:parameter_complementary}). 
In practice, we estimate the expectation in Eq.~\ref{eq:pi_mix_final} by evaluating the Q-value for the mean action of each task policy's distribution $\pi'(\cdot|s)$ ignoring the entropy term. We do not find any empirical difference when using a sampled estimate of the expectation (see Appendix~\ref{sec:Q_expectation}) or including the entropy term, as the Q-value is the primary distinguishing factor between policies. In terms of compute, sampling from QMP's $\pi_i^\text{mix}(a|s)$ does require more policy and Q-function evaluations. However, evaluations are parallelized and impact runtime negligibly, as shown in Appendix~\ref{sec:runtime}.

While $\pi_i^\text{mix}$ can mistakenly choose a poor policy due to estimation error in $Q^{\pi_i}$, 
this is identical to Q-learning or SAC, where the Q-function would be inaccurate at less-seen states. In both Q-learning and QMP, this is corrected with online interactions where the Q-function is trained to be more accurate in a subsequent iteration.  Furthermore, $\pi_i^\text{mix}$ actually better maximizes $Q^{\pi_i}$ than $\pi_i$, which is the objective under generalized policy iteration.  Note that QMP does \textbf{not} exacerbate the problem of overestimation because the soft policy evaluation step stays the same, i.e., it uses $\pi_i$ and not $\pi_i^\text{mix}$.

\vspace{-5pt}
\section{Why QMP Works: Theory and Didactic Example}
\vspace{-5pt}

\subsection{\method: Convergence and Improvement Guarantees}
\label{sec:qmp}
QMP performs simultaneous MTRL by collecting data using a Q-switch guided mixture of policies $\pi_i^\text{mix}$.
In~\Cref{sec:convergence}, we prove that QMP with underlying RL algorithm Soft-Actor Critic (SAC)~\citep{haarnoja2018soft} shares the same convergence guarantees in a tabular setting. Particularly, we show that under QMP, policy evaluation converges because QMP only modifies data collection of off-policy RL, policy improvement guarantees are preserved (Theorem~\ref{thm:main_improvement}), and policy iteration converges to an optimal policy at least as sample-efficiently (Theorem~\ref{thm:iteration}).

\Skip{
We propose QMP (Figure~\ref{fig:method}) to perform simultaneous MTRL by collecting data using a mixture of policies $\pi_i^\text{mix}$ with the Q-switch selecting the policy to activate. In~\Cref{sec:convergence}, we prove that QMP implemented on top of Soft-Actor Critic (SAC)~\citep{haarnoja2018soft}, shares the same convergence guarantees in a tabular setting. Particularly, we show that under QMP, policy evaluation converges because QMP only modifies data collection (Lemma~\ref{lem:evaluation}), policy improvement guarantees are preserved and improved (Theorem~\ref{thm:main_improvement}), and policy iteration converges to an optimal policy (Theorem~\ref{thm:iteration}).
}

The key reason for \emph{better policy improvement} of QMP over the current task policy $\pi_i$ is the $\arg\max$ operation in Eq.~\ref{eq:pi_mix_final}, which ensures that the selected policy $\pi_i^\text{mix} \in \{\pi_j\}_{j=1}^{N}$ optimizes the SAC objective at least as well as $\pi_i$ itself.  We formalize this in Theorem~\ref{thm:main_improvement} with proof in Appendix~\ref{thm:improvement}.  Due to the suboptimality gap in Eq.~\ref{eq:sac_policy_improvement} in SAC, QMP can actually achieve better policy improvement when there are shareable behaviors between policies.

\begin{theorem}[Mixture Soft Policy Improvement]
\label{thm:main_improvement}
Consider $\pi_i^\text{old}$ and its associated Q-function $Q_i$. Apply SAC's policy improvement $\pi_i^\text{old} \to \pi_i$ and then $\pi_i \to \pi_i^\text{mix}$ from Eq.~\ref{eq:pi_mix_final}.
Then, $Q^{\pi_i^\text{mix}}(\mathbf{s}_t, \mathbf{a}_t) \geq Q^{\pi_i}(\mathbf{s}_t, \mathbf{a}_t) \geq Q^{\pi_i^\text{old}}(\mathbf{s}_t, \mathbf{a}_t)$ for all tasks $\mathbb{T}_i$ and for all $(s_t, a_t) \in \mathcal{S} \times \mathcal{A}$ with $\left| \mathcal{A}\right| < \infty$.
\end{theorem}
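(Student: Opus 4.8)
The plan is to prove the chain of inequalities in two parts. The right inequality $Q^{\pi_i}(\mathbf{s}_t,\mathbf{a}_t) \geq Q^{\pi_i^\text{old}}(\mathbf{s}_t,\mathbf{a}_t)$ is precisely the conclusion of the standard SAC soft policy improvement lemma of \citet{haarnoja2018soft}, since $\pi_i$ is obtained from $\pi_i^\text{old}$ by the exact KL-minimizing update of Eq.~\ref{eq:sac_policy_improvement}; I would simply invoke that result. The substance of the theorem is therefore the left inequality $Q^{\pi_i^\text{mix}}(\mathbf{s}_t,\mathbf{a}_t) \geq Q^{\pi_i}(\mathbf{s}_t,\mathbf{a}_t)$, and my strategy is to replay the structure of the SAC improvement proof, but with the crucial modification that the policy update is an $\arg\max$ over the finite candidate set $\{\pi_1,\dots,\pi_N\}$ rather than over the full policy class $\Pi$.

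The first key step is to establish a pointwise (per-state) dominance inequality. Writing the soft value function as $V^{\pi_i}(s) = \mathbb{E}_{a\sim\pi_i(\cdot\mid s)}[Q^{\pi_i}(s,a)] + \alpha\mathcal{H}[\pi_i(\cdot\mid s)]$, I would note that the objective maximized in Eq.~\ref{eq:pi_mix_final} equals, up to the state-dependent constant $\alpha\log Z^{\pi_i}(s)$ that does not affect the $\arg\max$, the negative KL divergence appearing in Eq.~\ref{eq:sac_policy_improvement} (this is exactly the Appendix derivation that produced Definition~\ref{def:qmp}). Because $\pi_i$ is itself one of the candidates in $\{\pi_1,\dots,\pi_N\}$, the maximizer $\pi_i^\text{mix}$ must attain an objective value at least as large as that of $\pi_i$, giving, for every $s \in \mathcal{S}$,
\[
\mathbb{E}_{a\sim\pi_i^\text{mix}(\cdot\mid s)}[Q^{\pi_i}(s,a)] + \alpha\mathcal{H}[\pi_i^\text{mix}(\cdot\mid s)] \;\geq\; V^{\pi_i}(s).
\]
This is the exact analogue of the inequality that drives the SAC proof; the only difference is that optimality here is relative to a restricted set, and the inclusion of $\pi_i$ in that set is what guarantees the bound. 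Finiteness of $\mathcal{A}$ ensures the entropies and expectations are well-defined and bounded.

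The second key step is to iterate the soft Bellman backup. Starting from the soft Bellman equation $Q^{\pi_i}(s_t,a_t) = \mathcal{R}_i(s_t,a_t) + \gamma\,\mathbb{E}_{s_{t+1}}[V^{\pi_i}(s_{t+1})]$, I would substitute the pointwise inequality above to lower-bound $V^{\pi_i}(s_{t+1})$ by $\mathbb{E}_{a_{t+1}\sim\pi_i^\text{mix}}[Q^{\pi_i}(s_{t+1},a_{t+1})] + \alpha\mathcal{H}[\pi_i^\text{mix}(\cdot\mid s_{t+1})]$, then unroll the recursion. Repeatedly applying this inequality together with the soft Bellman backup, which is a $\gamma$-contraction on the bounded tabular value space and hence converges, yields $Q^{\pi_i}(s_t,a_t) \leq Q^{\pi_i^\text{mix}}(s_t,a_t)$ for all $(s_t,a_t)$. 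Chaining this with the standard lemma gives the full three-term chain and completes the proof.

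The main obstacle is conceptual rather than computational: unlike SAC's improvement step, which optimizes over the full policy class, $\pi_i^\text{mix}$ is a state-dependent switch among finitely many \emph{fixed} policies, so I must be careful that the improvement argument only requires the per-state inequality above to hold for every $s$, and not that $\pi_i^\text{mix}$ coincide globally with any single $\pi_j$. Since the Q-switch selects the locally best candidate at each state and the candidate set always contains $\pi_i$, this per-state guarantee holds uniformly, and the Bellman recursion propagates it into the global inequality on $Q$. Verifying that this restricted-set $\arg\max$ substitutes cleanly for the full-class $\arg\min$ of SAC, and that the $\log Z^{\pi_i}(s)$ and entropy bookkeeping line up correctly, is the step demanding the most care.
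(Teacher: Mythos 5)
Your proof is correct, but it takes a genuinely different route from the paper's, and the difference is instructive. The paper anchors both expansions at $Q^{\pi_i^\text{old}}$: it defines a per-state slack $\delta(\mathbf{s}_t)\geq 0$ from SAC's Lemma~2 (comparing $\pi_i$ to $\pi_i^\text{old}$) and a second slack $\omega(\mathbf{s}_t)\geq 0$ (comparing $\pi_i^\text{mix}$ to $\pi_i$), then unrolls the soft Bellman equation for $Q^{\pi_i^\text{old}}$ twice --- once along $\pi_i$'s trajectories to get $Q^{\pi_i^\text{old}}=Q^{\pi_i}-\Delta_1$ and once along $\pi_i^\text{mix}$'s trajectories to get $Q^{\pi_i^\text{old}}=Q^{\pi_i^\text{mix}}-\Delta_2-\Omega$. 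Because $\Delta_1$ and $\Delta_2$ accumulate $\delta$ over \emph{different} state distributions, the paper must explicitly \emph{assume} that $\Delta_2-\Delta_1+\Omega\geq 0$ to conclude. You instead chain two independent applications of the soft-policy-improvement argument: the right inequality is SAC's Lemma~2 verbatim, and for the left inequality you establish the pointwise dominance $\mathbb{E}_{a\sim\pi_i^\text{mix}}[Q^{\pi_i}(s,a)]+\alpha\mathcal{H}[\pi_i^\text{mix}(\cdot\mid s)]\geq V^{\pi_i}(s)$ (valid because $\pi_i$ is in the candidate set) and unroll $Q^{\pi_i}$'s own Bellman recursion along $\pi_i^\text{mix}$, converging to $Q^{\pi_i^\text{mix}}$ by contraction. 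This decomposition never compares accumulated corrections across mismatched trajectory distributions, so it does not need the paper's state-coverage assumption; it is the cleaner and more self-contained argument. The one caveat is that your version relies on the $\arg\max$ in Eq.~\ref{eq:pi_mix_final} being taken with respect to the true $Q^{\pi_i}$ (as the definition literally states), whereas the paper's proof evaluates the gap $\omega$ with respect to $Q^{\pi_i^\text{old}}$ --- the Q-function actually available at that point of the iteration; if one insists on the latter reading, the cross-distribution issue the paper wrestles with reappears, which is presumably why the paper structured its proof the way it did.
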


\begin{wrapfigure}{r}{0.43\textwidth} %
    \centering
    \vspace{-10pt} %
    \vspace{-5pt} %
    \includegraphics[width=0.43\textwidth]{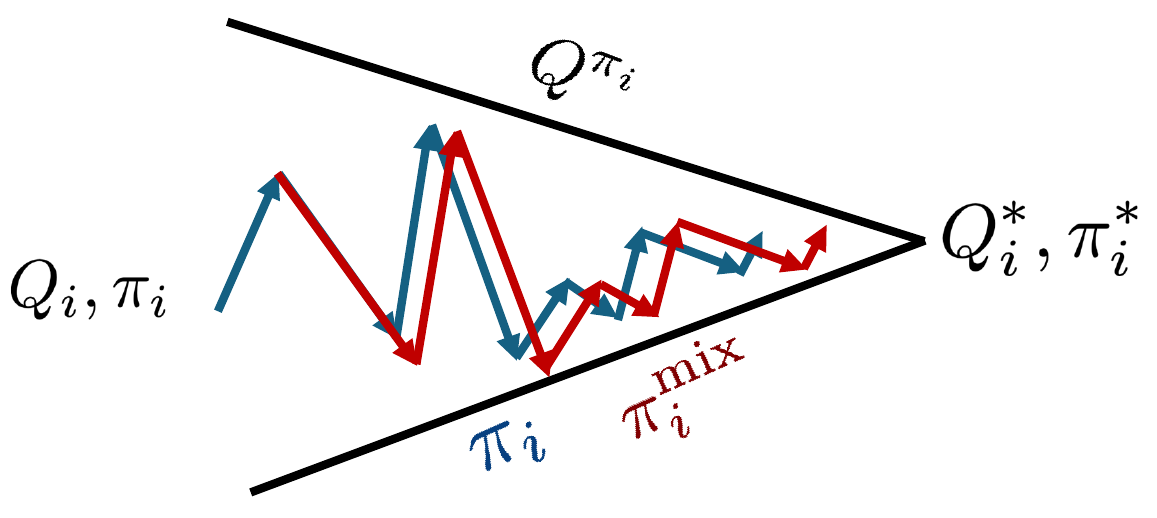}
    \caption{QMP generalized policy iteration}
    \label{fig:gpi_figure}
    \vspace{-10pt} %
    \vspace{-10pt} %
\end{wrapfigure}

While QMP in Def.~\ref{def:qmp} applies to any set of candidate policies $\{ \pi_1, ..., \pi_N \}$, one expects $\pi_i^\text{mix}$ to improve over $\pi_i$ when some $\pi_j \neq \pi_i$ proposes an action candidate better than $\pi_i$  for Task $\mathbb{T}_i$. This is more likely in MTRL policies that \emph{share structure between tasks} than an arbitrary set of policies. For instance, if $\mathbb{T}_i$ and $\mathbb{T}_j$ share a subtask that appears early in the episodes for $\mathbb{T}_j$, then $\pi_j$ would have already learned it before $\pi_i$ and be a better policy for certain states of $\mathbb{T}_i$, according to $Q_i$.

QMP making bigger policy improvement steps results in each iteration of generalized policy iteration making more progress towards optimality. This reduces the number of iterations required to converge, improving the sample efficiency of the algorithm as illustrated in Fig.~\ref{fig:gpi_figure} and proved in Theorem~\ref{thm:iteration}.

\vspace{-10pt}
\vspace{-10pt}
\subsection{Illustrative Example: 2D Point Reaching}
\label{sec:didactic_example}

\begin{figure}[t]
    \centering
    \vspace{-10pt}
    \begin{subfigure}[t]{0.325\textwidth}
        \includegraphics[width=\textwidth]{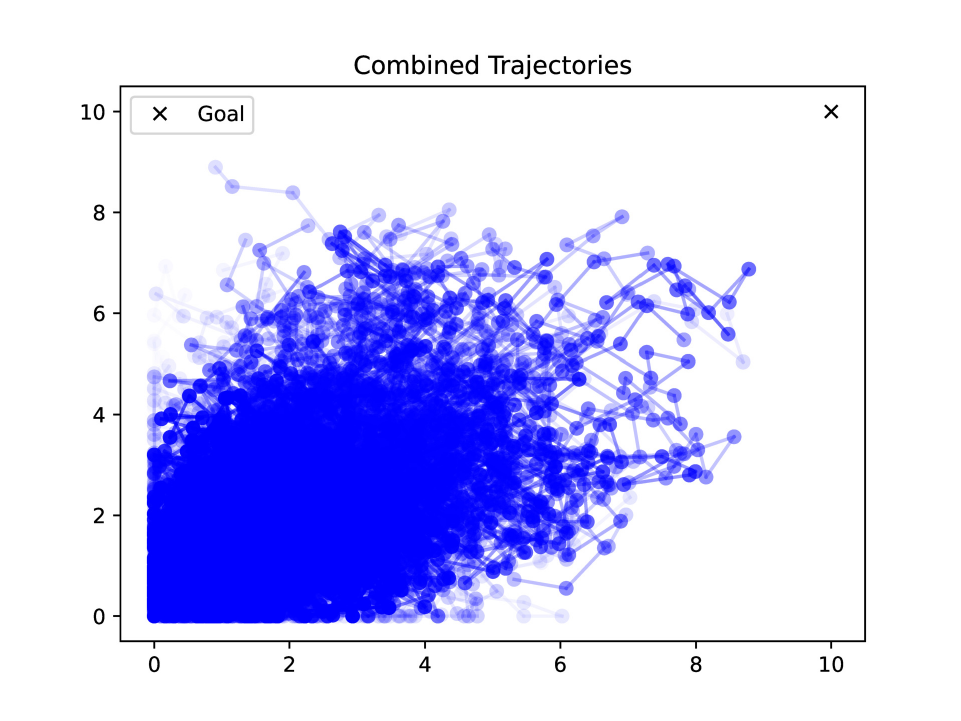}
        \vspace{-20pt}
        \caption{SAC: \color{blue}{$\pi$}}
        \label{fig:2d_point_sac}
    \end{subfigure}
    \begin{subfigure}[t]{0.325\textwidth}
        \includegraphics[width=\textwidth]{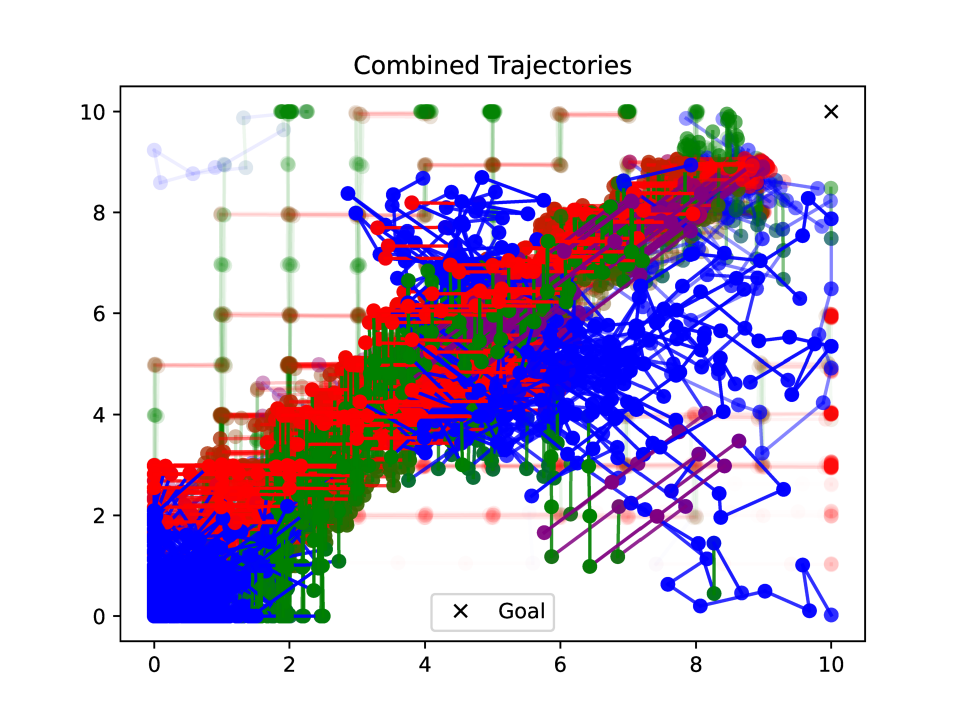}
        \vspace{-20pt}
        \caption{QMP: $\color{blue}{\pi} \; \color{ForestGreen}{\uparrow} \; \color{red}{\rightarrow} \; \color{violet}{\swarrow}$}
        \label{fig:2d_point_qmp_1}

    \end{subfigure}
    \begin{subfigure}[t]{0.325\textwidth}
        \centering
        \includegraphics[width=\textwidth]{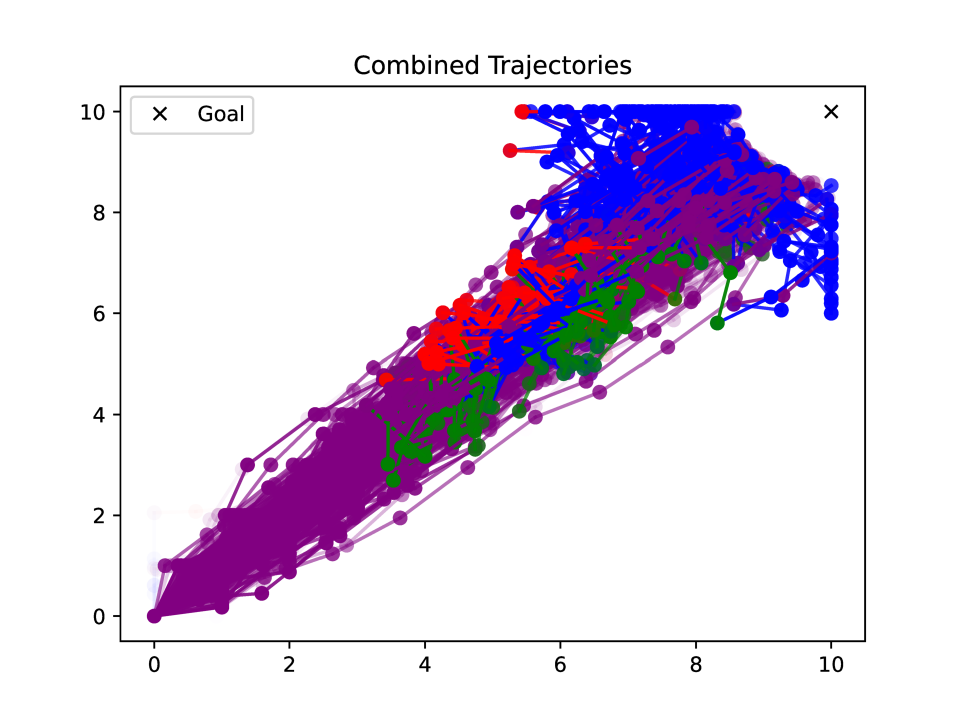}
        \vspace{-20pt}
                \caption{QMP: \textbf{$\color{blue}{\pi} \; \color{ForestGreen}{\uparrow} \; \color{red}{\rightarrow} \; \color{violet}{\nearrow}$}}
            \label{fig:2d_point_qmp_2}
    \end{subfigure}

    \caption[]{
    \textbf{2D Point Reaching.} We visualize the training trajectories of $\pi$ with different sets of task policies (fixed but stochastic) and color each step with the policy that proposed it.
    \textbf{(a)} The single-task SAC policy cannot reach the goal.
    \textbf{(b)} With 3 diverse policies ($\color{ForestGreen}{\uparrow} \; \color{red}{\rightarrow} \; \color{violet}{\swarrow}$), QMP often selects other policies, showing the suboptimality gap from $Q$ in Eq.~\ref{eq:sac_policy_improvement}.
    \textbf{(c)} When a highly relevant $\color{violet}{\nearrow}$ policy is added, QMP often selects $\color{violet}{\nearrow}$ as it is likely to best optimize the learned Q-function.
    }
    \label{fig:illustrative_example}
\end{figure}

\begin{wrapfigure}{r}{0.38\linewidth}
\centering
\vspace{-40pt}
        \includegraphics[width=\linewidth]{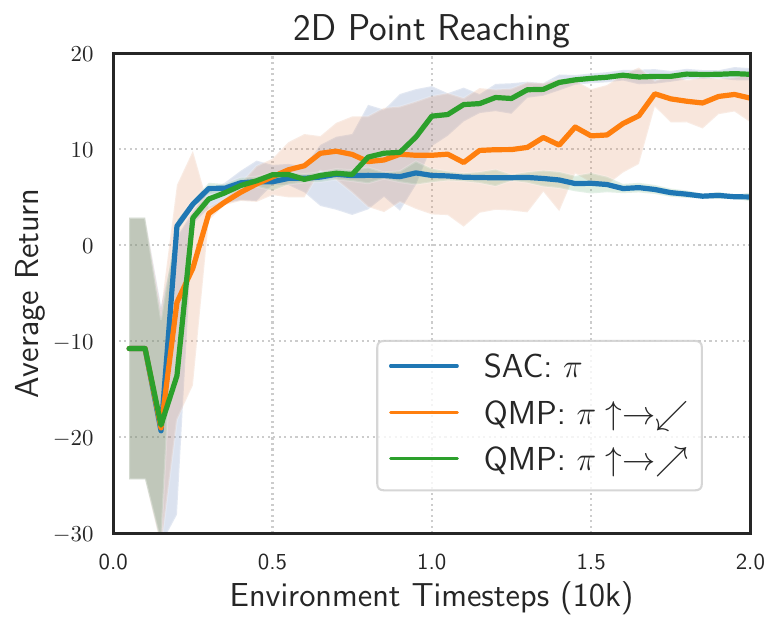}
        \vspace{-10pt}
        \caption{QMP improves performance using other policies, increasingly so when they are task-relevant (5 seeds).}
        \label{fig:2d_point_result}
    \vspace{-10pt}
    \vspace{-5pt}
\end{wrapfigure}

We demonstrate when QMP can \emph{utilize alternate policy candidates} $\{ \pi_1, \dots, \pi_N \}$ to more effectively learn a policy by bridging a \emph{policy improvement suboptimality gap} as $\pi$ tries to follow $Q$ in Eq.~\ref{eq:sac_policy_improvement}. Consider a 2D point-reaching task where the agent must navigate from the bottom-left corner $(0,0)$ to the goal in the top-right corner $(10, 10)$. The point agent receives dense rewards based on its proximity to the goal and takes incremental 2D actions $(\Delta x, \Delta y) \in [-1, 1]^2$.

Figure~\ref{fig:2d_point_result} shows that the SAC policy $\pi$ converges to a suboptimal solution. Fig.~\ref{fig:2d_point_sac} confirms that the data collected by SAC policy never reaches the goal. This shows that if the suboptimality gap in $\pi$ is not successfully bridged, it can make the entire algorithm converge suboptimally.

To illustrate the effect of QMP, we add 3 fixed gaussian policies centered on ($\color{ForestGreen}{\uparrow} \; \color{red}{\rightarrow} \; \color{violet}{\swarrow}$) or ($\color{ForestGreen}{\uparrow} \; \color{red}{\rightarrow} \; \color{violet}{\nearrow}$), and only let $\pi$ be trainable. Fig.~\ref{fig:2d_point_qmp_1},~\ref{fig:2d_point_qmp_2} show that $\pi_i^\text{mix}$ employs alternate policies at many states in data collection as they optimize Eq.~\ref{eq:pi_mix_final} better than $\pi$ itself. This \emph{selectivity} enables $\pi_i^\text{mix}$ to generate more effective goal-reaching trajectories by bridging the suboptimality gap, resulting in better performance in Fig.~\ref{fig:2d_point_result}. A policy like $\color{violet}{\nearrow}$ that is more relevant to the underlying task leads to a larger gain.

The same principle extends to the simultaneous multi-task RL setting. In MTRL, each task's policy continuously improves and can serve as a valuable candidate in the mixture for other tasks. QMP enables tasks to selectively share their behaviors, allowing each task to benefit from the progress of others. This mutual assistance accelerates learning across all tasks, as the mixture policy \( \pi_i^{\text{mix}} \) for each task \( \mathbb{T}_i \) selects the most promising action proposals from all available policies according to the task-specific Q-function, guaranteed to be at least as good as $\pi_i$ itself. Consequently, MTRL combined with QMP leverages the collective knowledge of all tasks to bridge suboptimality gaps more efficiently, leading to improved sample efficiency and overall performance.

\vspace{-5pt}
\section{Experiments}
\label{sec:experiments}

\vspace{-5pt}
\subsection{Environments}
\label{sec:environments}

We evaluate our method in 7 multi-task designs in manipulation, navigation, and locomotion environments, shown in Figure~\ref{fig:environment}.  These tasks vary in the degree of shared and conflicting behaviors between tasks and the number of tasks in the set.
Further details in Appendix Section~\ref{sec:appendix_environment}.

\textbf{Multistage Reacher:} A 6 DoF Jaco arm learns 5 tasks with ordered subgoals. The first 4 tasks share some subgoals, while the 5th \emph{conflicting} task requires the agent to stay at its initial position.  

\textbf{Maze Navigation:} Building on point mass maze navigation~\citep{fu2020d4rl}, we define 10 tasks with various start and goal locations exhibiting coinciding and conflicting segments in the optimal paths.

\textbf{Meta-World Manipulation:}
We use three task sets based on the Meta-World environment~\citep{yu2019meta}.  \textbf{Meta-World MT10} and \textbf{Meta-World MT50} are sets of 10 and 50 table-top manipulation tasks involving different objects and behaviors. \textbf{Meta-World CDS} is a 4-task setup proposed in ~\citet{yu2021conservative} which places the door and drawer objects next to each other on the same tabletop so that all 4 tasks (door open \& close, drawer open \& close) are solvable in a simultaneous multi-task setup.

\textbf{Walker2D:}
Walker2D is a 9 DoF bipedal walker agent with the multi-task set containing 4 locomotion tasks proposed in ~\citet{lee2019composing}: walking forward, walking backward, balancing, and crawling. These tasks require different gaits without an obviously identifiable shared behavior in the optimal policies but can still benefit from intermediate behaviors like balancing.

\textbf{Kitchen:}
We use the challenging manipulation environment proposed by~\citet{lynch2019play} where a 9 DoF Franka robot performs tasks in a kitchen.  We create a task set out of 10 manipulation tasks: turning on or off different burners and light switches, and opening or closing different cabinets. 
\begin{figure*}[t]
    \centering
    \begin{subfigure}[b]{0.19\textwidth}
    \centering
    \includegraphics[width=\textwidth]{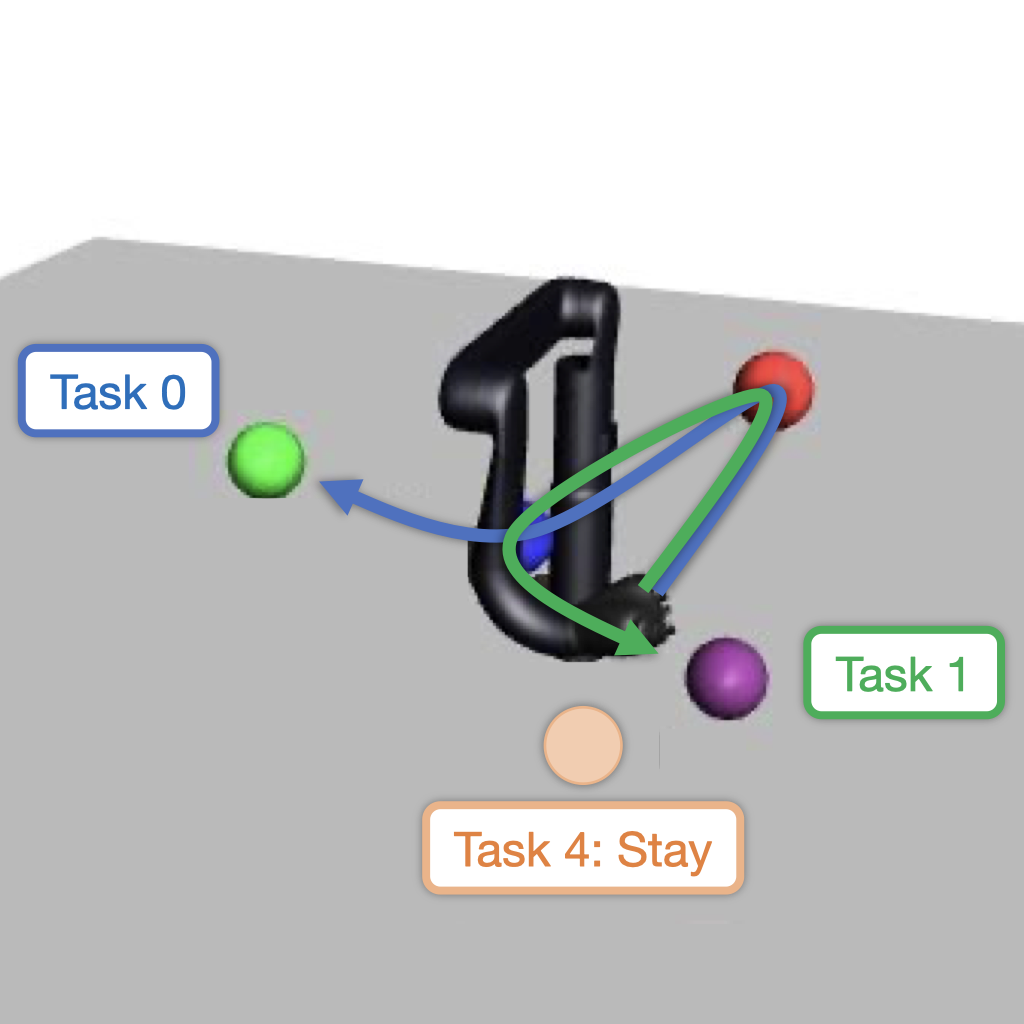}
    \caption{Jaco Reacher}
    \label{fig:env_reacher}
    \end{subfigure}
    \hfill
    \begin{subfigure}[b]{0.19\textwidth}
    \centering
    \includegraphics[width=\textwidth]{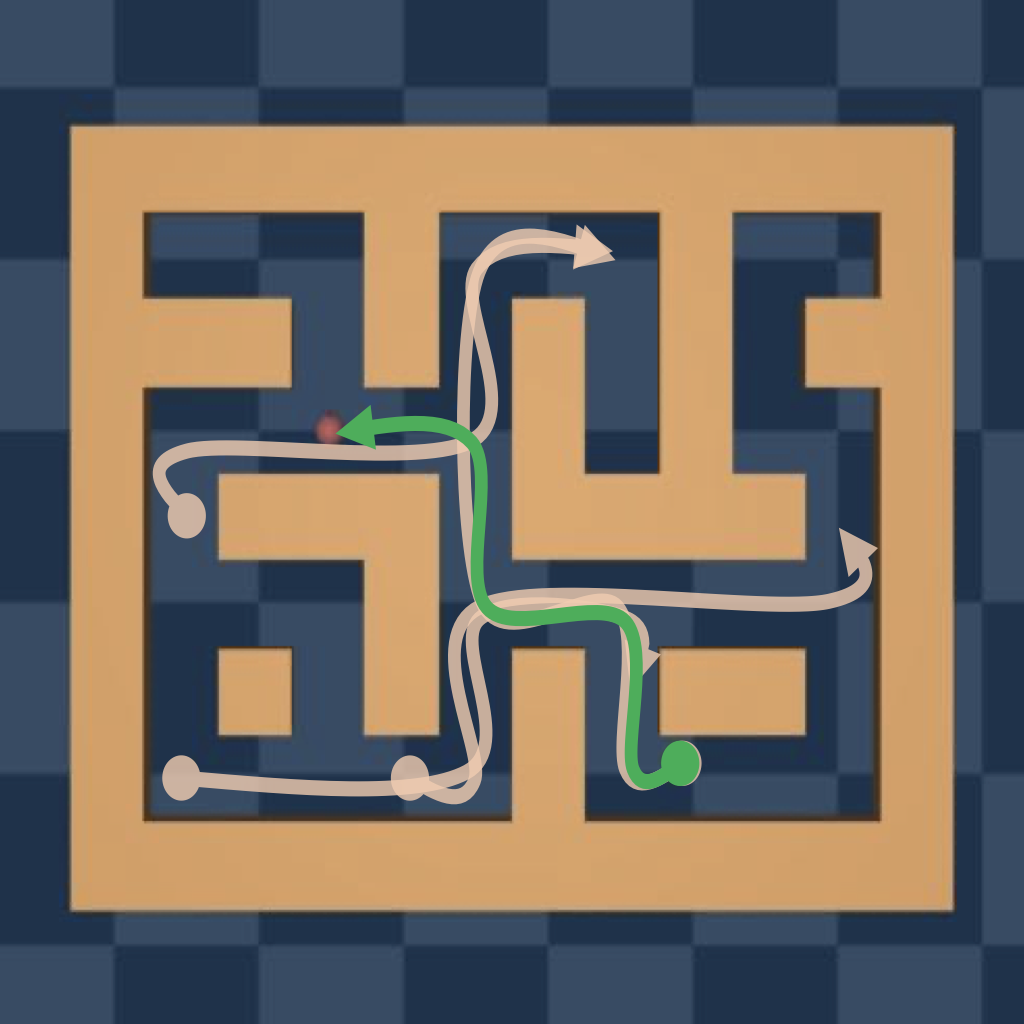}
    \caption{Maze Navigation}
    \label{fig:env_maze}
    \end{subfigure}
    \hfill
    \begin{subfigure}[b]{0.19\textwidth}
    \centering
    \includegraphics[width=\textwidth]{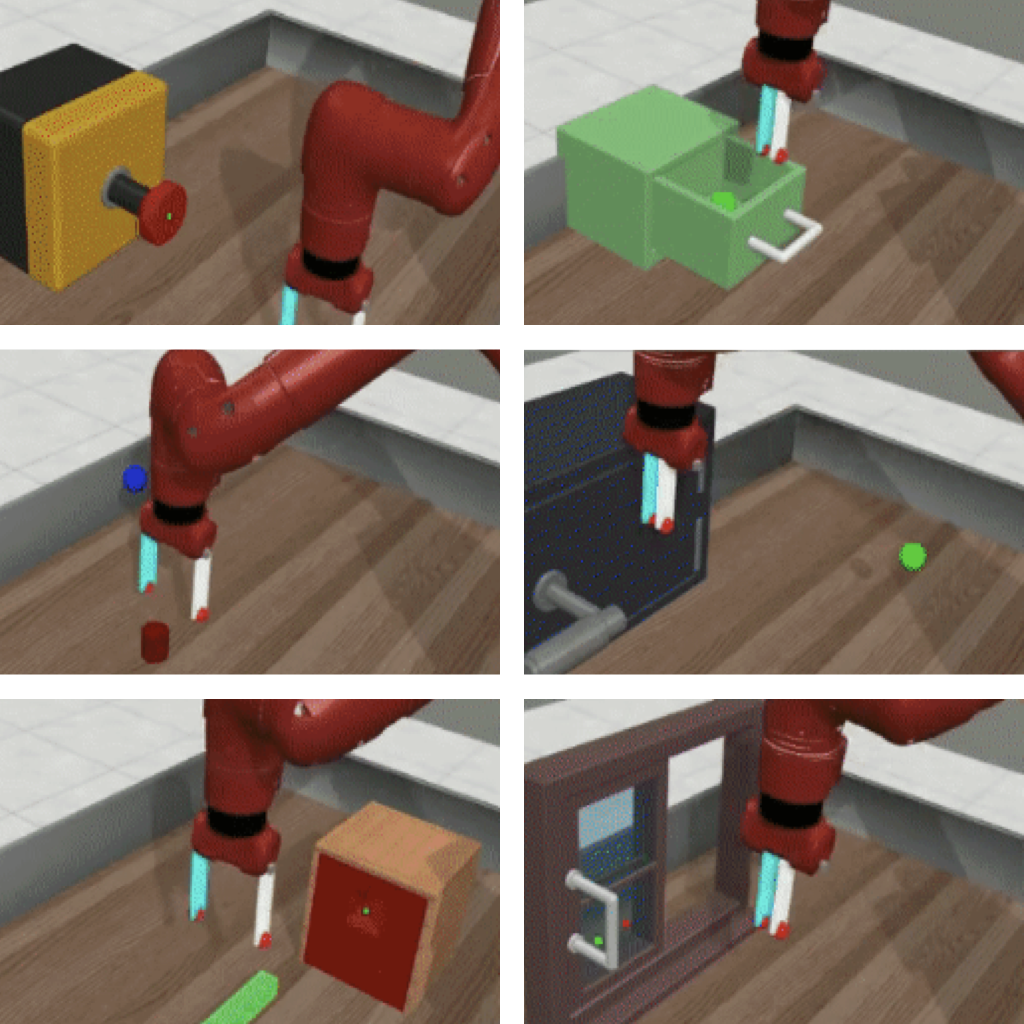}
    \caption{Meta-World}
    \label{fig:env_metaworld}
    \end{subfigure}
    \hfill
    \begin{subfigure}[b]{0.19\textwidth}
    \label{fig:env_walker}
    \centering
    \includegraphics[width=\textwidth]{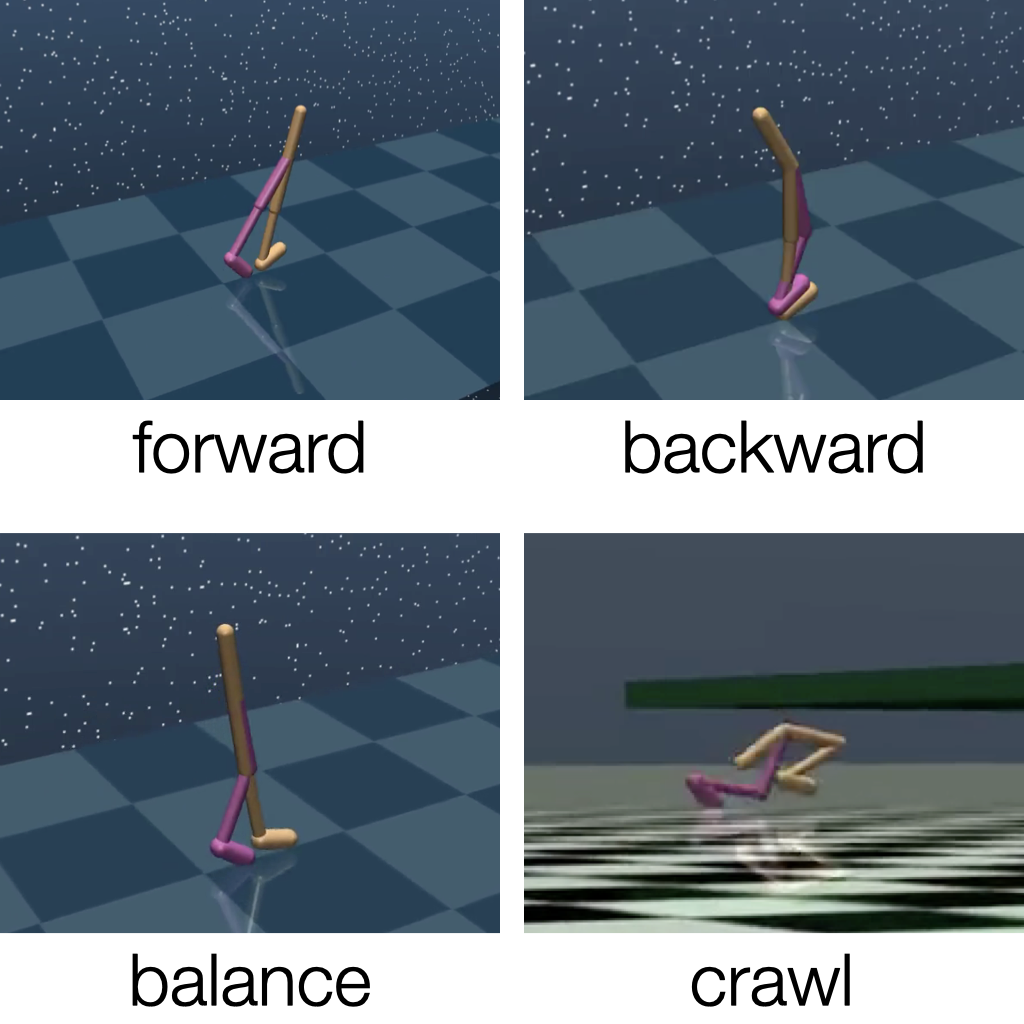}
    \caption{Walker2D}
    \end{subfigure}
    \hfill
    \begin{subfigure}[b]{0.19\textwidth}
    \label{fig:env_kitchen}
    \centering
    \includegraphics[width=\textwidth]{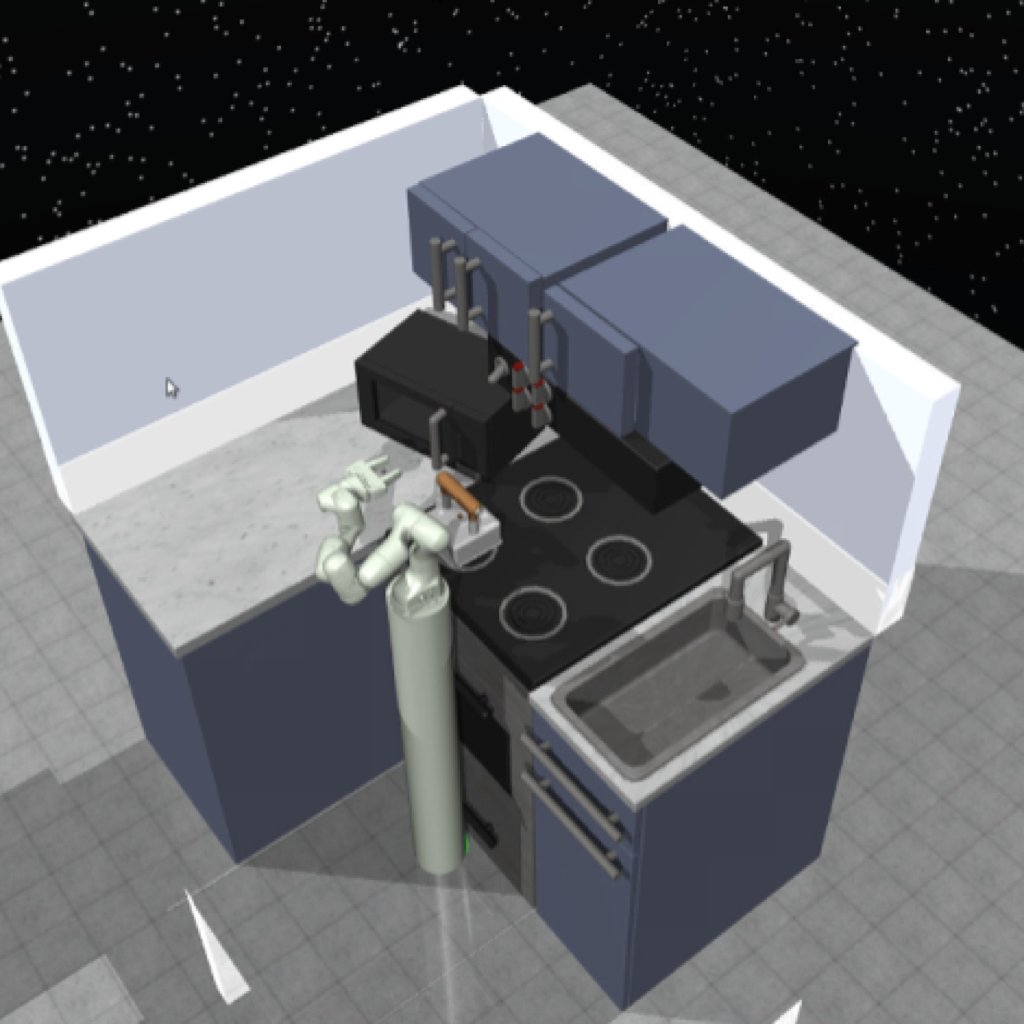}
    \caption{Franka Kitchen}
    \end{subfigure}
    \caption[]{\textbf{Environments \& Tasks}:
    (a) Multistage Jaco Reacher. The agent must reach different subgoals or stay still (Task 4).
    (b) Maze Navigation. The agent (green circle) must navigate to the goal (red circle).  4 other tasks are shown in orange.
    (c) Meta-World:  10 table-top manipulation tasks.
    (e) Franka Kitchen: 10 tasks, interacting with one appliance or cabinet.
    }
    \label{fig:environment}
    \vspace{-10pt}
\end{figure*}

\vspace{-10pt}
\subsection{Baselines}
\label{sec:baselines}
\vspace{-5pt}
We first select popular and representative MTRL methods that share other forms of information to evaluate how behavior-sharing with QMP improves their performance:
\begin{itemize} [leftmargin=*, parsep=5pt, itemsep=0pt, topsep=0pt]
    \item \textbf{No-Sharing} consists of $N$ (refers to number of tasks) independent RL architectures where each agent is assigned one task and trained to solve it without any information sharing with other agents.  
    \item \textbf{Data-Sharing (UDS)} proposed in \citet{yu2022leverage} shares data between tasks, relabelling with minimum task reward.  We modified this offline RL algorithm to online. 
    \item \textbf{Parameter-Sharing} a multi-head SAC policy sharing parameters but not behaviors over tasks. 

\end{itemize}

We validate QMP's approach to share behaviors via \emph{off-policy data collection} with other approaches: 
\begin{itemize}[leftmargin=*, parsep=5pt, itemsep=0pt, topsep=0pt]
    \item \textbf{No-Shared-Behaviors} consists of $N$ RL agents where each agent is assigned one task and trained to solve it without any behavior sharing with other agents: no bias and no sharing.
    \item \textbf{Fully-Shared-Behaviors} is a single SAC agent that learns one shared policy for all tasks, outputting the same action for a given state regardless of task (full parameter sharing): fully biased sharing. 
    \item \textbf{Divide-and-Conquer RL (DnC)} (\citet{ghosh2018divideandconquer}) uses $N$ policies that share behaviors through policy distillation and regularization to the mean (adapted for MTRL): biased objective for sharing.
    \item \textbf{DnC (Regularization Only)} is a no policy distillation variant of DnC we propose as a baseline.
    \item \textbf{\method\ (Ours)} learns $N$ policies that share behaviors in off-policy data collection: unbiased sharing.

\end{itemize}

Our code is available here \href{https://github.com/clvrai/qmp}{https://github.com/clvrai/qmp}. We used SAC~\cite{haarnoja2018soft} for all environments and methods. All the non-parameter sharing baselines use the same SAC hyperparameters. Please refer to Appendix~\ref{sec:appendix_implementation} for complete details.

\begin{figure*}[t]
    \vspace{-10pt}
    \centering
    \includegraphics[width=0.325\linewidth]{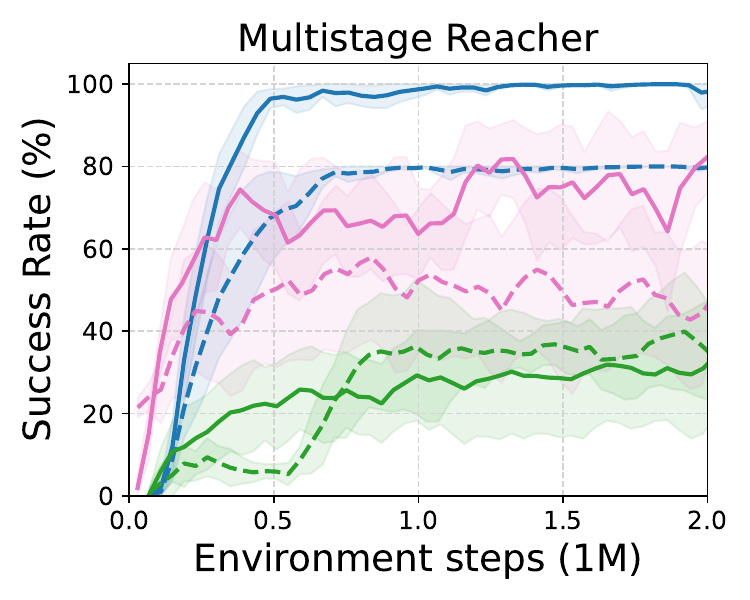}
    \includegraphics[width=0.325\linewidth]{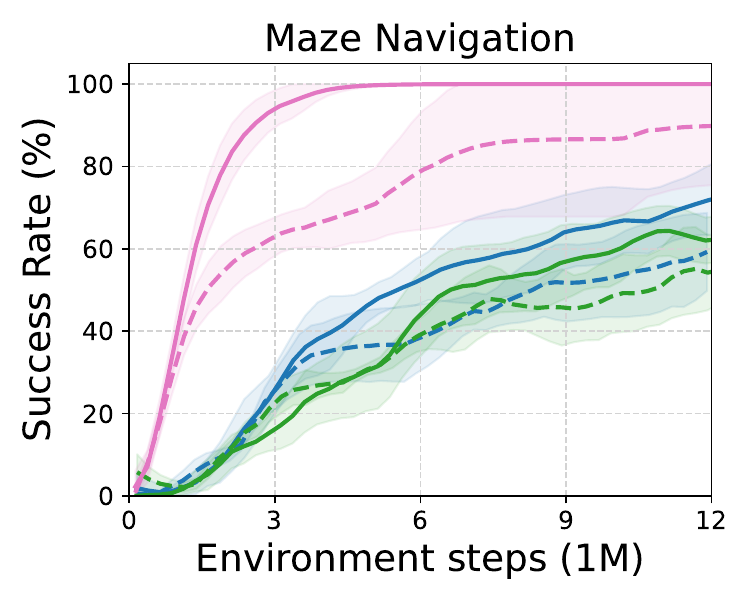}
        \includegraphics[width=0.325\linewidth]{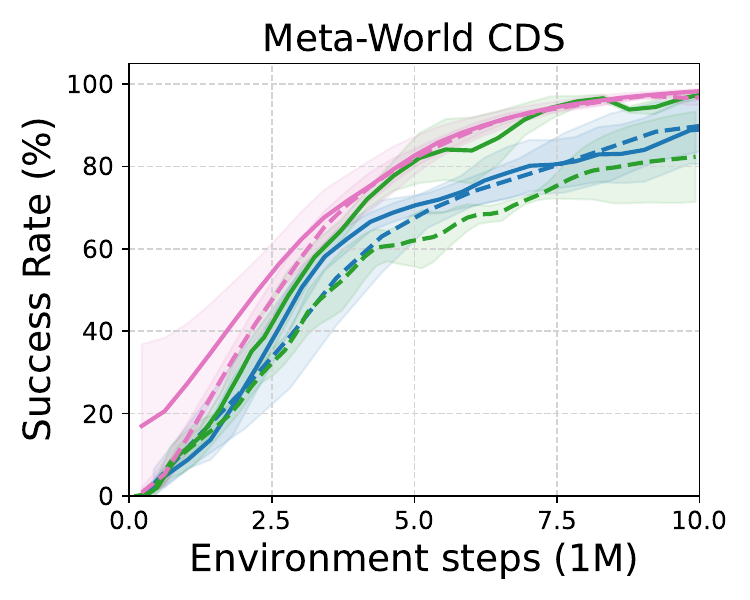}

    \includegraphics[width=0.325\linewidth]{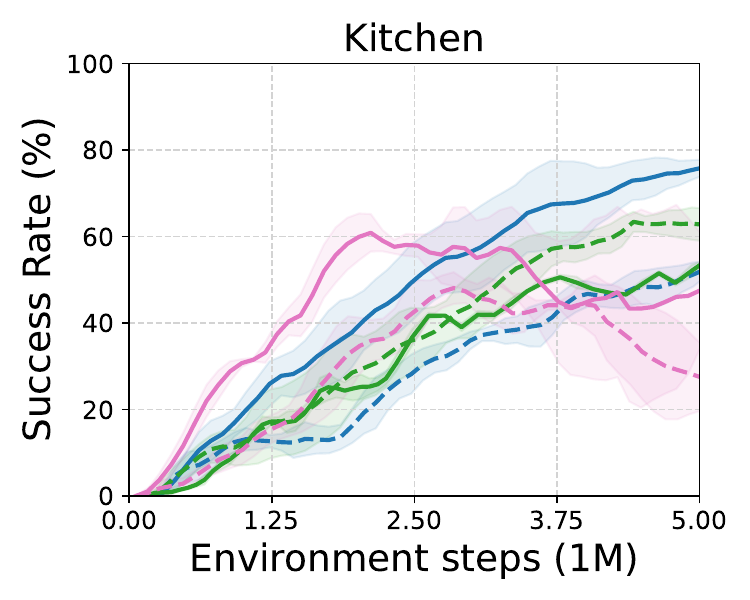}
    \includegraphics[width=0.325\linewidth]{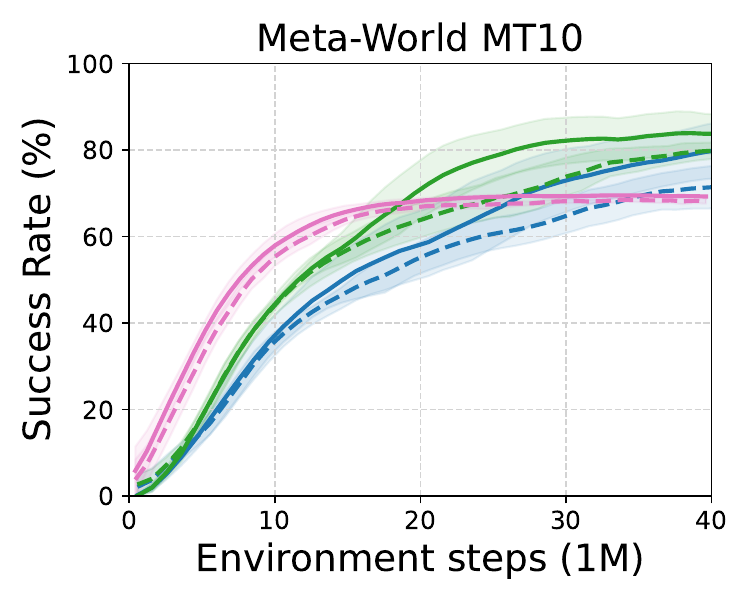}
    \includegraphics[width=0.325\linewidth]{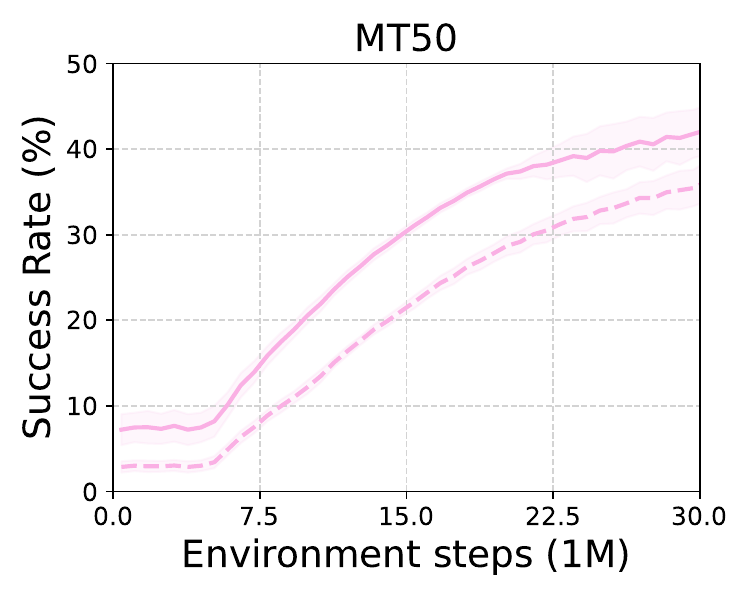}
    \includegraphics[width=0.95\textwidth]{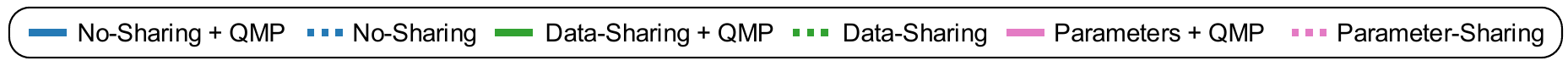}
    \caption[]{\textbf{Behavioral policy sharing is complementary}. QMP (solid lines) shows general improvement over MTRL frameworks (same-colored dashed lines) like No shared architecture (blue), shared parameters (pink), and shared data (green). Methods without parameter-sharing on MT50 converge very slowly. Success rate means and std (shaded) are over $N$ tasks, 10 episodes per task, and 5 seeds.
    }
    \label{fig:complementary_sharing}
    \vspace{-20pt}

\end{figure*}

\vspace{-5pt}
\section{Results}
\label{sec:result}

Our experiments address: (1) Does QMP provide complementary gains to other forms of MTRL?  (2) How does sharing behavioral policies compare with alternate forms of behavior sharing?  (3) Can QMP effectively identify shareable behaviors? (4) Ablating key components of QMP.

\vspace{-10pt}
\subsection{Is Behavior Sharing Complementary to other MTRL frameworks?}
\label{sec:parameter_complementary}
\vspace{-10pt}
We demonstrate that our method is compatible with and provides complementary performance gains with other forms of MTRL that share different kinds of information, including parameter sharing and data sharing.  
We compare the performance between 3 base MTRL algorithms, No-Sharing, Parameter-Sharing, and Data-Sharing, with the addition of QMP in Figure \ref{fig:complementary_sharing}.  The No-Sharing baseline provides a baseline comparison of QMP's effectiveness on its own.  For the Parameter-Sharing and Data-Sharing baselines we chose the base algorithms for their popularity and simplicity.  In each case, we add QMP by simply replacing the data collection policy with $\pi_i^\text{mix}$.
We find that \textbf{QMP is complementary to all three baseline frameworks}, mostly with additive performance gains in sample efficiency and final performance, while not hurting the performance of the base method in all but one case (Data-Sharing in Kitchen). 
We additionally see that QMP improves PCGrad's~\citep{yu2020gradient} performance significantly in 3 out of 4 environments tested in Appendix~\ref{sec:pcgrad}.  This shows that QMP is a simple and complementary addition to other forms of MTRL.

QMP significantly improves upon the No-Sharing baseline in all environments except Meta-World CDS where it performs comparatively. This demonstrates that sharing behavioral policies is a promising avenue for efficient and performant MTRL. In the data-sharing comparison, we see that the addition of QMP improves or performs comparatively to the base algorithm. In Multistage Reacher and Maze Navigation, we see that both Data-Sharing and Data + QMP perform worse than the other MTRL methods, highlighting the fact that sharing data directly between tasks can be ineffective without access to a re-labeled task rewards like in our setting. 
In environments where data-sharing does well, like Meta-World CDS, we see that adding QMP does improve sample efficiency.

We find that Parameters + QMP generally outperforms Parameter-Sharing, while inheriting its sample efficiency gains. In many cases, the parameter-sharing methods converge sub-optimally, highlighting that shared parameter MTRL has its own challenges. However, in Maze Navigation, we find that sharing \textbf{Parameters + Behaviors greatly improves the performance over both the Parameter-Sharing baseline \emph{and} No-Sharing + QMP variant of QMP}. 
This demonstrates the additive effect of these two forms of information sharing in MTRL. The agent initially benefits from the sample efficiency gains of the multi-head parameter-sharing architecture, while behavior sharing accelerates learning by selectively using other policies to keep learning even after the parameter-sharing effect plateaus.  
demonstrating the compatibility between QMP and parameter sharing as key ingredients to sample efficient MTRL.  
We further highlight that this \textbf{benefit of QMP increases with the number of tasks} increasing from 10 to 50 in Meta-World, where we see that QMP is actually more effective when combined with parameter sharing in MT50 than in MT10.  QMP scales well with the number of tasks and can actually perform better likely due to \emph{more shared behaviors} in the larger task set.

\vspace{-10pt}
\subsection{Baselines: Comparing Different Approaches to Share Behaviors}
\label{sec:baseline_results}

To verify QMP's efficacy as a behavior-sharing mechanism, we evaluate baselines that share behaviors in different ways on 6 environments in Figure~\ref{fig:baselines}. QMP reliably matches or exceeds other methods, especially in tasks that require conflicting behaviors, where alternate approaches are ineffective.

In the task sets with the most directly conflicting behaviors, Multistage Reacher and Maze Navigation, our method clearly outperforms other behavior-sharing and data-sharing baselines. In Multistage Reacher, our method reaches $>$ 90\% success rate at 0.5 million environment steps, while DnC (reg.), the next best method, takes 3 times the number of steps to fully converge. The rest of the methods fail to attain the maximum success rate.  
We also see that QMP scales better from 3 to 10 tasks in Maze compared to other behavior sharing methods in Appendix Section ~\ref{sec:maze_scale}.

In the remaining task sets with no directly conflicting behaviors, we see that QMP is competitive with the best-performing baseline for more complex manipulation and locomotion tasks.
Particularly, in Walker2D and Meta-World CDS, we see that QMP is the most sample-efficient method and converges to a better final performance than any other behavior sharing method.  
In Meta-World MT10 and Kitchen, DnC (regularization only) also performed very well, showing that well-tuned uniform behavior sharing can be very effective in tasks without conflicting behavior. However, QMP also performs competitively and more sample efficiently, showing that QMP is effective under the same assumptions as uniform behavior-sharing methods but can do so \emph{adaptively} and across more \emph{general task families}.  The Fully-Shared-Behaviors baseline often performs poorly because it totally biases the policies, while the No-Shared-Behavior is a surprisingly strong baseline as it introduces no bias.

\begin{figure*}[t]
    \vspace{-20pt}
    \centering
    \includegraphics[width=0.325\linewidth]{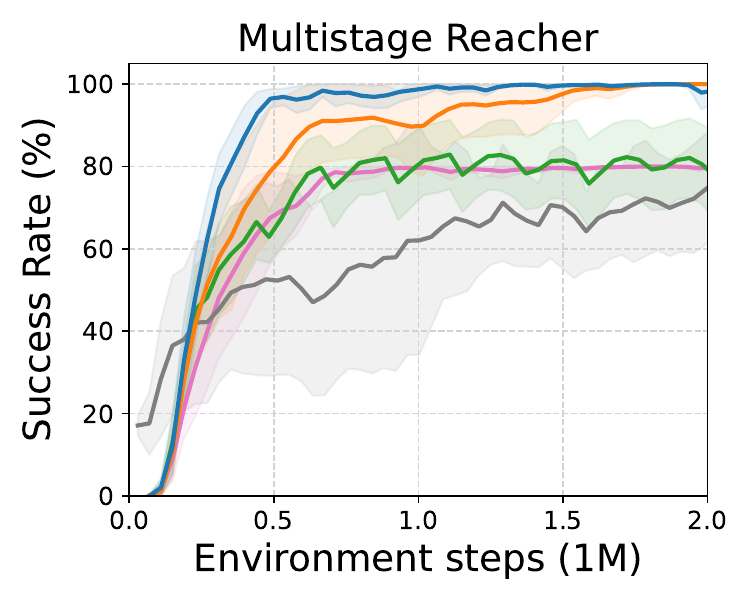}
    \includegraphics[width=0.325\linewidth]{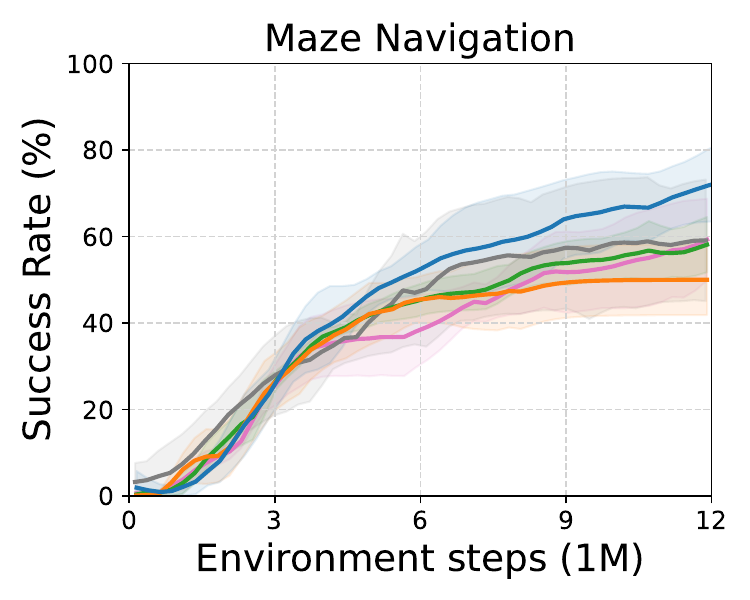}
    \includegraphics[width=0.325\linewidth]{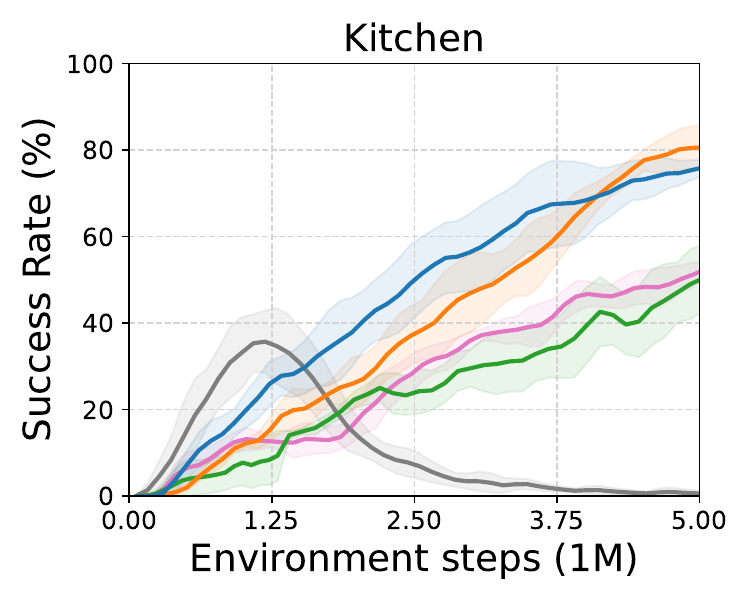}
        \includegraphics[width=0.325\linewidth]{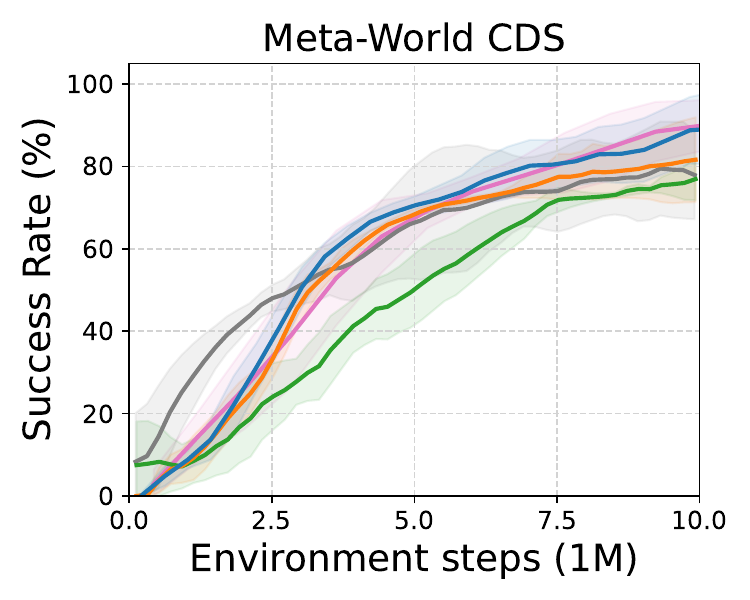}
    \includegraphics[width=0.325\linewidth]{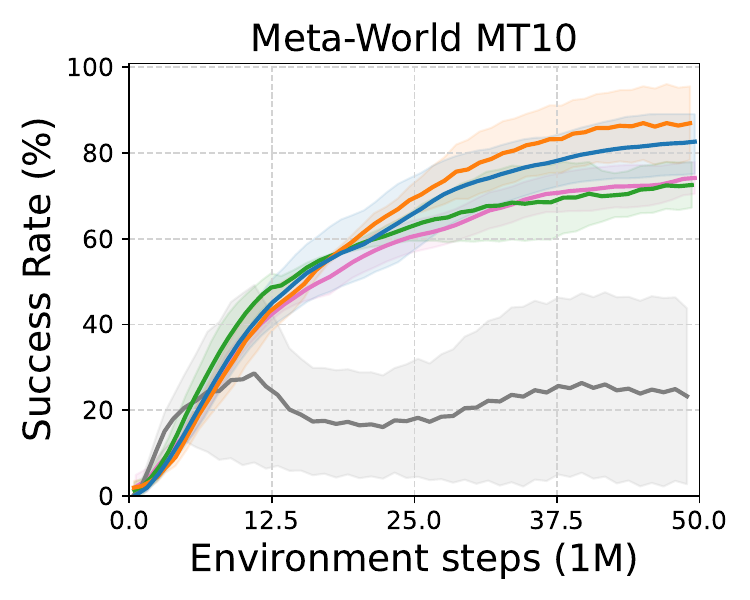}
    \includegraphics[width=0.325\linewidth]{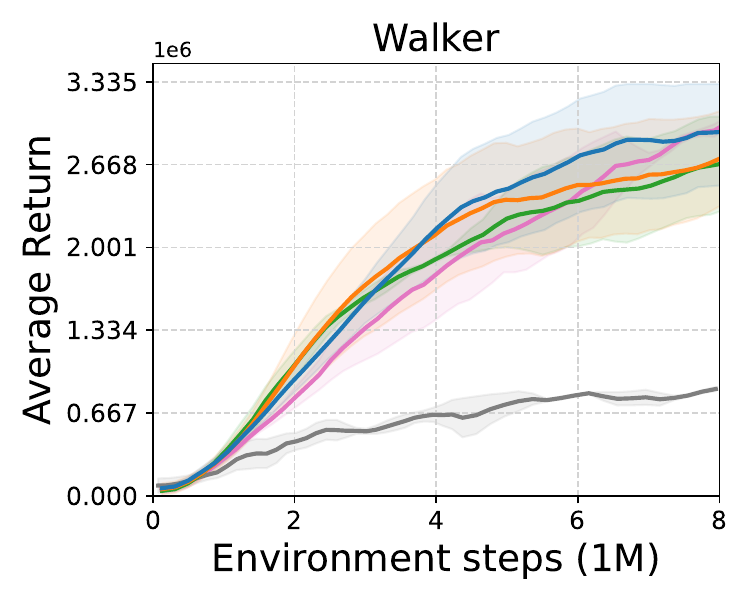}
    \includegraphics[width=0.95\textwidth]{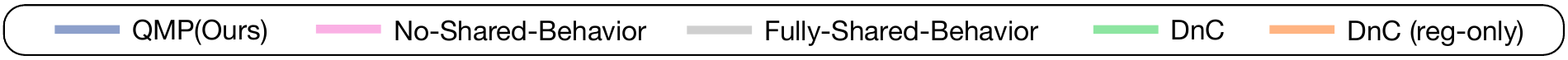}
    \caption[]{\textbf{QMP reliably shares behaviors}. In task sets exhibiting conflicting behaviors, QMP consistently matches or exceeds baselines in rate of convergence and final performance.
    }
    \label{fig:baselines}
    \vspace{-10pt}
    \vspace{-5pt}
\end{figure*}

\vspace{-10pt}
\subsection{Can QMP effectively identify shareable behaviors?}

Figure~\ref{fig:analysis:task0_mixture_probabilities} shows the average proportion of sharing from other tasks for Multistage Reacher Task 0 over the course of training. We see that QMP learns to generally share less behavior from Policy 4 than from Policies 1-3 (Appendix Figure~\ref{fig:analysis:mixture_probabilities}). Conversely, QMP in Task 4 also shares the least total cross-task behavior (Appendix Figure~\ref{fig:reacher_mixture_probabilities_curves}). We see this same trend across all 5 Multistage Reacher tasks, showing that the Q-switch successfully \textbf{identifies conflicting behaviors that should not be shared}. Further, Figure~\ref{fig:analysis:task0_mixture_probabilities} also shows that \textbf{total behavior-sharing from other tasks goes down over training}. Thus, Q-switch learns to prefer its own task-specific policy as it becomes more proficient.

\begin{figure}[t]
    \vspace{-10pt}
    \begin{subfigure}[t]{0.32\linewidth}
        \centering
        \raisebox{0.03\height}{\includegraphics[width=\linewidth]{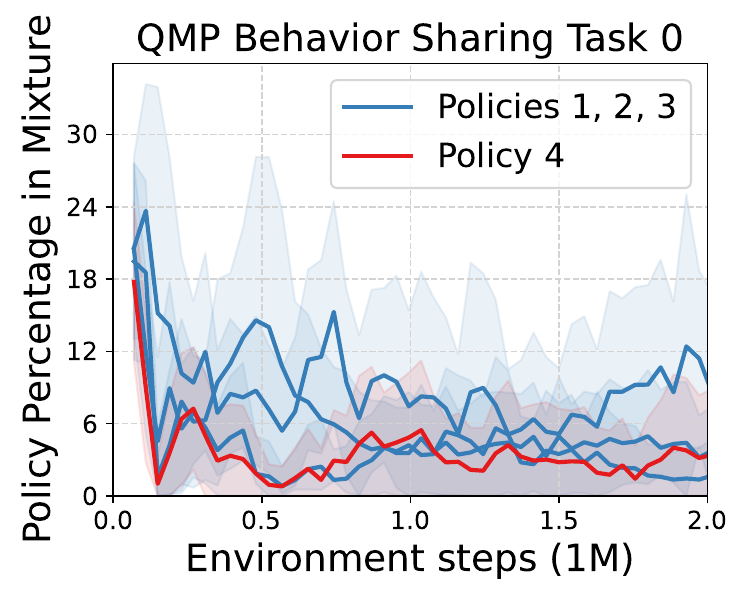}}
         \vspace{-15pt}
        \caption{Behavior-sharing over training}
        \label{fig:analysis:task0_mixture_probabilities}
    \end{subfigure}
    \hfill
    \begin{subfigure}[t]{0.65\linewidth}
        \centering
        \raisebox{0.03\height}{\includegraphics[width=\linewidth]{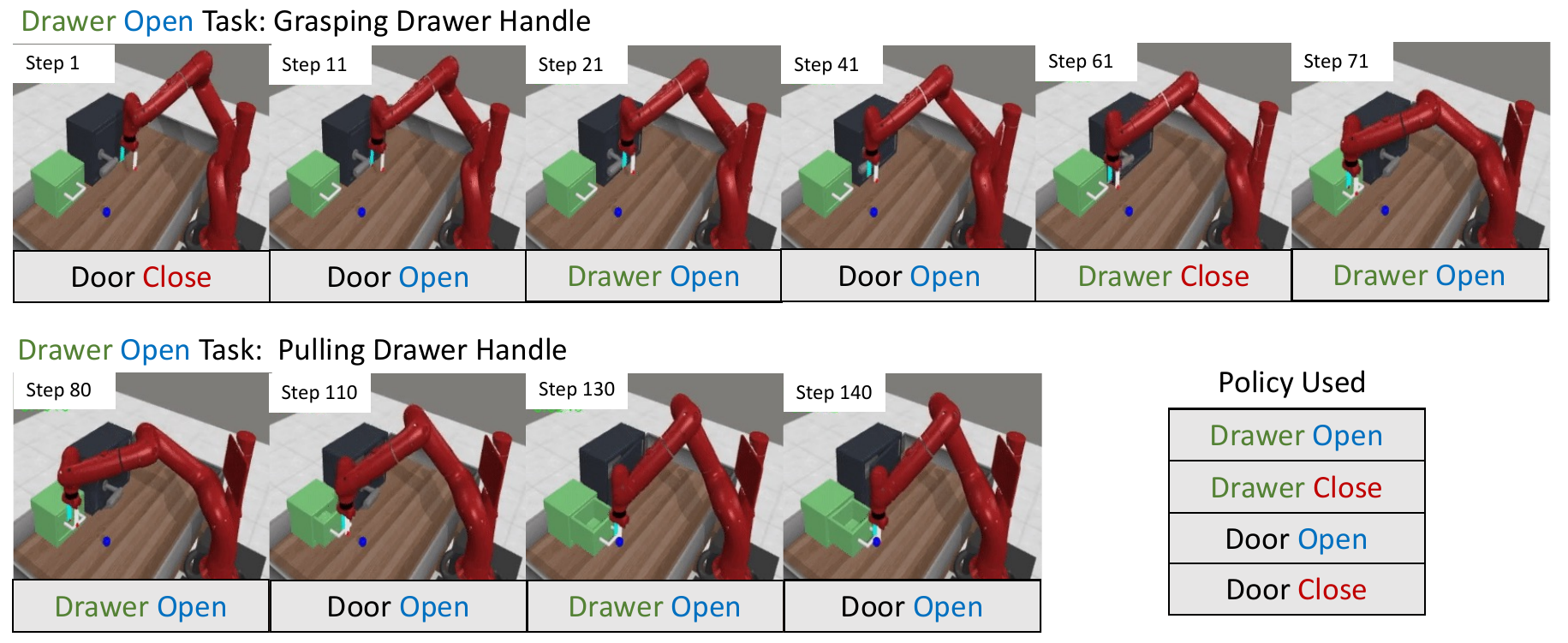}}
         \vspace{-15pt}
        \caption{Behavior-sharing in a single training episode.}
        \label{fig:analysis:qmp_rollout}
    \end{subfigure}
    \caption[]{ (a) Mixture probabilities of other policies for Task 0 in Multistage Reacher with the conflicting task Policy 4 shown in red. (b) Policies chosen by the QMP behavioral policy every 10 timesteps for the Drawer Open task throughout one training episode.  The policy approaches and grasps the handle (top row), then pulls the drawer open (bottom row).
    }
    \label{fig:analysis}
    \vspace{-10pt}
\end{figure}

We qualitatively analyze how behavior sharing varies within a single episode by visualizing a QMP rollout during training for the Drawer Open task in Meta-World CDS (Figure~\ref{fig:analysis:qmp_rollout}).  We see that it makes reasonable policy choices by (i) switching between all 4 task policies as it approaches the drawer (top row), (ii) using drawer-specific policies as it grasps the drawer-handle, and (iii) using Drawer Open and Door Open policies as it pulls the drawer open (bottom row).  In conjunction with the overall results, this supports our claim that QMP can effectively identify shareable behaviors between tasks.  For details on this visualization and the full analysis results see Appendix Section~\ref{sec:behavior_sharing_analysis}. 

Inspired by hierarchical RL~\citep{dabney2021temporallyextended} and multi-task exploration~\citep{xu2024myopic}, we briefly investigate \emph{temporally extended behavior sharing} in Appendix~\ref{sec:temporally_extended}. Recently,~\citet{xu2024myopic} showed that if one assumes a high overlap between optimal policies of different tasks, other task policies can aid exploration. So, we simply roll out each policy QMP selects for a fixed number of steps. QMP theory no longer holds as it requires selecting a policy at every step. Yet, this naive temporally extended QMP yields improvements in \textbf{some} environments like Maze with strong overlap.

\begin{wrapfigure}[15]{R}{0.45\textwidth}
    \vspace{-10pt}
    \vspace{-20pt}
    \centering
    \begin{subfigure}[t]{\linewidth}
        \centering
        \raisebox{0.03\height}{\includegraphics[width=\linewidth]{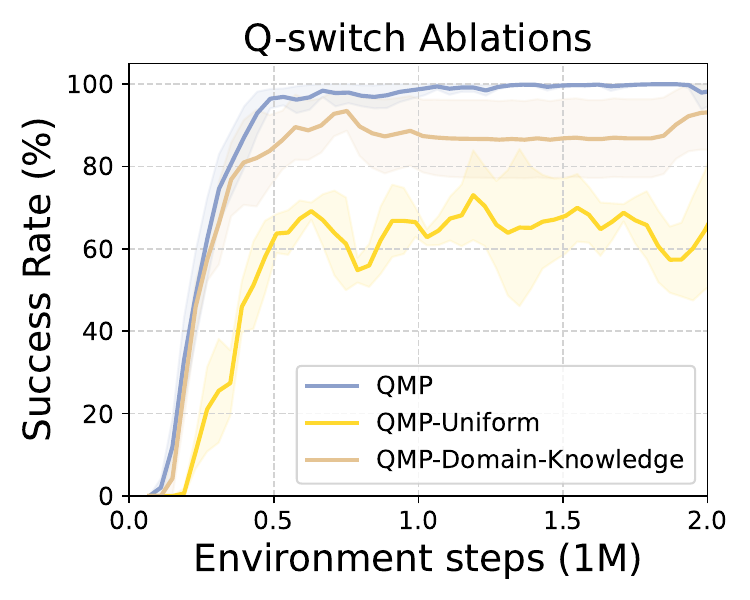}}
    \end{subfigure}
    \vspace{-10pt}
    \vspace{-10pt}
    \caption[]{QMP outperforms alternate policy mixtures in Multistage Reacher.}
\label{fig:analysis:ablations}
\end{wrapfigure}

\vspace{-10pt}
\subsection{Ablations}
\label{sec:behavior_length}

We show the importance of Q-switch in QMP (Def.~\ref{def:qmp}) against alternate forms of policy mixtures (Def.~\ref{def:mop}). \textbf{\method-Uniform} is a uniform distribution over policies, $f_i = \mathbb{U}(\{1,\dots,N\})$ and achieves only 60\% success rate (Figure ~\ref{fig:analysis:ablations}), showing the importance of selectivity.
\textbf{\method-Domain-Knowledge} is a hand-crafted, fixed policy distribution based on an estimate of similarity between tasks. Multistage Reacher measures this similarity by the shared sub-goal sequences between tasks (Appendix~\ref{sec:appendix_environment}). QMP-Domain performs well initially but plateaus early, showing that which behaviors are shareable depends on the state and current policy.
We further ablate the $\arg\max$ in Q-switch against a softmax varation resulting in a \emph{probabilistic mixture policy} in Appendix Section~\ref{sec:softmax}, and evaluating on the \emph{mean policy actions} (Appendix Section~\ref{sec:Q_expectation}) to validate our design.

\vspace{-10pt}
\section{Conclusion}
\vspace{-5pt}
We propose an unbiased approach to sharing behaviors via off-policy data collection in MTRL: Q-switch Mixture of Policies.
We demonstrate empirically that QMP effectively improves the rate of convergence and task performance in manipulation, locomotion, and navigation tasks, and is guaranteed to be as good as the underlying RL algorithm and complementary to alternate MTRL. QMP does not assume that optimal task behaviors always coincide. Thus, its improvement magnitude is limited by the degree of shareable behaviors and the suboptimality gap that exists. At the same time, this lets QMP be unbiased and find optimal policies with convergence guarantees while being equally or more sample-efficient. Since QMP only shares behaviors via off-policy data collection, it is not applicable to on-policy RL base algorithms like PPO~\citep{schulman2017proximal}.
Promising future directions include temporally-extended behavior sharing and incorporating other forms of prior task information on shareable behaviors,
such as language embeddings in instruction-following tasks.

\section*{Acknowledgements}
We thank Jesse Zhang for his assistance with writing and discussions. This work was supported by Institute of Information \& communications Technology Planning \& Evaluation (IITP) grant (No.RS-2019-II190075, Artificial Intelligence Graduate School Program, KAIST) and National Research Foundation of Korea (NRF) grant (NRF-2021H1D3A2A03103683, Brain Pool Research Program), funded by the Korea government (MSIT). Grace Zhang and Ayush Jain were supported partly as interns at Naver AI Lab during the initiation of the project. 
Shao-Hua Sun was supported by the Yushan Fellow Program by the Ministry of Education, Taiwan.

\renewcommand{\bibname}{References}
\bibliography{qmp}
\bibliographystyle{iclr2025_conference}

\appendix
\clearpage
\appendix

\section*{Appendix}

\begingroup
\hypersetup{pdfborder={0 0 0}}

\vspace{-1.6cm}
\part{} %
\parttoc %

\endgroup

\newpage

\section{Qualitative Results}
The qualitative result videos are provided at \url{https://qmp-mtrl.github.io/}

\section{QMP Derivation}
\label{sec:derivation}

Following Section~\ref{sec:identifying}, we aim to derive the mixture-switch function $f_i$ such that the mixture policy $\pi_i^\text{mix}$ is guaranteed to be better than the current task's policy $\pi_i$.
We use the generalized policy iteration procedure~\citep{sutton2018reinforcement} underlying single-task SAC~\citep{haarnoja2018soft}: policy evaluation learns $Q$ by minimizing the bellman error on the collected data, and policy improvement follows $Q$ by minimizing the KL divergence between the new policy and the exponential of the current $Q$-function, $Q^{\pi^\text{old}}$, shown in Eq.~\ref{eq:sac_policy_improvement}.

In practice, the gradient updates in SAC are gradual and do not instantly achieve this optimization in Eq.~\ref{eq:sac_policy_improvement}, leaving a suboptimality gap to catch up to the Q-function. We observe that due to the potential similarity of some tasks in MTRL, this suboptimality gap can be bridged using other policies. Concretely, a mixture policy $\pi_i^{\text{mix}}$ that selects the best policy from a set of all given policy candidates, \emph{including the current policy}, ensures that $\pi_i^{\text{mix}}$ is an improvement over $\pi_{i}$ for the current state $s$:

Given a set of policies $\{ \pi_1 \dots \pi_N \}$ including the current task policy $\pi_i$ and a given state $s$, consider the following mixture policy:

\begin{equation}
\label{eq:app:pi_mix_definition}
\pi_i^\text{mix} =  \arg \min_{\pi' \in \{ \pi_i, \dots \pi_N \}} \mathrm{D}_{\mathrm{KL}} \left( \pi'(\cdot \mid s) \Bigg\| \frac{\exp (\frac{1}{\alpha}Q^{\pi_{i}}(s, \cdot))}{Z^{\pi_{i}}(s)} \right)
\end{equation}

This $\pi_i^\text{mix}$ is a better policy improvement solution to Eq.~\ref{eq:sac_policy_improvement} than $\pi_{i}$, because:

\begin{equation*}
\min_{\pi' \in \{ \pi_i, \dots \pi_N \}} \mathrm{D}_{\mathrm{KL}} \left( \pi'(\cdot \mid s) \Bigg\| \frac{\exp (\frac{1}{\alpha}Q^{\pi_{i}}(s, \cdot))}{Z^{\pi_{i}}(s)} \right)
\leq
\mathrm{D}_{\mathrm{KL}} \left( \pi_{i}(\cdot \mid \mathbf{s}_t) \Bigg\| \frac{\exp (\frac{1}{\alpha}Q^{\pi_{i}}(\mathbf{s}_t, \cdot))}{Z^{\pi_{i}}(\mathbf{s}_t)} \right)
\end{equation*}

Now, we can simplify Eq.~\ref{eq:app:pi_mix_definition} to obtain Definition~\ref{def:qmp}:
\begin{align*}
\label{eq:pi_mix_derivation}
\pi_i^\text{mix} &=  \arg \min_{\pi' \in \{ \pi_i, \dots \pi_N \}} \mathrm{D}_{\mathrm{KL}} \left( \pi'(\cdot \mid s) \Bigg\| \frac{\exp (\frac{1}{\alpha}Q^{\pi_{i}}(s, \cdot))}{Z^{\pi_{i}}(s)} \right)\\
&= \arg \min_{\pi' \in \{ \pi_i, \dots \pi_N \}} \mathbb{E}_{a \sim \pi'(\cdot \mid s)} \left[ \log \pi'(a|s) - \log \left\{ \frac{\exp (\frac{1}{\alpha}Q^{\pi_{i}}(s, a))}{Z^{\pi_{i}}(s)} \right\} \right] \\
&= \arg \max_{\pi' \in \{ \pi_i, \dots \pi_N \}} \mathbb{E}_{a \sim \pi'(\cdot \mid s)} \left[ - \log \pi'(a|s) +  \frac{1}{\alpha}Q^{\pi_{i}}(s, a) - \log {Z^{\pi_{i}}(s)} \right] \\
&= \arg \max_{\pi' \in \{ \pi_i, \dots \pi_N \}} \mathbb{E}_{a \sim \pi'(\cdot \mid s)} \left[ - \log \pi'(a|s) \right] + \mathbb{E}_{a \sim \pi'(\cdot \mid s)} \left[\frac{1}{\alpha} Q^{\pi_{i}}(s, a) \right] \\
&= \arg \max_{\pi' \in \{ \pi_i, \dots \pi_N \}} \mathbb{E}_{a \sim \pi'(\cdot \mid s)} \left[ Q^{\pi_{i}}(s, a) \right] + \alpha \mathcal{H}\left[\pi'(\cdot \mid s)\right]\\
\end{align*}

Thus, the following mixture policy guarantees improvement over $\pi_{i}$
\begin{equation*}
    \pi_i^\text{mix} = \arg \max_{\pi' \in \{ \pi_i, \dots \pi_N \}} \mathbb{E}_{a \sim \pi'(\cdot \mid s)} \left[ Q^{\pi_{i}}(s, a) \right] + \alpha \mathcal{H}\left[\pi'(\cdot \mid s)\right]\\
\end{equation*}

\section{QMP Convergence Guarantees}
\label{sec:convergence}
We derive the convergence guarantees for \textit{mixture soft policy iteration} used in the QMP Algorithm~\ref{alg:method}. We augment the derivation of soft policy iteration in SAC~\citep{haarnoja2018soft}, which is our base algorithm, with our proposed QMP's mixture policy. Soft policy iteration follows generalized policy iteration~\citep{sutton2018reinforcement} which refers to the general idea of repeated application of (1) policy evaluation to update the critics and (2) policy improvement based on the updated critics, until convergence. Like SAC, we consider the tabular setting and show that QMP's modification to soft policy iteration converges to the optimal policy. Further, QMP can lead to an improved policy improvement step when there are shareable behaviors between tasks, consequently improving the sample efficiency. The derivation sketch follows:
\begin{enumerate}
    \item Soft Policy Evaluation: QMP modifies the off-policy data collection pipeline by replacing the primary task policy \primary~with the mixture policy \(\mixp{\pi_i^\text{mix}}\). However, it does not affect the soft Bellman backup operator of SAC, as shown in ~\citet{haarnoja2018soft}, and therefore the $Q$ function still converges as in SAC.
    \item \textit{Mixture} Soft Policy Improvement: QMP performs policy improvement in two steps: SAC's policy update from $\oldp{\pi_i^\text{old}}~\to$~\primary~and applying the mixture of policies from \primary~$\to$~\mixture.
    \begin{itemize}
        \item Soft Policy Improvement: Since QMP does \textit{not} modify the SAC update procedure~$\oldp{\pi_i^\text{old}}~\to$~\primary~, we directly use SAC's guarantees of policy improvement following Lemma 2 from ~\citet{haarnoja2018soft}.
        \item \textit{Mixture} Policy Improvement: We demonstrate QMP's mixture policy \(\mixp{\pi_i^\text{mix}}\) guarantees a better policy improvement over the per-task policies $\singlep{\pi_i}$ that compose the mixture. In Theorem~\ref{thm:improvement}, we show convergence guarantee by proving that the expected return following~\mixture~ is better than following~$\oldp{\pi_i^\text{old}}$.
    \end{itemize}
    \item \textit{Mixture} Soft Policy Iteration: In Theorem~\ref{thm:iteration}, we show that the repeated application of the above steps in QMP converges to an optimal policy for each task. Furthermore, the convergence rate is faster because of a greedier policy improvement due to \textit{Mixture} Policy Improvement.
\end{enumerate}

\changed{

\begin{lemma}[Mixture Soft Policy Evaluation]
\label{lem:evaluation}
Let $\pi_i$ and $Q_i$ be the standard per-task policy and per-task Q-function for each task $\mathbb{T}_i$. Consider the mixture policy \mixp{$\pi_i^\text{mix}$} and a mixture soft Bellman backup operator \(\mathcal{T}^\mixp{\pi_i^\text{mix}}\) be
\begin{equation}
\mathcal{T}^\mixp{\pi_i^\text{mix}}Q_i(\mathbf{s}_t, \mathbf{a}_t) \triangleq r_i(\mathbf{s}_t, \mathbf{a}_t) + \gamma \mathbb{E}_{s_{t+1} \sim p}[V_i(s_{t+1})],
\label{eq:mixture_bellman}
\end{equation}
where
\begin{equation}
V_i(\mathbf{s}_t) = \mathbb{E}_{\mathbf{a}_t \sim \singlep{\pi_i}} \left(Q_i(\mathbf{s}_t, \mathbf{a}_t) - \log \pi_i(\mathbf{a}_t | \mathbf{s}_t)\right)
\label{eq:sac_value}
\end{equation}
is the soft state value function.
Consider an initial mapping \(Q_i^0 : S \times A \rightarrow \mathbb{R}\) with \(|A| < \infty\), and define \(Q_i^{k+1} = \mathcal{T}^\mixp{\pi_i^\text{mix}} Q_i^k\). Then the sequence \(Q_i^k\) will converge
as \(k \rightarrow \infty\).
\end{lemma}

\begin{proof}
Since SAC is an off-policy algorithm, policies~\primary~ and value functions $Q_i$ can be trained with data collected from a separate policy,~\mixture. This can be seen from Eq.~\ref{eq:sac_value} where the value function is learned based on actions sampled from the current policy~\primary, regardless of the data collection policy. Thus, $\mathcal{T}^\mixp{\pi_i^\text{mix}}$ reduces to $\mathcal{T}^\singlep{\pi_i}$, which is the soft Bellman backup operator, with the same definition,
\begin{equation}
\mathcal{T}^\mixp{\pi_i^\text{mix}}Q_i(\mathbf{s}_t, \mathbf{a}_t) = \mathcal{T}^\singlep{\pi_i}Q_i(\mathbf{s}_t, \mathbf{a}_t) \triangleq r_i(\mathbf{s}_t, \mathbf{a}_t) + \gamma \mathbb{E}_{s_{t+1} \sim p}[V_i(s_{t+1})],
\label{eq:soft_bellman}    
\end{equation}

Thus, the proof reduces to showing that the soft policy evaluation of SAC with $\mathcal{T}^\singlep{\pi_i}$ converges, and follows from~\citet{haarnoja2018soft}. We define the entropy-augmented per-task reward as:
\begin{equation}
    r_{\pi_i}(\mathbf{s}_t, \mathbf{a}_t) \triangleq r_i(\mathbf{s}_t, \mathbf{a}_t) + \mathbb{E}_{s_{t+1} \sim p}[\mathcal{H}(\pi_i(\cdot | s_{t+1}))],
\end{equation}
and rewrite the update rule as
\begin{equation}
Q_i(\mathbf{s}_t, \mathbf{a}_t) \leftarrow r_{\pi_i}(\mathbf{s}_t, \mathbf{a}_t) + \gamma \mathbb{E}_{s_{t+1} \sim p, a_{t+1} \sim \singlep{\pi_i}}[Q_i(s_{t+1}, a_{t+1})].
\end{equation}

Thus, we show that the soft Bellman operator $\mathcal{T}^\singlep{\pi_i}$ that takes the form:
\begin{equation}
\mathcal{T}^\singlep{\pi_i} Q_i(\mathbf{s}_t, \mathbf{a}_t)  = r_{\pi_i}(\mathbf{s}_t, \mathbf{a}_t) + \gamma \mathbb{E}_{s_{t+1} \sim p, a_{t+1} \sim \singlep{\pi_i}}[Q_i(s_{t+1}, a_{t+1})],
\end{equation}
is a contraction. Then similar to ~\citet{haarnoja2018soft} (Appendix B.1), we can apply the standard convergence results for policy evaluation~\citep{sutton2018reinforcement}. The assumption \( |A| < \infty \) is required to guarantee that the entropy augmented reward is bounded. To prove that $\mathcal{T}^\singlep{\pi_i}$ is a contraction, we need to show that for any $Q_i, Q_i'$,
\begin{equation}
\| \mathcal{T}^\singlep{\pi_i}Q_i - \mathcal{T}^\singlep{\pi_i}Q_i' \|_\infty \leq \gamma \| Q_i - Q_i' \|_\infty
\end{equation}
where $\| \cdot \|_\infty$ denotes the infinity-norm on Q-values as $\left\| Q_i - Q_i' \right\|_\infty\triangleq \max_{s, a} \left| Q_i(s, a) - Q_i'(s, a) \right|$.

We begin by evaluating the left-hand side of the inequality:
\begin{align*}
\| \mathcal{T}^\singlep{\pi_i}Q_i - \mathcal{T}^\singlep{\pi_i}Q_i' \|_\infty &= \| (r_{\pi_i} + \gamma \mathbb{E}_{s_{t+1}, a_{t+1}  \sim \singlep{\pi_i}}[Q_i(s_{t+1}, a_{t+1})]) \\
&\;\;\; - (r_{\pi_i} + \gamma \mathbb{E}_{s_{t+1}, a_{t+1} \sim \singlep{\pi_i}}[Q_i'(s_{t+1}, a_{t+1})]) \|_\infty \\
&= \gamma \| \mathbb{E}_{s_{t+1}, a_{t+1} \sim \singlep{\pi_i}}[Q_i(s_{t+1}, a_{t+1}) - Q_i'(s_{t+1}, a_{t+1})] \|_\infty \\
&\leq \gamma \mathbb{E}_{s_{t+1}, a_{t+1} \sim \singlep{\pi_i}}[\| Q_i(s_{t+1}, a_{t+1}) - Q_i'(s_{t+1}, a_{t+1}) \|_\infty] \; \text{by Jensen's Inequality} 
\\
&\leq \gamma \| Q_i - Q_i' \|_\infty
\end{align*}
Since $0 \leq \gamma < 1$, this completes the proof that $\mathcal{T}^\singlep{\pi_i}$ is a contraction mapping.

\end{proof}
}

For a given stochastic policy $\pi$ and task $\mathbb{T}_i \in \{ \mathbb{T}_1\dots \mathbb{T}_N \}$, define $V_i^\pi$ as the expected return of acting with $\pi$. Given another stochastic policy $\pi'$, define $Q_i^{\pi} (s, \pi'(s)) = \mathbb{E}_{a \sim \pi'(s)} Q_i^{\pi}(s, a)$ as the expected return of acting with $\pi'$ only in $\mathbf{s}$ and thereafter with $\pi$.

\begin{theorem}[Mixture Soft Policy Improvement]
\label{thm:improvement}
Consider $\pi_i^\text{old}$ and its associated Q-function $Q_i$. Apply SAC's policy improvement $\pi_i^\text{old} \to \pi_i$ and then $\pi_i \to \pi_i^\text{mix}$ from Eq.~\ref{eq:pi_mix_final}.
Then, $Q^{\pi_i^\text{mix}}(\mathbf{s}_t, \mathbf{a}_t) \geq Q^{\pi_i}(\mathbf{s}_t, \mathbf{a}_t) \geq Q^{\pi_i^\text{old}}(\mathbf{s}_t, \mathbf{a}_t)$ for all tasks $\mathbb{T}_i$ and for all $(\mathbf{s}_t, \mathbf{a}_t) \in \mathcal{S} \times \mathcal{A}$ with $\left| \mathcal{A}\right| < \infty$.

\end{theorem}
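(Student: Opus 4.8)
The plan is to decouple the two inequalities, reduce each to an already-available fact, and then chain them. The right-hand inequality $Q^{\pi_i}(\mathbf{s}_t,\mathbf{a}_t) \geq Q^{\pi_i^\text{old}}(\mathbf{s}_t,\mathbf{a}_t)$ needs no new work: the step $\pi_i^\text{old}\to\pi_i$ is precisely SAC's own policy improvement update, which QMP does not modify, so this is exactly the soft policy improvement lemma of \citet{haarnoja2018soft}, and I would simply invoke it.

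The substance lies in the left-hand inequality $Q^{\pi_i^\text{mix}}\geq Q^{\pi_i}$. First I would record the one-step consequence of the $\arg\max$ in Eq.~\ref{eq:pi_mix_final}. Because the current policy $\pi_i$ is itself one of the candidates $\{\pi_1,\dots,\pi_N\}$ over which the maximization runs, the maximizer $\pi_i^\text{mix}$ must attain an objective value at least as large as $\pi_i$, i.e.
\[
\mathbb{E}_{a\sim\pi_i^\text{mix}}\!\left[Q^{\pi_i}(\mathbf{s}_t,a)\right] + \alpha\mathcal{H}\!\left[\pi_i^\text{mix}(\cdot\mid\mathbf{s}_t)\right]
\;\geq\;
\mathbb{E}_{a\sim\pi_i}\!\left[Q^{\pi_i}(\mathbf{s}_t,a)\right] + \alpha\mathcal{H}\!\left[\pi_i(\cdot\mid\mathbf{s}_t)\right]
= V^{\pi_i}(\mathbf{s}_t).
\]
This is the exact analogue of the inequality that SAC derives from its KL minimization; here it instead comes from a discrete maximization over a set that contains $\pi_i$.

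Next I would promote this pointwise bound to the full value inequality by repeated application of the soft Bellman backup, mirroring SAC's proof. Starting from the soft Bellman equation $Q^{\pi_i}(\mathbf{s}_t,\mathbf{a}_t) = r_i(\mathbf{s}_t,\mathbf{a}_t) + \gamma\,\mathbb{E}_{\mathbf{s}_{t+1}}[V^{\pi_i}(\mathbf{s}_{t+1})]$, I would substitute the displayed bound for $V^{\pi_i}(\mathbf{s}_{t+1})$ to obtain
\[
Q^{\pi_i}(\mathbf{s}_t,\mathbf{a}_t) \leq r_i(\mathbf{s}_t,\mathbf{a}_t) + \gamma\,\mathbb{E}_{\mathbf{s}_{t+1}}\!\Big[\mathbb{E}_{\mathbf{a}_{t+1}\sim\pi_i^\text{mix}}[Q^{\pi_i}(\mathbf{s}_{t+1},\mathbf{a}_{t+1})] + \alpha\mathcal{H}[\pi_i^\text{mix}(\cdot\mid\mathbf{s}_{t+1})]\Big],
\]
and then iterate, at each step re-expanding the inner $Q^{\pi_i}$ through its Bellman equation and reapplying the one-step bound. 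Since $\gamma<1$ and $|\mathcal{A}|<\infty$ keeps the entropy-augmented reward bounded, the resulting telescoping series converges, and its limit is exactly $Q^{\pi_i^\text{mix}}(\mathbf{s}_t,\mathbf{a}_t)$, giving $Q^{\pi_i}\leq Q^{\pi_i^\text{mix}}$.

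The delicate point — where I would be most careful — is this monotone iteration: one must check that every repeated backup preserves the inequality (which rests on monotonicity of the soft Bellman operator and on treating the entropy bonus consistently as part of the augmented reward, as in Lemma~\ref{lem:evaluation}) and that the iterated right-hand side converges to $Q^{\pi_i^\text{mix}}$ rather than overshooting it. This part is structurally identical to the SAC argument; the only genuinely new ingredient is that the one-step improvement arises from a discrete $\arg\max$ over a candidate set containing $\pi_i$ instead of from the KL minimization over all of $\Pi$. Chaining the two established inequalities then yields the full chain $Q^{\pi_i^\text{mix}}\geq Q^{\pi_i}\geq Q^{\pi_i^\text{old}}$ for all $(\mathbf{s}_t,\mathbf{a}_t)\in\mathcal{S}\times\mathcal{A}$ and every task $\mathbb{T}_i$.
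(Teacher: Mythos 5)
Your proof is correct, but it takes a genuinely different and arguably tighter route than the paper's. The paper proves both inequalities by expanding the soft Bellman equation of $Q^{\pi_i^\text{old}}$ twice --- once along trajectories generated by $\pi_i$ (yielding $Q^{\pi_i^\text{old}} = Q^{\pi_i} - \Delta_1$) and once along trajectories generated by $\pi_i^\text{mix}$ (yielding $Q^{\pi_i^\text{old}} = Q^{\pi_i^\text{mix}} - \Delta_2 - \Omega$, where $\Omega$ accumulates the per-state gain $\omega(\mathbf{s})\geq 0$ coming from the $\arg\max$). Because the two expansions visit different state distributions, $\Delta_1 \neq \Delta_2$, and the paper must explicitly \emph{assume} that $\Delta_2 + \Omega \geq \Delta_1$ (i.e., that the state-coverage discrepancy is dominated by the accumulated $\omega$) in order to conclude $Q^{\pi_i^\text{mix}} \geq Q^{\pi_i}$. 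You instead compare $\pi_i^\text{mix}$ to $\pi_i$ \emph{directly}: since $\pi_i$ belongs to the candidate set in Eq.~\ref{eq:pi_mix_final}, the maximizer satisfies $\mathbb{E}_{a\sim\pi_i^\text{mix}}[Q^{\pi_i}(\mathbf{s},a)] + \alpha\mathcal{H}[\pi_i^\text{mix}(\cdot\mid\mathbf{s})] \geq V^{\pi_i}(\mathbf{s})$, which is exactly the premise of SAC's soft policy improvement lemma with $\pi_i$ playing the role of the old policy and $\pi_i^\text{mix}$ the new one; iterating the soft Bellman backup then gives $Q^{\pi_i^\text{mix}} \geq Q^{\pi_i}$ with no state-coverage assumption. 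This buys you a cleaner, assumption-free argument, with one caveat: it requires reading Eq.~\ref{eq:pi_mix_final} literally, i.e., the switch scores candidates with $Q^{\pi_i}$, the true soft Q-function of the \emph{updated} policy $\pi_i$. The paper's proof instead scores with $Q^{\pi_i^\text{old}}$ (the Q-function actually available in the algorithm at switch time), which is what forces the detour through $\pi_i^\text{old}$ and the extra assumption. In short, your argument is the cleaner theorem about the idealized update; the paper's is a (weaker, assumption-laden) attempt to stay closer to what the implemented algorithm computes. Your handling of the monotone iteration and the boundedness via $|\mathcal{A}|<\infty$ and $\gamma<1$ matches the standard SAC argument and is sound.
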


\begin{proof}

From Soft Policy Improvement, Lemma 2 of~\citet{haarnoja2018soft}, we have
\[
\mathbb{E}_{\mathbf{a}_t \sim \pi_i} \left[ Q^{\pi_i^\text{old}}(\mathbf{s}_t, \mathbf{a}_t) - \log \pi_i(\mathbf{a}_t | \mathbf{s}_t) \right] \geq V^{\pi_i^\text{old}}(\mathbf{s}_t).
\]
Rewrite the difference as $\delta(\mathbf{s}_t)$,
\[
 \delta(\mathbf{s}_t) = \mathbb{E}_{\mathbf{a}_t \sim \pi_i} \left[ Q^{\pi_i^\text{old}}(\mathbf{s}_t, \mathbf{a}_t) - \log \pi_i(\mathbf{a}_t | \mathbf{s}_t) \right] - V^{\pi_i^\text{old}}(\mathbf{s}_t) \geq 0.
\]

From Eq.~\ref{eq:pi_mix_final},
\[
    \pi_i^{\text{mix}} = \arg\max_{\pi' \in \{ \pi_1, \dots, \pi_N \}} \:\: \mathbb{E}_{a \sim \pi'(\cdot \mid s)} \left[ Q^{\pi_i}(s, a) \right] + \alpha \mathcal{H}\left[ \pi'(\cdot \mid s) \right].
\]
Therefore, we have a positive difference $\omega(\mathbf{s}_t)$,
\[
 \omega(\mathbf{s}_t) = \mathbb{E}_{\mathbf{a}_t \sim \pi_i^\text{mix}} \left[ Q^{\pi_i^\text{old}}(\mathbf{s}_t, \mathbf{a}_t) - \log \pi_i^\text{mix}(\mathbf{a}_t | \mathbf{s}_t) \right] - \mathbb{E}_{\mathbf{a}_t \sim \pi_i} \left[ Q^{\pi_i^\text{old}}(\mathbf{s}_t, \mathbf{a}_t) - \log \pi_i(\mathbf{a}_t | \mathbf{s}_t) \right] \geq 0.
\]

We use $\delta$ to expand the soft Bellman equation to derive the relationship between $Q^{\pi_i^\text{old}}$ and $Q^{\pi_i}$,
\begin{align*}
Q^{\pi_i^\text{old}}(\mathbf{s}_t, \mathbf{a}_t) &= r(\mathbf{s}_t, \mathbf{a}_t) + \gamma \, \mathbb{E}_{\mathbf{s}_{t+1} \sim p} \left[ V^{\pi_i^\text{old}}(\mathbf{s}_{t+1}) \right]\\
&= r(\mathbf{s}_t, \mathbf{a}_t) + \gamma \, \mathbb{E}_{\mathbf{s}_{t+1} \sim p} \left[ \mathbb{E}_{\mathbf{a}_{t+1} \sim \pi_i} \left( Q^{\pi_i^\text{old}}(\mathbf{s}_{t+1}, \mathbf{a}_{t+1}) - \log \pi_i(\mathbf{a}_{t+1} | \mathbf{s}_{t+1}) \right) - \delta(\mathbf{s}_{t+1}) \right]\\
&\vdots\\
&= \underbrace{\sum_{k=0}^{\infty} \gamma^k \, \mathbb{E}_{\mathbf{s}_{t+k} \sim p, \, \mathbf{a}_{t+k} \sim \pi_i} \left[ r(\mathbf{s}_{t+k}, \mathbf{a}_{t+k}) - \log \pi_i(\mathbf{a}_{t+k} | \mathbf{s}_{t+k}) \right]}_{Q^{\pi_i}(\mathbf{s}_t, \mathbf{a}_t)} - \underbrace{\sum_{k=1}^{\infty} \gamma^k \, \mathbb{E}_{\mathbf{s}_{t+k} \sim p} \left[ \delta(\mathbf{s}_{t+k}) \right]}_{\Delta_1}\\
&=Q^{\pi_i}(\mathbf{s}_t, \mathbf{a}_t) - \Delta_1
\end{align*}

Likewise, we use $\delta$ and $\omega$ to derive the relationship between $Q^{\pi_i^\text{old}}$ and $Q^{\pi_i^\text{mix}}$,
\begin{align*}
Q^{\pi_i^\text{old}}(\mathbf{s}_t, \mathbf{a}_t) &= r(\mathbf{s}_t, \mathbf{a}_t) + \gamma \, \mathbb{E}_{\mathbf{s}_{t+1} \sim p} \left[ V^{\pi_i^\text{old}}(\mathbf{s}_{t+1}) \right]\\
&= r(\mathbf{s}_t, \mathbf{a}_t) + \gamma \, \mathbb{E}_{\mathbf{s}_{t+1} \sim p} \left[ \mathbb{E}_{\mathbf{a}_{t+1} \sim \pi_i} \left( Q^{\pi_i^\text{old}}(\mathbf{s}_{t+1}, \mathbf{a}_{t+1}) - \log \pi_i(\mathbf{a}_{t+1} | \mathbf{s}_{t+1}) \right) - \delta(\mathbf{s}_{t+1}) \right]\\
&= r(\mathbf{s}_t, \mathbf{a}_t) + \gamma \, \mathbb{E}_{\mathbf{s}_{t+1} \sim p} \left[ \mathbb{E}_{\mathbf{a}_{t+1} \sim \pi_i^\text{mix}} \left( Q^{\pi_i^\text{old}}(\mathbf{s}_{t+1}, \mathbf{a}_{t+1}) - \log \pi_i^\text{mix}(\mathbf{a}_{t+1} | \mathbf{s}_{t+1}) \right) - \delta(\mathbf{s}_{t+1}) - \omega(\mathbf{s}_{t+1}) \right]\\
&\vdots\\
&= \underbrace{\sum_{k=0}^{\infty} \gamma^k \, \mathbb{E}_{\mathbf{s}_{t+k} \sim p, \, \mathbf{a}_{t+k} \sim \pi_i^\text{mix}} \left[ r(\mathbf{s}_{t+k}, \mathbf{a}_{t+k}) - \log \pi_i^\text{mix}(\mathbf{a}_{t+k} | \mathbf{s}_{t+k}) \right]}_{Q^{\pi_i^\text{mix}}(\mathbf{s}_t, \mathbf{a}_t)} \\
&\;\;\;\;\;\;\;\;\;\;\;\;\;\;\;\;\;\;\;\;\;\;\;\; - \underbrace{\sum_{k=1}^{\infty} \gamma^k \, \mathbb{E}_{\mathbf{s}_{t+k} \sim p} \left[ \delta(\mathbf{s}_{t+k}) \right]}_{\Delta_2} - \underbrace{\sum_{k=1}^{\infty} \gamma^k \, \mathbb{E}_{\mathbf{s}_{t+k} \sim p} \left[ \omega(\mathbf{s}_{t+k}) \right]}_{\Omega}\\
&=Q^{\pi_i^\text{mix}}(\mathbf{s}_t, \mathbf{a}_t) - \Delta_2 - \Omega,
\end{align*}

We assume that the effect of the difference $\Delta_2 - \Delta_1$ due to different state coverage is lower compared to the effect of $\Omega$ because $\omega$ is accumulated at every state, i.e., $\Delta_2 + \Omega = \Delta_1 + (\Delta_2 - \Delta_1) + \Omega \geq \Delta_1$

Since $\Delta_1, \Delta_2 \geq 0$ and $\Omega \geq 0$, we have
\[
    Q^{\pi_i^\text{mix}}(\mathbf{s}_t, \mathbf{a}_t) \geq Q^{\pi_i}(\mathbf{s}_t, \mathbf{a}_t) \geq Q^{\pi_i^\text{old}}(\mathbf{s}_t, \mathbf{a}_t)
\]
\end{proof}

\begin{theorem}[Mixture Soft Policy Iteration]
\label{thm:iteration}
    Repeated application of (i) soft policy evaluation and (ii) mixture soft policy improvement (Theorem~\ref{thm:improvement}) to any $\pi_i \in \Pi$ converges to an optimal policy $\pi_i^{*}$ such that $Q_i^{\pi_i^*}(\mathbf{s}_t, \mathbf{a}_t) \geq Q_i^{\pi_i}(\mathbf{s}_t, \mathbf{a}_t)$ for all $\pi_i \in \Pi$ and $(\mathbf{s}_t, \mathbf{a}_t) \in \mathcal{S} \times \mathcal{A}$ with $\left| \mathcal{A}\right| < \infty$.
    Furthermore, the sample efficiency and rate of convergence is at least as good as SAC in the presence of mixture policy improvement.
\end{theorem}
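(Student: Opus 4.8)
The plan is to follow the structure of the soft policy iteration convergence proof (Theorem 1 of \citet{haarnoja2018soft}), adapting it to absorb the extra mixture improvement step. The argument rests on three ingredients that I would assemble in order: monotonicity of the induced sequence of soft Q-functions, boundedness, and a fixed-point characterization of optimality. Denote by $\pi_i^{(k)}$ the task-$i$ policy after $k$ rounds of soft policy evaluation followed by mixture soft policy improvement, and let $Q_i^{(k)} = Q^{\pi_i^{(k)}}$ be the associated soft Q-function, which is well-defined and converges within each round by Lemma~\ref{lem:evaluation}. Theorem~\ref{thm:improvement} shows that one round satisfies $Q^{\pi_i^\text{mix}}(\mathbf{s}_t, \mathbf{a}_t) \geq Q^{\pi_i}(\mathbf{s}_t, \mathbf{a}_t) \geq Q^{\pi_i^\text{old}}(\mathbf{s}_t, \mathbf{a}_t)$ pointwise, so $\{Q_i^{(k)}\}$ is monotonically non-decreasing in every $(\mathbf{s}_t, \mathbf{a}_t)$.

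Next I would establish boundedness: since $|\mathcal{A}| < \infty$ and the reward is bounded, the entropy-augmented reward is bounded, so each $Q_i^{(k)}$ is uniformly bounded above by a geometric sum of the form $\tfrac{1}{1-\gamma}\,(r_{\max} + \alpha \log |\mathcal{A}|)$. A monotone sequence bounded above converges pointwise; in the tabular setting with finite $\mathcal{S}\times\mathcal{A}$ this gives $Q_i^{(k)} \to Q_i^{*}$ and correspondingly $\pi_i^{(k)} \to \pi_i^{*}$.

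For optimality of the limit, at the fixed point the SAC improvement step $\pi_i^\text{old}\to\pi_i$, which minimizes the KL objective of Eq.~\ref{eq:sac_policy_improvement} over the \emph{entire} policy class $\Pi$, yields no further increase, so $\pi_i^{*}$ already minimizes that divergence against $\exp(\tfrac{1}{\alpha}Q^{\pi_i^{*}})/Z$. The standard SAC argument then gives $\mathbb{E}_{a\sim\pi_i^{*}}[Q^{\pi_i^{*}} - \log\pi_i^{*}] \geq \mathbb{E}_{a\sim\pi}[Q^{\pi_i^{*}} - \log\pi]$ for every $\pi\in\Pi$, and iterating the soft Bellman equation yields $Q^{\pi_i^{*}}(\mathbf{s}_t,\mathbf{a}_t)\geq Q^{\pi}(\mathbf{s}_t,\mathbf{a}_t)$ for all $\pi\in\Pi$. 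The subtlety I expect to be the main obstacle is verifying that the mixture step does not disturb this fixed point: since $\pi_i^{*}$ is itself a candidate in the $\arg\max$ of Eq.~\ref{eq:pi_mix_final}, at optimality no other task policy attains a strictly larger soft value, so $\pi_i^\text{mix}=\pi_i^{*}$ and the combined update is genuinely a fixed point of the whole iteration rather than only of its SAC component.

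Finally, for the rate claim I would couple the QMP iterates with the pure-SAC iterates launched from the same initialization. Theorem~\ref{thm:improvement} decomposes each QMP step into the SAC increment plus the non-negative mixture term $\Omega$, under the stated assumption that the state-coverage discrepancy $\Delta_2-\Delta_1$ is dominated by $\Omega$. Hence at every round the QMP Q-function pointwise dominates the corresponding SAC Q-function, so QMP attains any target value, and therefore $Q_i^{*}$, in no more iterations than SAC, which establishes the ``at least as sample-efficient'' conclusion. Making this coupling fully rigorous is the delicate part, because the mixture policy alters the data-collection distribution and hence the realized state visitation; the comparison relies on the same $\Delta_2-\Delta_1$ assumption used in Theorem~\ref{thm:improvement} to keep the two sequences comparable.
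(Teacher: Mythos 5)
Your proposal is correct and follows essentially the same route as the paper's proof: convergence is inherited from SAC's soft policy iteration (monotonicity from Theorem~\ref{thm:improvement}, boundedness from $|\mathcal{A}|<\infty$, and the standard fixed-point optimality argument), and the rate claim follows because each mixture improvement step dominates the corresponding SAC step under the same $\Delta_2-\Delta_1$ assumption. You fill in details the paper defers to \citet{haarnoja2018soft} --- notably the explicit check that $\pi_i^{\text{mix}}=\pi_i^{*}$ at the fixed point since $\pi_i^{*}$ is itself a candidate in the $\arg\max$ --- and you are appropriately candid that the coupling argument for the rate leans on the same state-coverage assumption as Theorem~\ref{thm:improvement}.
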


\begin{proof}
    Let $\singlep{\pi_i^k}$ be the policy at iteration $k$. By SAC's soft policy iteration, the sequence $Q_i^{\singlep{\pi_i^k}}$ is monotonically increasing, because \mixture~only modifies the online data collected and SAC is an off-policy algorithm. Thus, Theorem 1 (Soft Policy Iteration) from~\citet{haarnoja2018soft} Appendix B.3 directly applies here and proves that repeated application of soft policy evaluation and soft policy improvement converges to an optimal policy \singlep{$\pi_i^*$}.

    Mixture soft policy improvement (Theorem~\ref{thm:improvement}) shows that ~\mixture~is a greedier policy improvement over~\primary with respect to each estimate of $Q_i^{\singlep{\pi_i^k}}$. Thus, the expected returns in the data collected by QMP policy,~$Q_i^{\mixp{\pi_i^\text{mix; k}}}$, is greater than or equal to that collected by the individual task policy, $Q_i^{\singlep{\pi_i^k}}$. Therefore, every mixture soft policy improvement step constitutes a \emph{larger policy improvement step} than SAC's soft policy improvement step. This makes the convergence of mixture soft policy iteration (repeated application of soft policy evaluation and Theorem~\ref{thm:improvement}) an improvement over soft policy iteration.

\end{proof}

\vspace{-10pt}
\section{Environment Details}
\label{sec:appendix_environment}

\subsection{Multistage Reacher}
We implement multistage reacher tasks on the Open AI Gym~\citep{brockman2016openai}  Reacher environment simulated in the MuJoCo physics engine~\citep{todorov2012mujoco} by defining a sequence of 3 subgoals per task, as specified in Table~\ref{table:reachergoals}.  For all tasks, the reacher is initialized at the same start position with a small random perturbation sampled uniformly from $[-0.01, 0.01]$ for each coordinate. The observation includes the agent's proprioceptive state and how many subgoals have been reached but not subgoal locations, as they must be inferred from the respective task's reward function.

We set up the tasks to ensure that we can evaluate behavior sharing when the task rewards are qualitatively different (see Figure~\ref{fig:env_reacher}):
\begin{itemize}[noitemsep, topsep=0pt]
\item For every task except Task 3, the reward function is the default gym reward function based on the distance to the goal, plus an additional bonus for every subgoal completed.
\item For Task 1, the reward is shifted by -2 at every timestep.
\item Task 3 receives only a sparse reward of 1 for every subgoal reached.
\item Task 4 has one fixed goal set at its initial position.
\end{itemize}

\begin{table}[ht]
\begin{center}
\begin{tabular}{c c c c} 
  & Subgoal 1 & Subgoal 2 & Subgoal 3 \\ 
 \hline
 $T_0$ & \textcolor{red}{(0.2, 0.3, 0.5)} & \textcolor{blue}{(0.3, 0, 0.3)} & \textcolor{ForestGreen}{(0.4, -0.3, 0.4)} \\ 
 $T_1$ & \textcolor{red}{(0.2, 0.3, 0.5)} & \textcolor{blue}{(0.3, 0, 0.3)} & \textcolor{violet}{(0.4, 0.3, 0.2)} \\ 
  $T_2$ & \textcolor{blue}{(0.3, 0, 0.3)} & \textcolor{violet}{(0.4, 0.3, 0.2)} & \textcolor{ForestGreen}{(0.4, -0.3, 0.4)} \\ 
  $T_3$ & \textcolor{blue}{(0.3, 0, 0.3)} & \textcolor{ForestGreen}{(0.4, -0.3, 0.4)} & \textcolor{red}{(0.2, 0.3, 0.5)} \\ 
 $T_4$ & initial & initial & initial \\ 
\end{tabular}
\end{center}
\caption[Subgoal Positions for Each Task in Multistage Reacher]{Coordinates of subgoal locations for each task in Multistage Reacher.  Shared subgoals are highlighted in the same color.  For Task 4, the only goal is to stay in the initial position.}
\label{table:reachergoals}
\end{table}

\textbf{QMP-Domain}: Section~\ref{sec:behavior_length} ablates the importance of an adaptive and state-dependent Q-switch by replacing it with a domain-dependent distribution over other tasks based on apparent task similarity. Specifically, to define the mixture probabilities for QMP-Domain in Multistage Reacher, we use the domain knowledge of the subgoal locations of the tasks to determine the mixture probabilities.  We use the ratio of \emph{shared sub-goal sequences} between each pair of tasks (not necessarily the shared subgoals) over the total number of sub-goal sequences, 3, to calculate how much behavior must be shared between two tasks.
For that ratio of shared behavior, we distribute the probability mass uniformly between all task policies that share that behavior.  For Task 4, the conflicting task, we do not do any behavior sharing and only use $\pi_4$ to gather data.

Each Task $\mathbb{T}_i$ consists of 3 sub-goal sequences $\{S_0, S_1, S_2\}$ (e.g. [initial $\to$ Subgoal 1], [Subgoal 1 $\to$ Subgoal 2], and [Subgoal 2 $\to$ Subgoal 3]).  
For each sequence $s \in \{S_0, S_1, S_2\}$, we divide equally the contribution of each task $\mathbb{T}_j$'s policy $\pi_j$ that shares the sequence $s$ (i.e. if $\mathbb{T}_0$ and $\mathbb{T}_1$ both contain sequence $s$, where we use the notation $\mathbbm{1}(s \in \mathbb{T}_i)$ as the indicator function for whether Task $\mathbb{T}_i$ contains sequence $s$, then $\pi_0$ and $\pi_1$ both have $\frac{1}{2}$ contribution for $s$).
Each sequence contributes equally to the overall mixture probabilities for Task $i$ (i.e. all policies that shares sequence $S_i$ contributes in total $\frac{1}{3}$ to the mixture probability for the behavior policy of Task $\mathbb{T}_i$).
Thus, the contribution probability of Policy $\pi_j$ to Task $\mathbb{T}_i$ is:
\begin{align*}
    p_{j \to i} &=  \sum_{s \in \{S_0, S_1, S_2\}} \frac{1}{3} \cdot \frac{\mathbbm{1}(s \in \mathbb{T}_j)}{\sum_{k}\mathbbm{1}(s \in \mathbb{T}_k)} \\
    \pi_i^\text{mix} &= \sum_j p_{j \to i} \; \pi_j
\end{align*}

Reusing notation for mixture probabilities, we have,
\begin{align*}
    \pi_0^{mix} &= \frac{2}{3}\pi_0 + \frac{1}{3}\pi_1 \\
    \pi_1^{mix} &= \frac{1}{3}\pi_0 + \frac{2}{3}\pi_1 \\
    \pi_2^{mix} &= \frac{5}{6}\pi_2 + \frac{1}{6}\pi_3 \\
    \pi_3^{mix} &= \frac{1}{6}\pi_2 + \frac{5}{6}\pi_3 \\
    \pi_4^{mix} &= \pi_4 \\
\end{align*}

\subsection{Maze Navigation}
The layout and dynamics of the maze follow \citet{fu2020d4rl}, but since their original design aims to train a single agent to reach a fixed goal from multiple start locations, we modified it to have both start and goal locations fixed in each task, as in \citet{nam2022simpl}.  The start location is still perturbed with a small noise to avoid memorizing the task.  The observation consists of the agent's current position and velocity. But, it lacks the goal location, which should be inferred from the dense reward based on the distance to the goal. The action space is the target 2D velocity of the point mass agent. 

The layout we used is \texttt{LARGE\_MAZE} which is an 8$\times$11 maze with paths blocked by walls.  The complete set of 10 tasks is visualized in Figure~\ref{fig:maze_all_tasks}, where green and red spots correspond to the start and goal locations, respectively.  The environment provides an agent a dense reward of $\exp(-dist)$ where $dist$ is a linear distance between the agent's current position and the goal location.  It also gives a penalty of 1 at each timestep in order to prevent the agent from exploiting the reward by staying near the goal.  The episode terminates either as soon as the goal is reached by having $dist < 0.5$ or when 600 timesteps have passed.

\setcounter{figure}{11}
\begin{figure*}[h]
    \centering
    \includegraphics[width=0.99\textwidth]{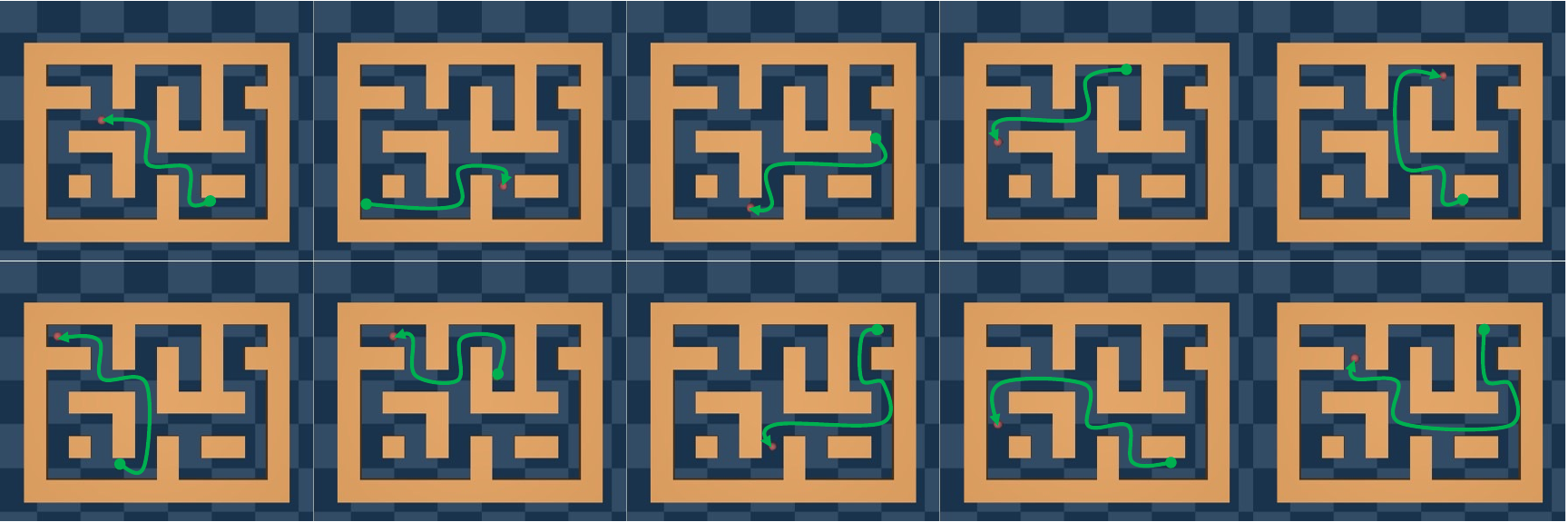}
    \caption[Ten Tasks Defined for the Maze Navigation]{Ten tasks defined for the Maze Navigation. The start and goal locations in each task are shown in green and red spots, respectively, and an example path is shown in green.
    }
    \label{fig:maze_all_tasks}
\end{figure*}

\subsection{Meta-World Manipulation}
For Meta-World CDS, we reproduce the Meta-world environment proposed by ~\citet{yu2021conservative} using the Meta-world codebase ~\citep{yu2019meta}, where the door and drawer are both placed side-by-side on the tabletop for all tasks (see Figure~\ref{fig:env_metaworld}).  The observation space consists of the robot's proprioceptive state, the drawer handle state, the door handle state, and the goal location, which varies based on the task.  Unlike ~\citet{yu2021conservative}, we additionally remove the previous state from the observation space so the policies cannot easily infer the current task, making it a challenging multi-task environment.  The environment also uses the default Meta-World reward functions which is composed of two distance-based rewards: distance between the agent end effector and the object, and distance between the object and its goal location.  We use this modified environment instead of the Meta-world benchmark because our problem formulation of simultaneous multi-task RL requires a consistent environment across tasks.  For Meta-World MT10, we directly use the implementation provided in ~\citep{yu2019meta} without changes.

In both cases, the observation space consists of the robot's proprioceptive state, locations for objects present in the environment (ie. door and drawer handle for CDS, the single target object location for MT10) and the goal location.  In Meta-World CDS, due to the shared environment, there are no directly conflicting task behaviors, since the policies either go to the door or the drawer, they should ignore the irrelevant behaviors of policies interacting with the other object.  In Meta-World MT10, each task interacts with a different object but in an overlapping state space so there is a mix of shared and conflicting behaviors.

\subsection{Walker2D}
\label{sec:non-conflicting}

Walker2D is a 9 DoF bipedal walker agent with the multi-task set of 4 tasks proposed and implemented by ~\citet{lee2019composing}: walking forward at a target velocity, walking backward at a target velocity, balancing under random external forces, and crawling under a ceiling.  Each of these tasks involves different gaits or body positions to accomplish successfully without any obviously identifiable shared behavior in the optimal policies.  
Behavior sharing can still be effective during training to aid exploration and share helpful intermediate behaviors, like balancing. However, there is no obviously identifiable conflicting behavior either in this task set. Because each task requires a different gait, it is unlikely for states to recur between tasks and even less likely for states that are shared to require conflicting behaviors. 
For instance, it is common for all policies to struggle and fall at the beginning of training, but all tasks would require similar stabilizing and correcting behavior over these states.

\subsection{Kitchen}

We modify the Franka Kitchen environment proposed by ~\citet{lynch2019play} and based on the implementation from \citet{fu2020d4rl}.  Since this environment is typically used for long horizon or offline RL, we chose shorter tasks that are learnable with online RL.  Furthermore, we added a dense reward based on the Meta-World reward function.  We formed our 10 task MTRL set by choosing 10 available tasks in the kitchen environment that interacted with the same objects: turning the top burner on or off, turning the bottom burner on or off, turning the light switch on and off, open or closing the sliding cabinet, and opening and closing the hinge cabinet.  The observation space consists of the robot's state, the location of the target object, and the goal location for that object. Similar to the Meta-World CDS environment, these tasks may share behaviors navigating around the kitchen to the target object but have plenty of irrelevant behavior between tasks that interact with different objects and conflicting behaviors when opening or closing the same object.

\begin{figure*}[ht]
    \centering
    \begin{subfigure}[t]{0.325\linewidth}
        \centering
        \includegraphics[width=\linewidth]{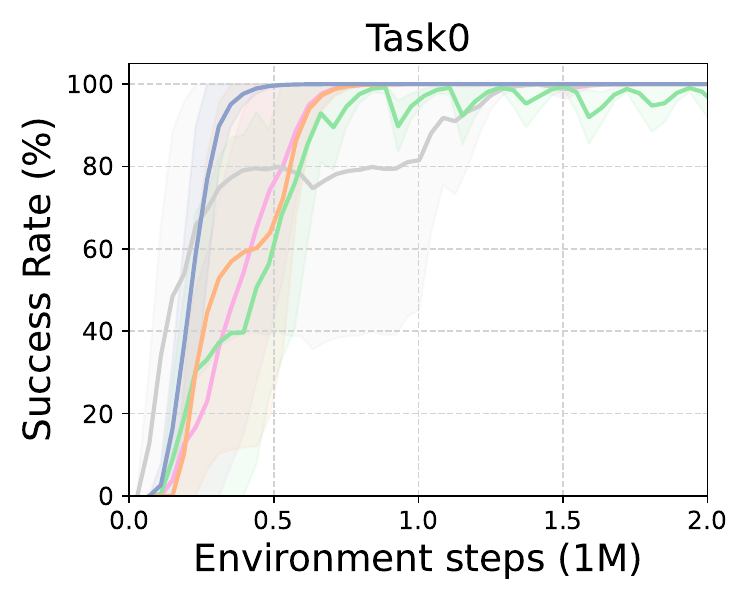}
    \end{subfigure}
    \begin{subfigure}[t]{0.325\linewidth}
        \centering
        \includegraphics[width=\linewidth]{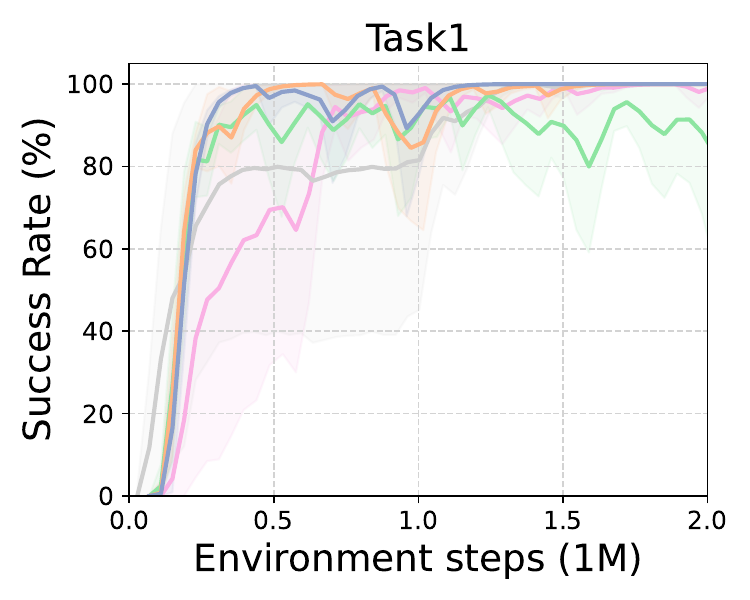}
    \label{fig:result_step_norm:invertedpendulum}
    \end{subfigure}
    \begin{subfigure}[t]{0.325\linewidth}
        \centering
        \includegraphics[width=\linewidth]{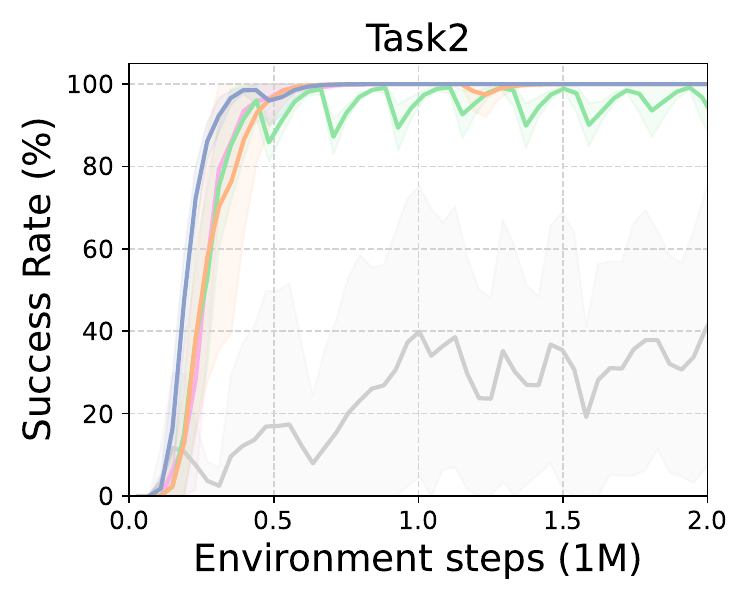}
    \end{subfigure}
    \begin{subfigure}[t]{0.325\linewidth}
        \centering
        \includegraphics[width=\linewidth]{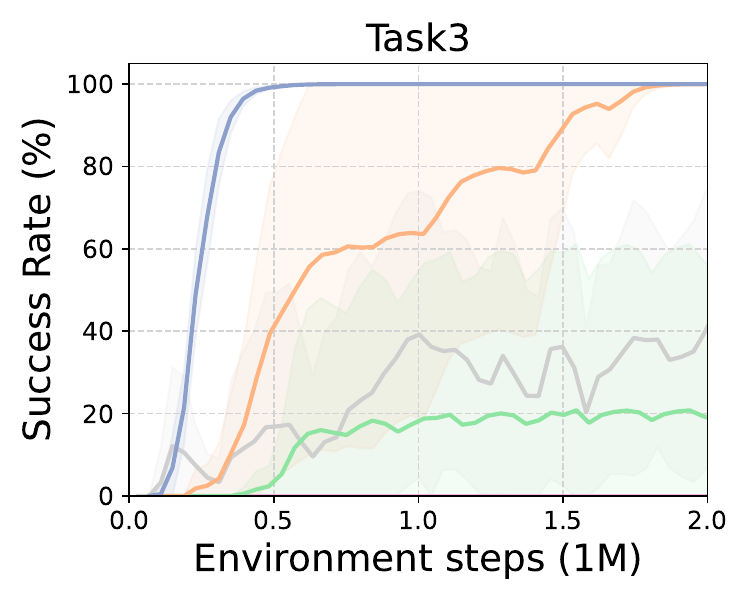}
    \end{subfigure}
    \begin{subfigure}[t]{0.325\linewidth}
        \centering
        \includegraphics[width=\linewidth]{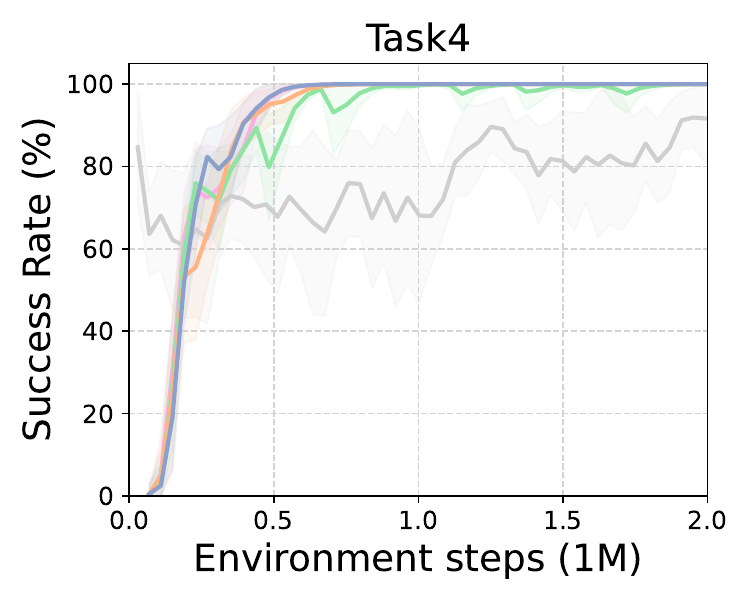}
    \end{subfigure}
    \\
     \begin{subfigure}[t]{0.99\linewidth}
        \centering
        \includegraphics[width=\linewidth]{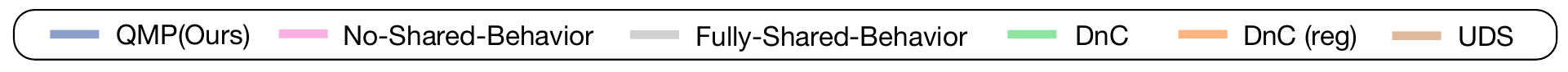}
        \label{fig:result_step_norm:sawyer_push}
    \end{subfigure}
    \caption[Success Rates for Individual Tasks in Multistage Reacher]{
        Success rates for individual tasks in Multistage Reacher.  Our method especially helps in learning Task 3, which requires extra exploration because it only receives a sparse reward.
    }
\label{fig:reacher_task_success}
\end{figure*}

\begin{figure*}[t]
    \centering
    \begin{subfigure}[t]{0.325\linewidth}
        \centering
        \includegraphics[width=\linewidth]{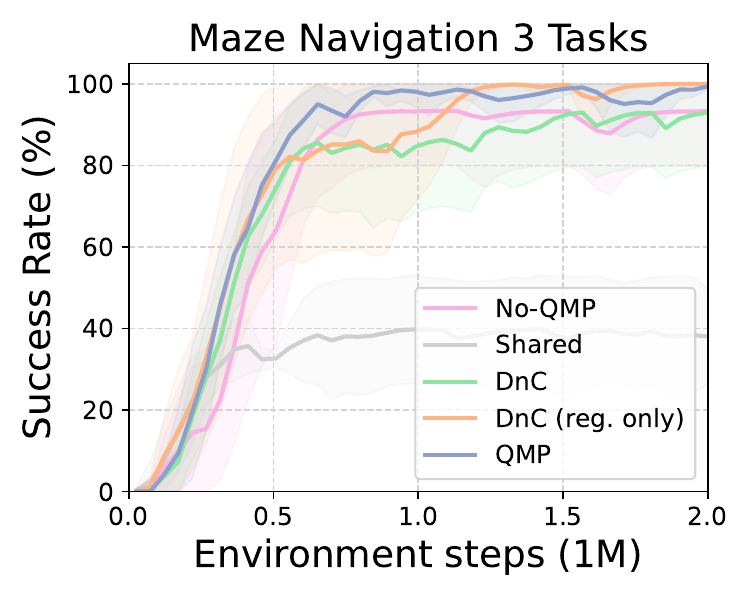}
        \caption{Maze Navigation 3 Tasks}
        \label{fig:maze_3}
    \end{subfigure}   
        \begin{subfigure}[t]{0.325\linewidth}
        \centering
        \includegraphics[width=\linewidth]{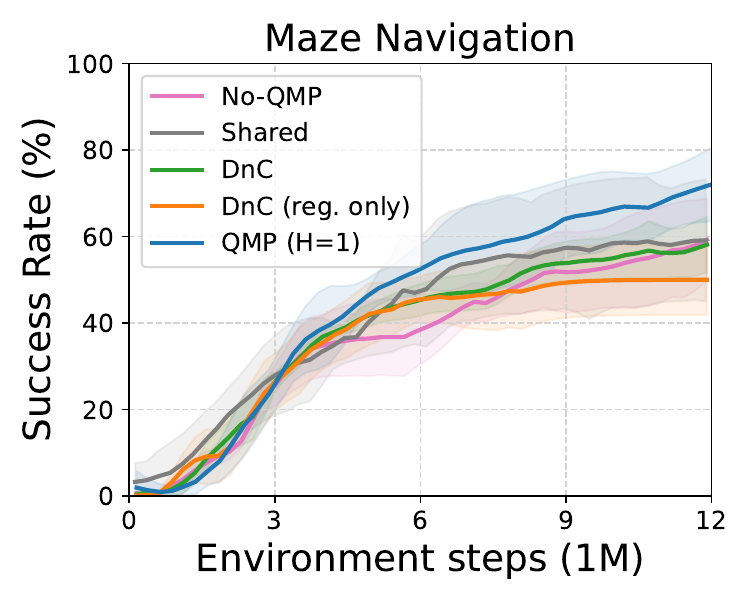}
        \caption{Maze Navigation 10 Tasks}
        \label{fig:maze_10}
    \end{subfigure}  
    \begin{subfigure}[t]{0.325\linewidth}
    \centering
    \includegraphics[width=\linewidth]{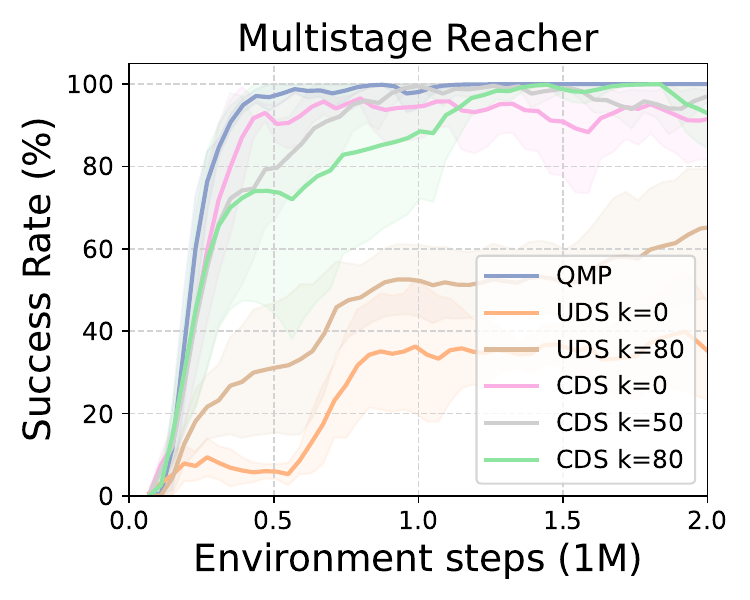}
    \caption[]{Data Sharing Comparison}
        \label{fig:reacher_uds}
    \end{subfigure}  
    \caption[Task Scaling and Data Sharing Results]{QMP scales well from (a) 3 tasks to (b) 10 tasks in Maze Navigation, especially in comparison to other behavior sharing methods.  (c) Online data sharing is very efficient when given task reward functions (all CDS versions), but suffers without (all UDS versions).
    }
\end{figure*}

\section{Additional Results}
\label{sec:appendix_results}

\subsection{Multistage Reacher Per Task Results}
Additional results and analysis on Multistage Reacher are shown in~\myfig{fig:reacher_task_success}.
QMP outperforms all the baselines in this task set, as shown in Figure~\ref{fig:baselines}. Task 3 receives only a sparse reward and, thus, can benefit the most from shared exploration.  We observe that QMP gains the most performance boost due to selective behavior-sharing in Task 3. The No-Shared-Behavior baseline is unable to solve Task 3 at all due to its sparse reward nature. The other baselines which share uniformly suffer at Task 3, likely because they also share behaviors from other conflicting tasks, especially Task 4. We explore this further in the following Section~\ref{sec:behavior_sharing_analysis}.

For all tasks, QMP outperforms or is comparable to No-Shared-Behavior, which shows that selective behavior-sharing can help accelerate learning when task behaviors are shareable and is robust when tasks conflict. Fully-Shared-Behavior especially underperforms in Tasks 2 and 3, which require conflicting behaviors upon reaching Subgoal 1, as defined in Table~\ref{table:reachergoals}. In contrast, it excels at the beginning of Task 0 and Task 1 as their required behaviors are completely shared. However, it suffers after Subgoal 2, as the task objectives diverge.

\vspace{-5pt}
\subsection{QMP Scales with Task Set Size in Maze Navigation}
\label{sec:maze_scale}

We look at the behavior sharing methods in the Maze Navigation task for a task set with 3 tasks (Figure ~\ref{fig:maze_3}) and 10 tasks (Figure ~\ref{fig:maze_10}) and see that QMP scales well from 3 to 10 tasks, even compared to other behavior sharing methods.  Similar to Meta-World, we hypothesize QMP scales better with a larger task set size of similar tasks due to there being more shareable behaviors between tasks.  We see that by selectively sharing behaviors, QMP is able to identify and share helpful behaviors in the larger tasks sets whereas other behavior sharing methods struggle.

\subsection{QMP Outperforms Data Sharing with Reward Labeling}
\label{sec:data_sharing}
In Figure~\ref{fig:reacher_uds}, we report multiple sharing percentiles for UDS and for CDS ~\citep{yu2021conservative} which assumes access to ground truth task reward functions which it uses to re-label the shared data.  When the shared data is relabeled with task reward functions, thereby bypassing the conflicting behavior problem, online data sharing approaches can work very well.  But when unsupervised, we see that online data sharing can actually harm performance in environments with conflicting tasks, with the more conservative data sharing approach (UDS k=80) out-performing sharing all data.  $k$ is the percentile above with we share a transition between tasks, with higher $k$ representing more conservative data sharing.  Full details on our online UDS and CDS implementation are in Section~\ref{sec:uds_implementation}

\subsection{PCGrad Results}
\label{sec:pcgrad}

We evaluate whether QMP combined with PCGrad~\citep{yu2020gradient} results in complementary benefits.  PCGrad is a popular MTRL algorithm that learns a policy with shared parameters and alleviates negative interference between tasks by modifying the multi-task gradients. In Figure~\ref{fig:pcgrad}
and Table~\ref{tab:approaches}
, we see that QMP + PCGrad significantly improves PCGrad performance in 3 out of 4 environments.

\begin{figure}[H]
    \centering
    \begin{subfigure}[t]{0.24\linewidth}
        \centering
        \includegraphics[width=\linewidth]{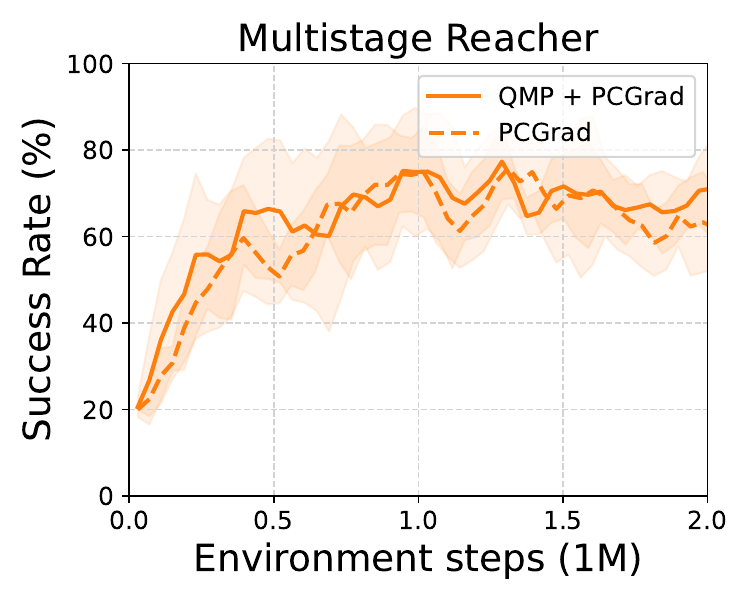}
    \end{subfigure}   
        \begin{subfigure}[t]{0.24\linewidth}
        \centering
        \includegraphics[width=\linewidth]{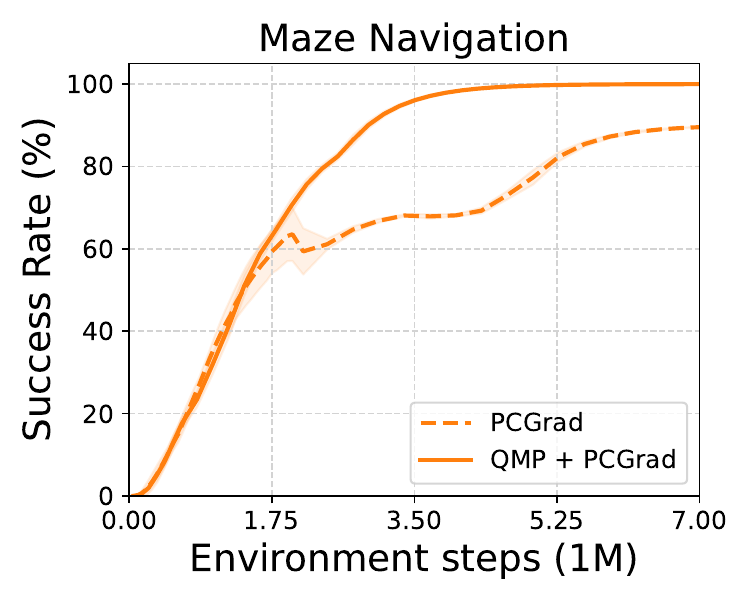}
    \end{subfigure}  
    \begin{subfigure}[t]{0.24\linewidth}
    \centering
    \includegraphics[width=\linewidth]{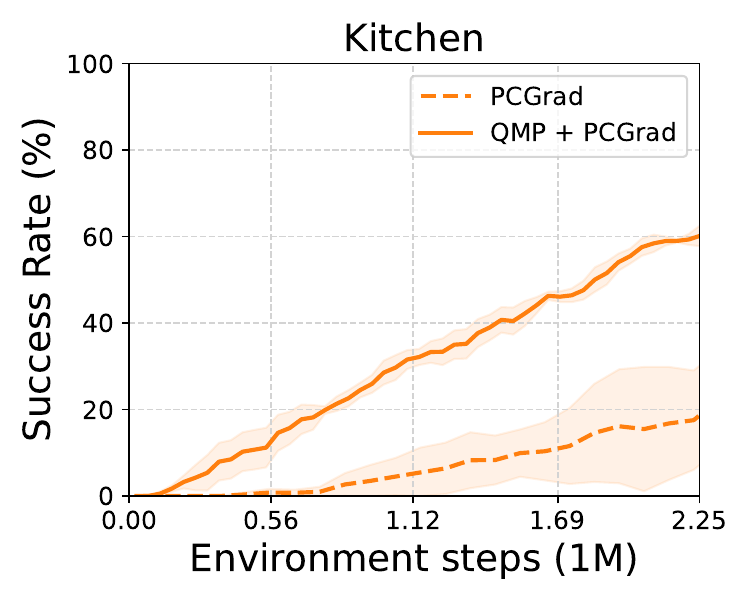}
    \end{subfigure}  
        \begin{subfigure}[t]{0.24\linewidth}
    \centering
    \includegraphics[width=\linewidth]{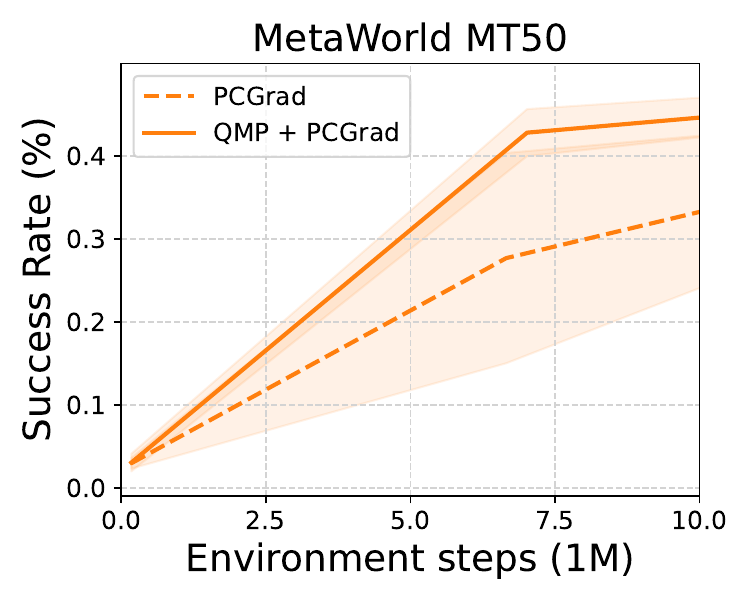}
    \end{subfigure}  
\caption[QMP + PCGrad Results]{Combining QMP with PCGrad yields complementary improvement in 3 out of the 4 environments we tested on.  Dashed lines are PCGrad only and solid lines are QMP + PCGrad.}
\label{fig:pcgrad}
    \vspace{-10pt}
\end{figure}

\begin{table}[H]
\centering
\begin{tabular}{l c c c c}
\hline
Approach & Reacher & Maze & Kitchen & Meta-World 50 \\ \hline
PCGrad & \textbf{0.78} & 0.90 & 0.55 & 0.35 \\
QMP + PCGrad  & \textbf{0.78} & \textbf{1.00} & \textbf{0.60} & \textbf{0.42} \\ \hline
\end{tabular}
\caption{QMP improves performance of PCGrad across various benchmarks}
\label{tab:approaches}
\end{table}

\subsection{Additional MTRL Comparisons}
\label{sec:more_comparisons}

\begin{figure}[H]
        \centering
    \begin{subfigure}[t]{0.325\linewidth}
        \centering
        \includegraphics[width=\linewidth]{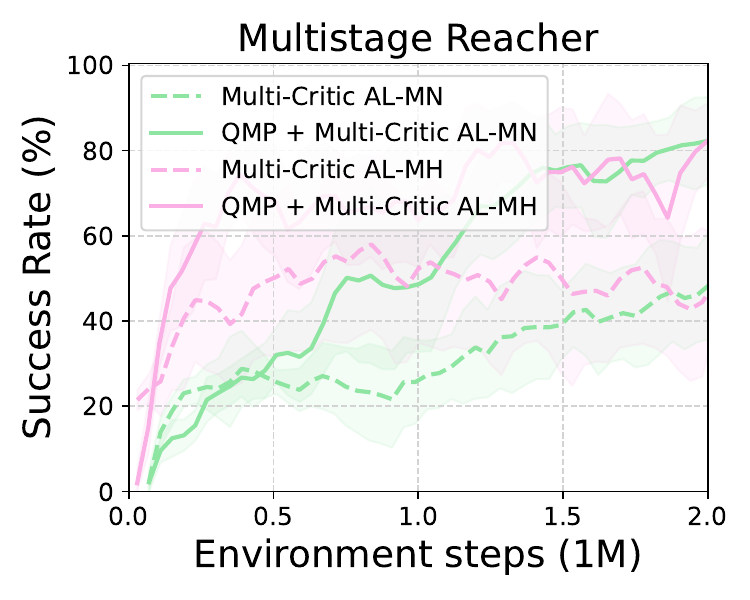}
        \caption{MCAL Comparison}
        \label{fig:mcal_reacher}
    \end{subfigure}   
    \begin{subfigure}[t]{0.325\linewidth}
        \centering
        \includegraphics[width=\linewidth]{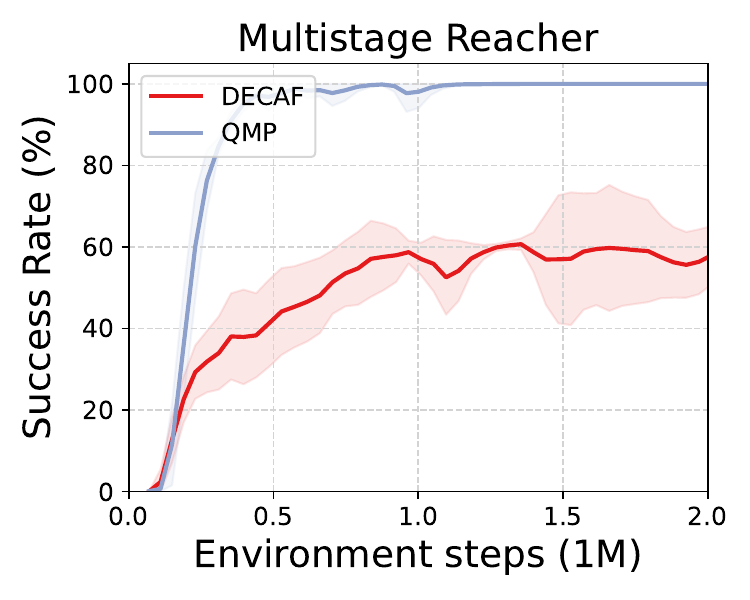}
        \caption{DECAF Sharing Comparison}
        \label{fig:decaf_reacher}
    \end{subfigure}   
        \begin{subfigure}[t]{0.325\linewidth}
        \centering
        \includegraphics[width=\linewidth]{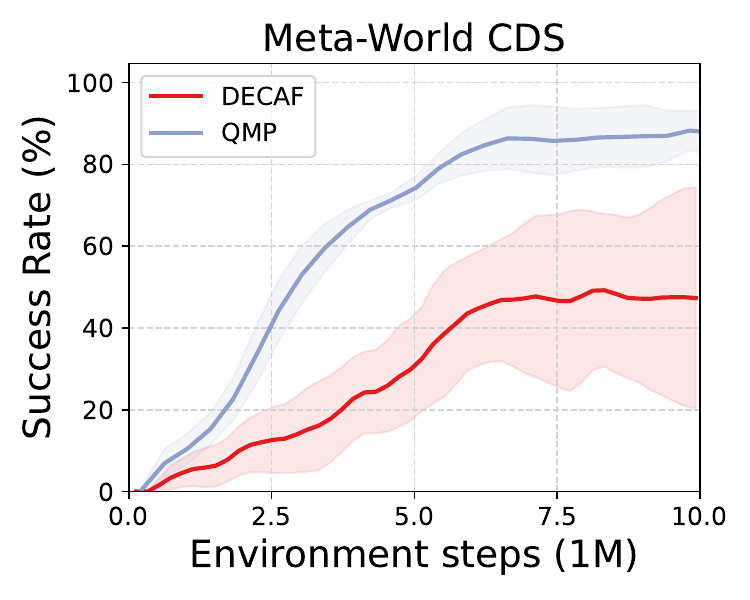}
        \caption{DECAF Sharing Comparison}
        \label{fig:decaf_mt4}
    \end{subfigure}  
    \caption[Comparison with MCAL and DECAF]{Combining our method with another parameter sharing method, MCAL, shows complementary benefits in (a). Our method outperforms DECAF in Multistage Reacher (b) and Meta-World CDS(c), demonstrating that learning to directly use Q-functions from other tasks is more challenging and sample inefficient than using the current task's Q-function to evaluate other tasks' policies.
    }
    \vspace{-10pt}
\end{figure}

Multi-Critic Actor Learning (MCAL)~\citep{mysore22} is a parameter sharing MTRL method that aims to tackle potential negative interference between tasks by learning separate critics for each task while training a single multi-task actor.  We add QMP to two variants of MCAL, Multi-Critic AL-MN which maintains separate networks for each critic and Multi-Critic AL-MH which uses a single multi-head network for the critic, in Multistage Reacher in Figure~\ref{fig:mcal_reacher}.  We see that adding QMP provides around a 20\% final success rate gain in both variants and is more sample efficient.  

We also compare our method with DECAF~\citep{GLATT2020113420}, a MTRL method which shared Q-functions between tasks instead of behavioral policies.  DECAF learns task specific weights to linearly combine the task Q-functions which is used to train the task policy.  In contrast, our method uses the task Q-function to evaluate different tasks' policies to incorporate into the task's behavioral policy.  Our method only modifies the data collection process, not the RL objective, and does not have a learned component.  In Multistage Reacher (Figure \ref{fig:decaf_reacher}) and Meta-World CDS (Figure \ref{fig:decaf_mt4}), we see that QMP outperforms DECAF by more that 20\% final success rate.

\begin{figure}[t]
    \centering
    \begin{subfigure}[t]{0.42\linewidth}
        \centering
        \includegraphics[width=\linewidth]{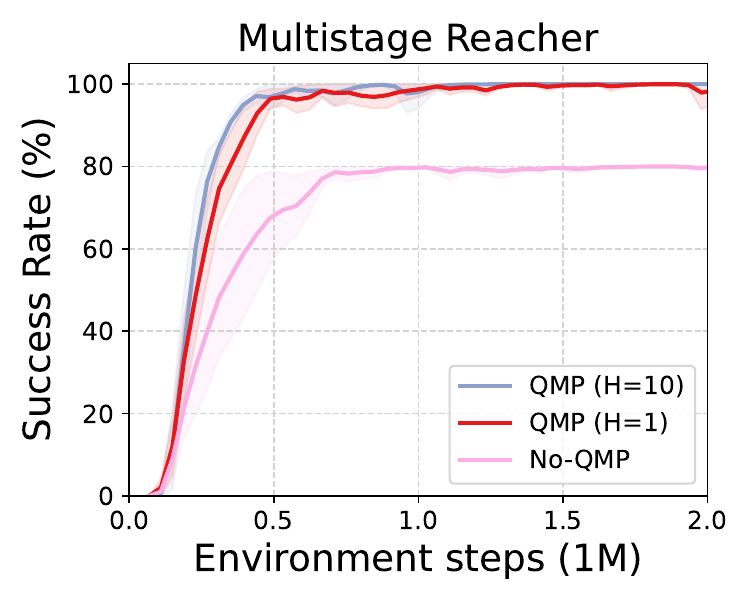}
    \end{subfigure}
    \begin{subfigure}[t]{0.42\linewidth}
        \centering
        \includegraphics[width=\linewidth]{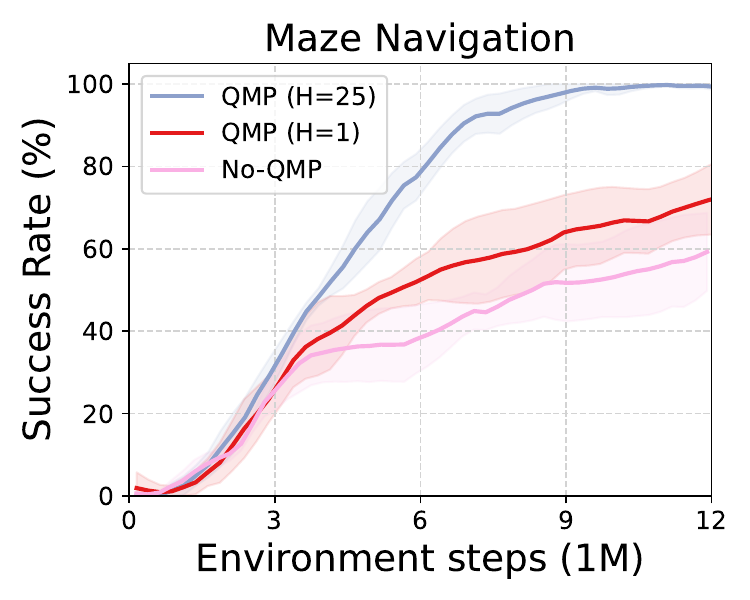}
    \end{subfigure}
    \begin{subfigure}[t]{0.42\linewidth}
        \centering
        \includegraphics[width=\linewidth]{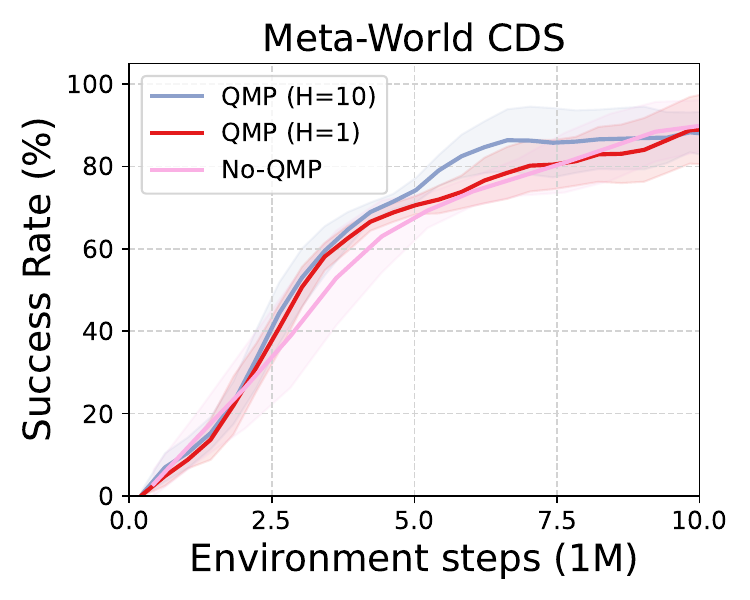}
    \end{subfigure}  
    \begin{subfigure}[t]{0.42\linewidth}
        \centering
        \includegraphics[width=\linewidth]{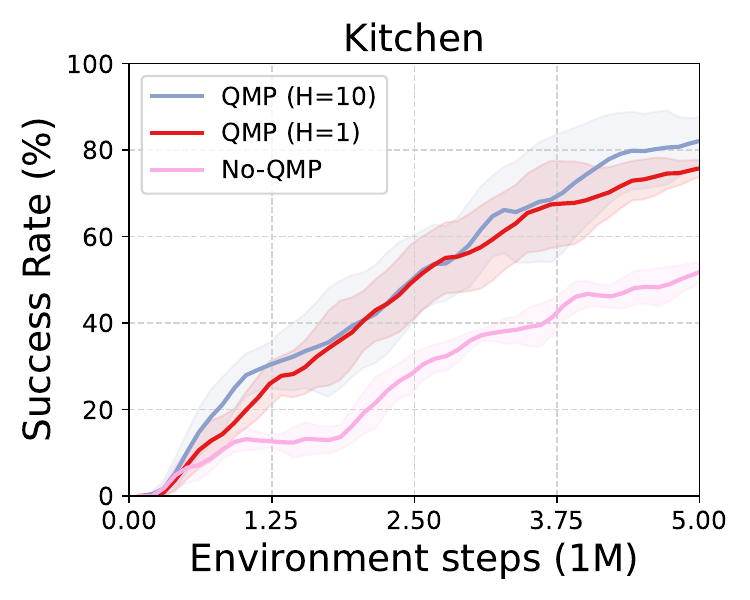}
    \end{subfigure}  
    \caption[Comparison of H-step to 1-step QMP]{
        In each case above, QMP with H-step rollouts of the behavioral policy (blue) performs no worse than QMP with 1-step rollouts (red), with the H-step rollouts helping significantly in some tasks.  Additionally both versions of QMP outperform the No-QMP baseline.
    }
    \vspace{-10pt}
\label{fig:temporally_extended}
\end{figure}

\subsection{Temporally-Extended Behavior Sharing}
\label{sec:temporally_extended}

\begin{wrapfigure}[15]{r}{0.4\textwidth}
    \vspace{-10pt}
    \centering
    \includegraphics[width=\linewidth]{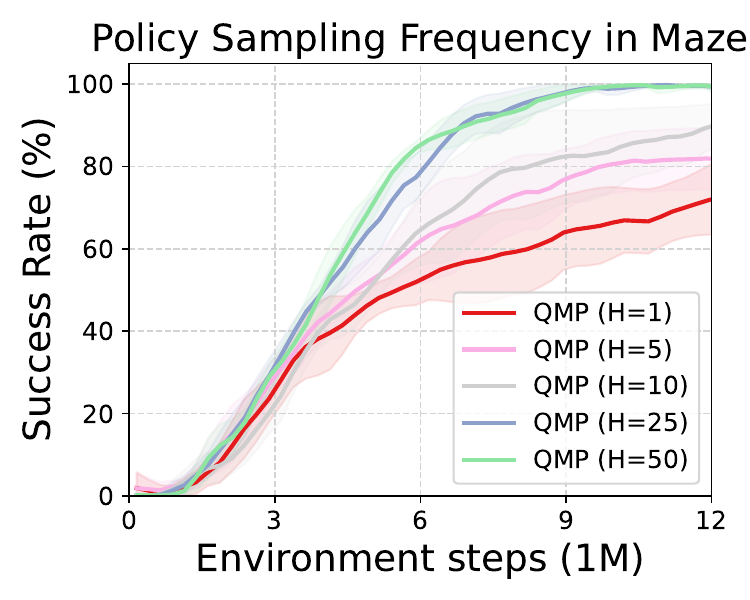}

    \caption[]{QMP consistently improves performance as $H$ increases in Maze.}
     \label{fig:analysis:maze_Hstep}
\end{wrapfigure}

Motivated by prior work in heirarchical RL~\citep{machado2017eigenoption, jinnai2019exploration, jinnai2019discovering, hansen2019fast, zhang2020hierarchical} and skill learning~\citep{pertsch2021accelerating} , we explore temporally extended behavior sharing by simply following the actions of the policy $\pi_j$ selected by $\pi_{mix}^i$ for $H$ steps before re-evaluating $\pi_{mix}^i$.  Furthermore, a recent work ~\citet{xu2024myopic} provides theoretical results that shows myopic ($\epsilon$-greedy) policy sharing can be sample efficient in sufficiently diverse multi-task settings, providing theoretical support for temporally extended multi-task behavior sharing in some settings.

We study the effect of sharing temporally extended behaviors of length $H$ in Maze Navigation in Figure~\ref{fig:analysis:maze_Hstep}, by rolling out the chosen task policy for 1, 5, 10, 25, and 50 timesteps.  We see that performance improves when sharing longer behaviors (25 and 50 timesteps) which are more coherent and temporally extended. This is true even though we choose the behavioral policy greedily, only evaluating the current state $s$ every $H$ steps. Importantly, the guarantees from Theorem~\ref{thm:improvement} do not extend to $H$-step policy roll-outs and increasing $H$ does not help in all environments.  We compare the performance of No-QMP, QMP, and QMP with temporally extended behavior sharing where we choose the best performance out of $H=10$ and $H=25$ in Table~\ref{tab:H-step} and Figure~\ref{fig:temporally_extended}.  Nevertheless, the impressive results in Maze suggest that multi-task temporally extended behavior sharing is worth exploring in future work. 

\begin{table*}[ht]
    \caption[Temporally Extended Behavior Sharing]{Temporally Extended Behavior Sharing}
    \centering
    \begin{tabular}{l c c c c}
        \toprule
        Environment & H-value & No-QMP & QMP & QMP (H$>$1) \\
        \midrule
        Reacher & 10 & 80 ± 0 & \textbf{100} ± 0 & \textbf{100} ± 0 \\
        Maze & 25 & 57.9 ± 0.09 & 72.9 ± 0.1 & \textbf{99.9} ± 0.0 \\
        MT-CDS & 10 & 97.5 ± 4.5 & 93.7 ± 8.5 & \textbf{98.8} ± 2.0 \\
        MT10 & 10 & 79.1 ± 5.97 & \textbf{89.0} ± 0.01 & 82. ± 4.48 \\
        Kitchen & 10 & 65.5 ± 11.0 & 77.3 ± 5.3 & \textbf{84.5} ± 8.7 \\
        Walker & 10 & 3110 ± 220 & 3205 ± 218 & \textbf{3310} ± 203 \\
        \bottomrule
    \end{tabular}
    \label{tab:H-step}
\end{table*}

\section{QMP Behavior Sharing Analysis}
\label{sec:behavior_sharing_analysis}

\begin{figure*}[ht]
    \centering
    \begin{subfigure}[t]{0.32\linewidth}
        \centering
        \includegraphics[width=\linewidth]{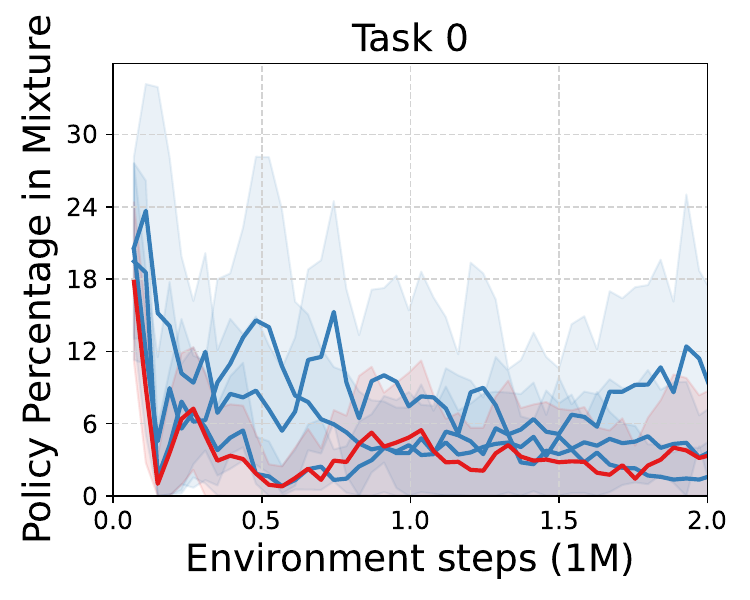}
    \end{subfigure}
    \begin{subfigure}[t]{0.32\linewidth}
        \centering
        \includegraphics[width=\linewidth]{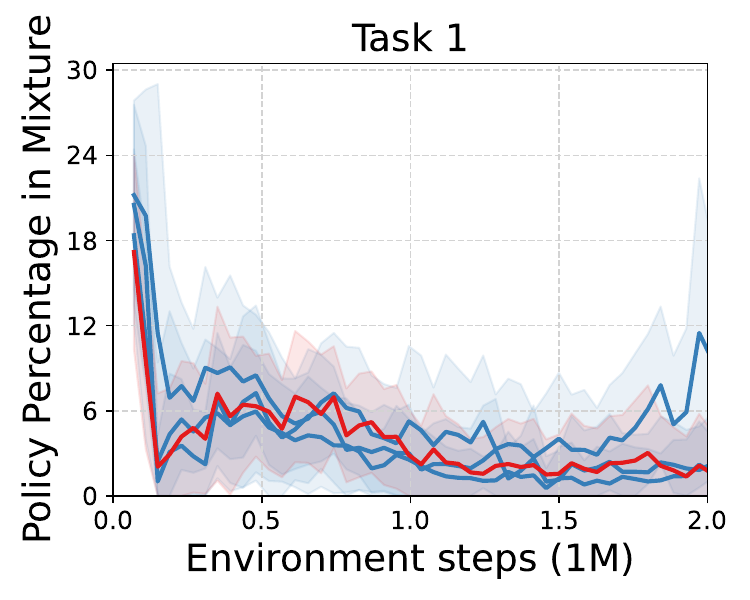}
    \end{subfigure}
    \begin{subfigure}[t]{0.32\linewidth}
        \centering
        \includegraphics[width=\linewidth]{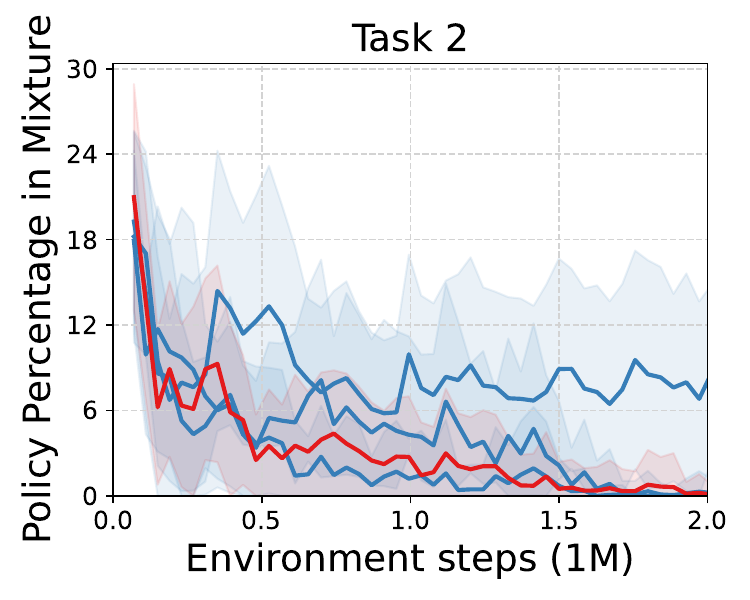}
    \end{subfigure}
    \\
    \begin{subfigure}[t]{0.32\linewidth}
        \centering
        \includegraphics[width=\linewidth]{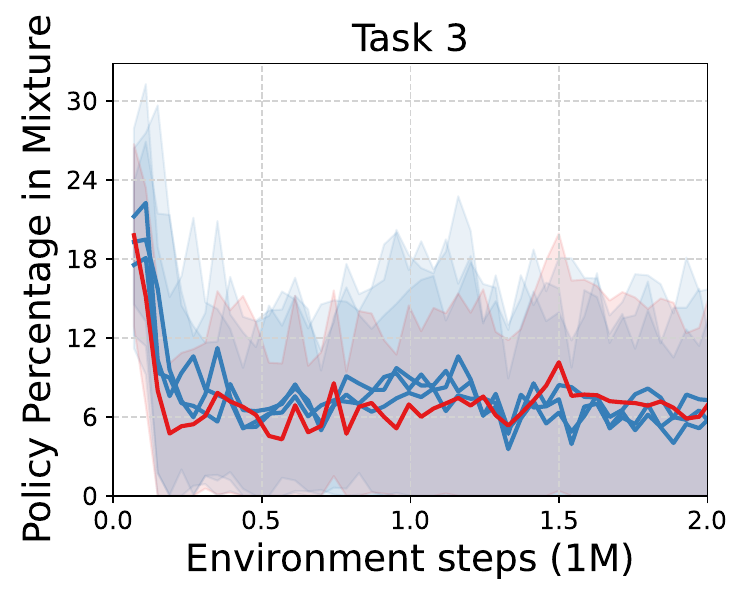}
    \end{subfigure}
    \begin{subfigure}[t]{0.32\linewidth}
        \includegraphics[width=\linewidth]{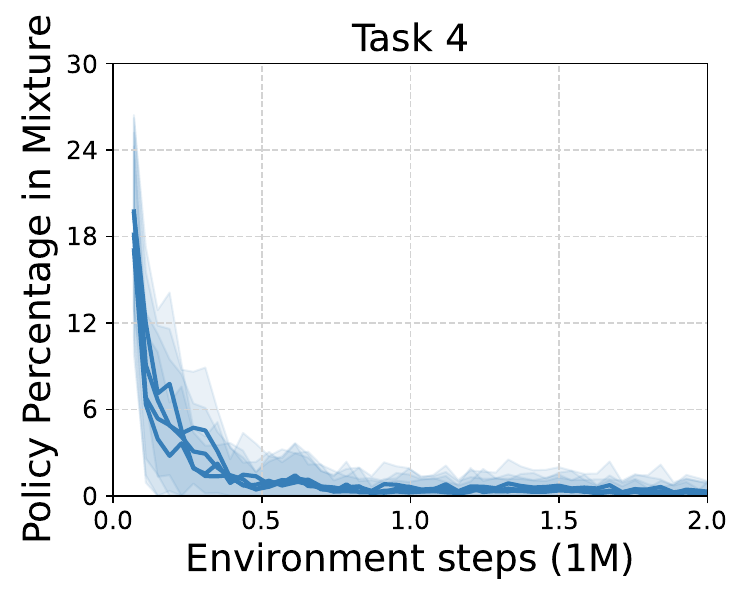}
    \end{subfigure}
     \begin{subfigure}[t]{0.32\linewidth}
        \centering
        \raisebox{65pt}{\includegraphics[width=0.6\linewidth]{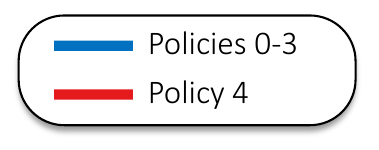}}
    \end{subfigure}   
    \caption[Mixture Probabilities Per Task]{
        Mixture probabilities per task of other policies over the course of training for Multistage Reacher.  The conflicting task Policy 4, which requires staying stationary, is highlighted in red.   
    }
\label{fig:reacher_mixture_probabilities_curves}
\end{figure*}

\begin{wrapfigure}[19]{R}{0.5\textwidth}
    \centering
     \vspace{-5pt}
    \includegraphics[width=\linewidth]{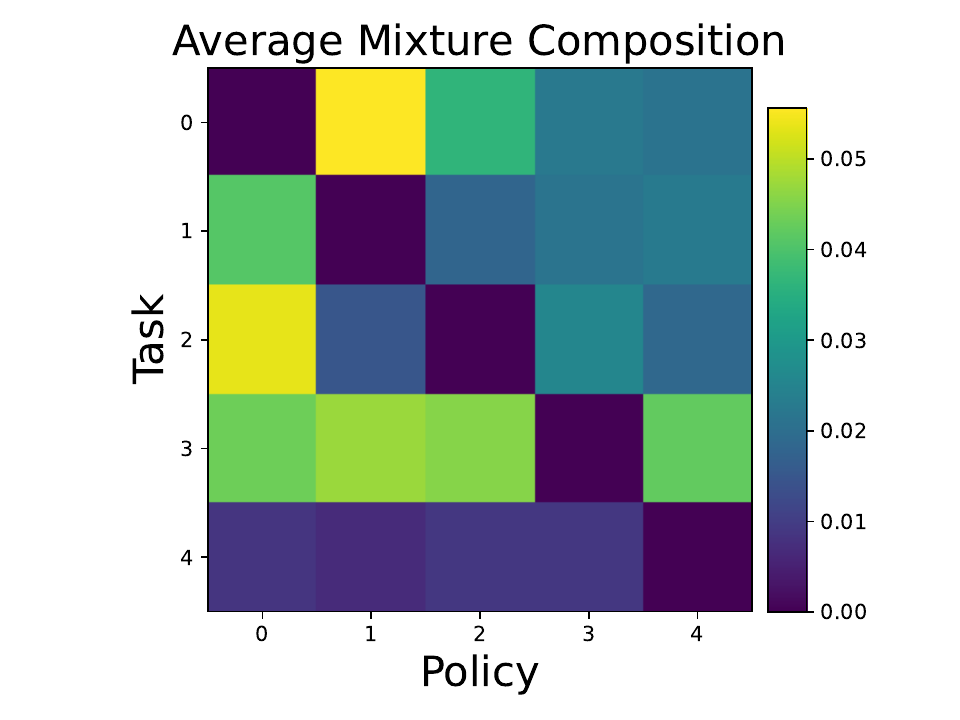}
    \caption[Average Mixture Composition in Multistage Reacher]{Average contribution of each Policy $j$ (col $j$) in each Task $i$'s (row $i$) data collection on Reacher Multistage (diagonal zeroed for contrast).}
    \label{fig:analysis:mixture_probabilities}
\end{wrapfigure}
    
\textbf{QMP learns to not share from conflicting tasks}: We visualize the mixture probabilities per task of other policies in Figure ~\ref{fig:reacher_mixture_probabilities_curves} for Multistage Reacher, highlighting the conflicting Task 4 in red.  Throughout training, we see that QMP learns to share less behavior from Policy 4 than other policies in Tasks 0-3 and shares the least total cross-task behavior in Task 4.  This supports our claim that the Q-switch can identify conflicting behaviors that should not be shared. We note that Task 3 has a relatively larger amount of sharing than other tasks. Since Task 3 has sparse rewards, it benefits the most from exploration via selective behavior-sharing from other tasks.

Figure~\ref{fig:analysis:mixture_probabilities} analyzes the effectiveness of the Q-switch in identifying shareable behaviors by visualizing the average proportion that each task policy is selected for another task over the course of training in the Multistage Reacher environment.  This average mixture composition statistic intuitively analyzes whether QMP identifies shareable behaviors between similar tasks and avoids behavior sharing between conflicting or irrelevant tasks.  As we expect, the Q-switch  for Task 4 utilizes the least behavior from other policies (bottom row), and Policy 4 shares the least with other tasks (rightmost column). Since the agent at Task 4 is rewarded to stay at its initial position, this behavior conflicts with all the other goal-reaching tasks.  Of the remaining tasks, Task 0 and 1 share the most similar goal sequence, so it is intuitive why they benefit from shared exploration and are often selected by their respective Q-switches.  Finally, unlike the other tasks, Task 3 receives only a sparse reward and therefore relies heavily on shared exploration.  In fact, QMP demonstrates the greatest advantage in this task (Appendix Figure~\ref{fig:reacher_task_success}). 

\textbf{Behavior-sharing reduces over training}: Figure~\ref{fig:reacher_mixture_probabilities_curves} shows that the total amount of behavior-sharing decreases over the course of training in all tasks, which demonstrates a naturally arising preference in the Q-switch for the task-specific policy as it becomes more proficient at its own task.

\subsection{Qualitative Visualization of Behavior-Sharing}
\label{sec:qmp_rollout}
We qualitatively analyze behavior sharing by visualizing a rollout of QMP during training for the Drawer Open task in Meta-World Manipulation (Figure ~\ref{fig:analysis:qmp_rollout}).  To generate this visualization, we use a QMP rollout during training before the policy converges to see how behaviors are shared and aid learning.  For clarity, we first subsample the episodes timesteps by 10 and only report timesteps when the activated policy changes to a new one (ie. from timestep 80 to 110, QMP chose the Drawer Open policy).  We qualitatively break down the episode into when the agent is approaching the drawer (top row; Steps 1-60), grasping the handle (top row; Steps 61-80), and pulling the drawer open (bottom row).  This allows us to see that it switches between all task policies as it approaches the drawer, uses drawer-specific policies as it grasps the handle, and opening-specific policies as it pulls the drawer open.  This suggests that in addition to ignoring conflicting behaviors, QMP is able to identify helpful behaviors to share.  We note that QMP is not perfect at policy selection throughout the entire rollout, and it is also hard to interpret these shared behaviors exactly because the policies themselves are only partially trained, as this rollout is from the middle of training.  However, in conjunction with the overall results and analysis, this supports our claim that QMP can effectively identify shareable behaviors between tasks.

\section{Additional Ablations and Analysis}
\label{sec:additional_ablations}

\subsection{Probabilistic Mixture v/s Arg-Max}
\label{sec:softmax}
A probabilistic mixture of policies is a design choice of our approach where arg-max is replaced with softmax. However, in our initial experiments, we found no significant improvement in performance and it came with an additional hyperparameter of tuning the temperature coefficient. As we see in Figure~\ref{fig:reacher_softmax}, QMP actually outperforms a probabilistic mixture over a range of softmax temperatures, justifying the design choice of argmax over softmax due to its reliable performance and simplicity.

\begin{figure*}[t]
    \centering
    \begin{subfigure}[t]{0.32\linewidth}
        \centering
        \includegraphics[width=\linewidth]{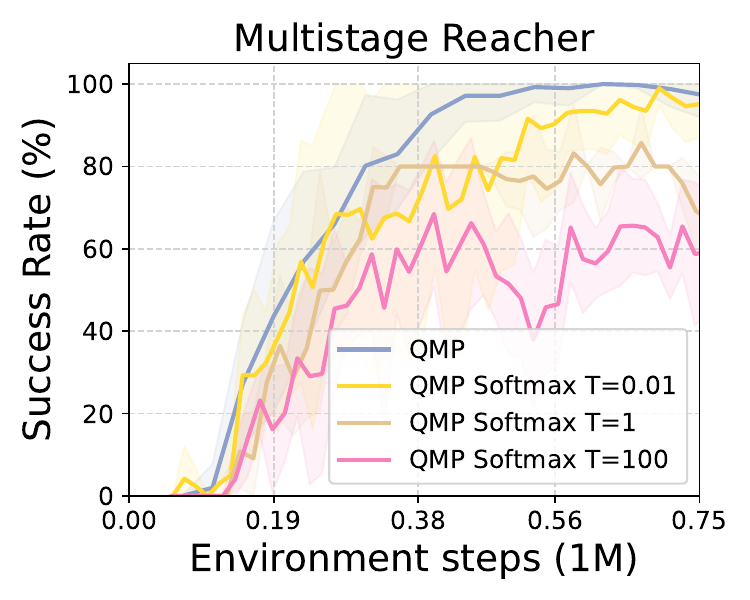}
        \caption{Probabilistic Mixture Ablation}
        \label{fig:reacher_softmax}
    \end{subfigure}
    \begin{subfigure}[t]{0.32\linewidth}
        \centering
        \includegraphics[width=\linewidth]{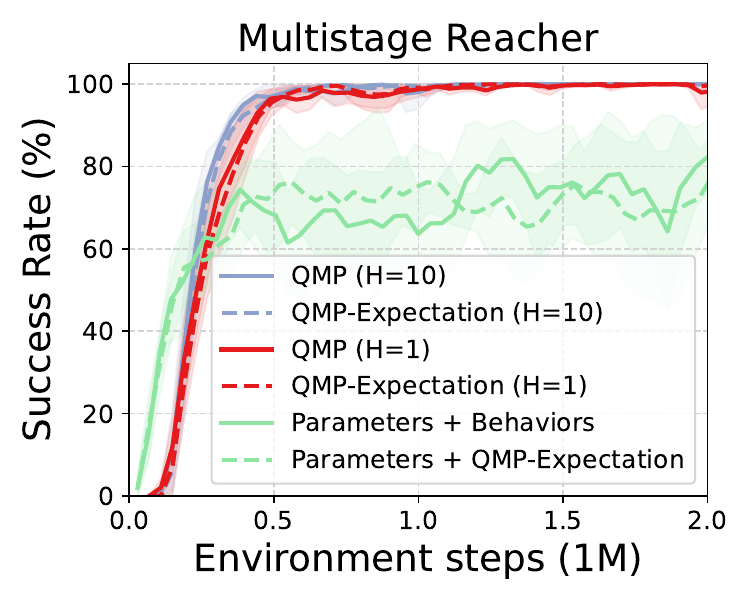}
        \caption{Expected Q-value Approximations}
        \label{fig:Q_expectation}
    \end{subfigure}  
    \begin{subfigure}[t]{0.32\linewidth}
        \centering
        \includegraphics[width=\linewidth]{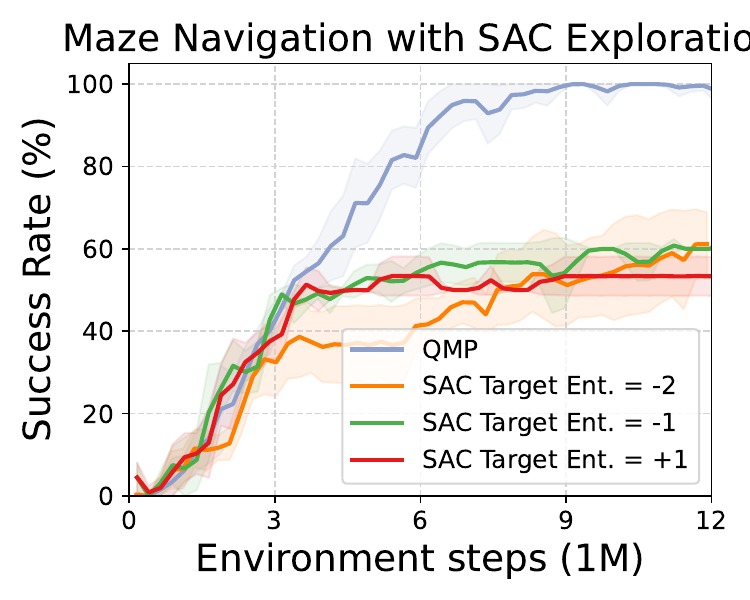}
        \caption{Single-Task Exploration}
        \label{fig:sac_exploration}
    \end{subfigure}  
    \caption[Probabilistic Mixture and Behavior Length Ablations, SAC Exploration Comparison]{
        (a) Using probabilistic mixtures with QMP by using a softmax over Q values with temperature T, which determines the spread of the distribution.  (b)Across different QMP versions, evaluating mean policy actions (solid lines) vs. sampling 10 actions to estimate expected Q-values (dashed lines) result in similar performance. (c) Single-task exploration by varying SAC target entropy. QMP reaches a higher success rate because it shares exploratory behavior \textbf{across} tasks. 
    }
\end{figure*}

\subsection{Approximation Expected Q-value Over Policy Action Distribution}
\label{sec:Q_expectation}

QMP's behavior policy is defined as $\pi_i^\text{mix} = \underset{\pi_j \in \{ \pi_1, \dots, \pi_N \}}{\arg\max}\; \mathbb{E}_{a \sim \pi_j(s)} Q_i(s, a) $, which picks the task policy with the best expected Q value over its action distribution.  We approximate the expectation by evaluating the Q-value of only the mean of each policy's action distribution which is computationally cheaper $\pi_i^\text{mix} \approx \underset{\pi_j \in \{ \pi_1, \dots, \pi_N \}}{\arg\max}\;  Q_i(s, \mathbb{E}_{a \sim \pi_j(s)} [a]) $.  We compare this to a empirical estimate that samples 10 actions from the policy distribution and picks the policy with highest average Q-value in Figure~\ref{fig:Q_expectation}, and find no significant performance difference between the two approximations.  This validates that our simple approximation works well in practice, which we hypothesize is due to the low variance of the task policies.

\subsection{QMP v/s Increasing Single Task Exploration}
\label{sec:sac_exploration}

Since QMP seeks to gather more informative training data for the task by modifying the behavioral policy, it can be viewed as a form of multi-task exploration.  We briefly investigate how single task exploration differs from multi-task exploration by tuning the target entropy in SAC in Figure~\ref{fig:sac_exploration} which influences the policy entropy and therefore exploration.  We see that while tuning this exploration parameter affects the sample efficiency by more quickly learning each individual task, QMP achieves a higher final success rate by incorporating behaviors form other tasks, and therefore doing multi-task exploration.  The benefit of exploration or behavior sharing algorithms specialized for multi-task RL is precisely this ability to transfer and share information between tasks.

\subsection{QMP Runtime}
\label{sec:runtime}
While QMP does require more policy and q-function evaluations to sample from $\pi_{mix}^i$ in comparison to the base RL method, these evaluations can be greatly parallelized and do not significantly increase runtime (see Table~\ref{tab:runtime}) for average runtimes for our experiments).  Each sample from $\pi_{mix}^i$ requires querying $N$ policy proposals and $N$ Q-values.  In QMP + Parameter-Sharing, thanks to the multihead architectures of the policy and Q-networks, all N evaluations are done in one single pass. Thus, with two passes through neural networks, we can get N action candidates and their N Q-values. Therefore, the increase in time is negligible.  Even without parameter-sharing, $Q_i(s, a_j)$ evaluations can be batched $\forall j$ and the policy evaluations $\pi_j(a_j | s)$ are all independent, and can be obtained in parallel. In our implementation, we batch the Q evaluations, but do not parallelize the policy evaluations.

\begin{table*}[ht]
    \caption[Runtime Comparison]{Runtime Comparison}
    \centering
    \begin{tabular}{l c c c c}
        \toprule
        \multirow{2}{*}{Environment} & 
            No-Sharing & QMP + & Parameter-Sharing & QMP + \\
          &  & No-Sharing & & Parameter-Sharing \\
        \midrule
        Reacher Multistage  & 12.5 hr & 14.2 hr & 14 hr & 16.2 hr \\
        MT50  & -- & -- & 7 days, 3hr & 7 days, 6 hr \\
        \bottomrule
    \end{tabular}
    \label{tab:runtime}
\end{table*}

\section {Implementation Details}
\label{sec:appendix_implementation}
The SAC implementation we used in all our experiments is based on the open-source implementation from Garage \citep{garage}.  We used fully connected layers for the policies and Q-functions with the default hyperparameters listed in Table ~\ref{tab:hyperparams}.  For DnC baselines, we reproduced the method in Garage to the best of our ability with minimal modifications.

We used PyTorch~\citep{paszke2019pytorch} for our implementation. We run the experiments primarily on machines with either NVIDIA GeForce RTX 2080 Ti or RTX 3090.  Most experiments take around one day or less on an RTX 3090 to run. We use the Weights \& Biases tool~\citep{wandb} for logging and tracking experiments. All the environments were developed using the OpenAI Gym interface~\citep{brockman2016openai}.

\subsection{Hyperparameters}
\label{sec:hyperparameters}
Table~\ref{tab:hyperparams} details the list of important hyperparameters on all the 3 environments.  For all environments, we used a 2 layer fully connected network with hidden dimension 256 and a tanh activation function for the policies and Q functions.  We use a target network for the Q function with target update $\tau = 0.995$ and trained with an RL discount of $\gamma = 0.99$.

\begin{table*}[ht]
    \caption[Hyperparameters]{Hyperparameters.}
    \centering
    \begin{tabular}{l c c c}
        \toprule
        \multirow{2}{*}{Hyperparameter} & 
            Multistage & Maze & Meta-World \\
          & Reacher & Navigation & CDS \\
        \midrule
        Minimum buffer size (per task)  & 10000 & 3000 & 10000 \\
        \# Environment steps per update (per task)  & 1000 & 600 & 500 \\
        \# Gradient steps per update (per task) & 100 & 100 & 50 \\
        Batch size  & 32 & 256 & 256 \\
        Learning rates for $\pi$, $Q$ and $\alpha$  & 0.0003 & 0.0003 & 0.0015 \\
        \bottomrule
    \end{tabular}
    \vspace{10pt}
    \vspace{10pt}
    \begin{tabular}{l c c c}
        \toprule
        \multirow{2}{*}{Hyperparameter} & 
            Meta-World & Walker & Kitchen \\
          & MT10 & & \\
        \midrule
        Minimum buffer size (per task)  & 500 & 2500 & 200 \\
        \# Environment steps per update (per task)  & 500 & 1000 & 200 \\
        \# Gradient steps per update (per task) & 50 & 1500 & 50 \\
        Batch size  & 2560 & 256 & 1280 \\
        Learning rates for $\pi$, $Q$ and $\alpha$  & 0.0015 & 0.0003 & 0.0003 \\
        \bottomrule
    \end{tabular}
    \label{tab:hyperparams}
\end{table*}

\subsection{No-Shared-Behaviors}
All $N$ networks have the same architecture with the hyperparameters presented in Table ~\ref{tab:hyperparams}.

\subsection{Fully-Shared-Behaviors}
Since it is the only model with a single policy, we increased the number of parameters in the network to match others and tuned the learning rate.  The hidden dimension of each layer is 600 in Multistage Reacher, 834 in Maze Navigation, and 512 in Meta-World Manipulation, and we kept the number of layers at 2.  The number of environment steps as well as the number of gradient steps per update were increased by $N$ times so that the total number of steps could match those in other models.  For the learning rate, we tried 4 different values (0.0003, 0.0005, 0.001, 0.0015) and chose the most performant one.  The actual learning rate used for each experiment is 0.0003 in Multistage Reacher and Maze Navigation, and 0.001 in Meta-World Manipulation.

This modification also applies to the Shared Multihead baseline, but with separate tuning for the network size and learning rates.  In Multistage Reacher, we used layers with hidden dimensions of 512 and 0.001 as the final learning rate. In Maze Navigation, we used 834 for hidden dimensions and 0.0003 for the learning rate.

\subsection{DnC}
We used the same hyperparameters as in Separated, while the policy distillation parameters and the regularization coefficients were manually tuned.  Following the settings in the original DnC \citep{ghosh2018divideandconquer}, we adjusted the period of policy distillation to have 10 distillations over the course of training.  The number of distillation epochs was set to 500 to ensure that the distillation is completed.  The regularization coefficients were searched among 5 values (0.0001, 0.001, 0.01, 0.1, 1), and we chose the best one.  Note that this search was done separately for DnC and DnC with regularization only.  For DnC, the coefficients we used are: 0.001 in Multistage Reacher and Maze Navigation, and 0.001 in Meta-World Manipulation.  For Dnc with regularization only, the values are: 0.001 in Multistage Reacher, 0.0001 in Maze Navigation, and 0.001 in Meta-World Manipulation.

\subsection{QMP (Ours)}
Our method also uses the default hyperparameters. QMP does not require any task specific hyperparameters.  The exception is Meta-World MT10, where we found it helpful to have more conservative behavior sharing by choosing the task-specific policy 70\% of the time.  The remaining 30\% we use the Q-filter to select a policy as usual.  

Like in Baseline Multihead (Parameters-Only), the QMP Multihead architecture (Parameters+Behaviors) also required a separate tuning.  Since QMP Multihead effectively has one network, we increased the network size in accordance with Baseline Multihead and tuned the learning rate in addition to the mixture warmup period.  The best-performing combinations of these parameters we found are 0 and 0.001 in Multistage Reacher, and 100 and 0.0003 in Maze Navigation, respectively.

\subsection{Online UDS}
\label{sec:uds_implementation}
\citet{yu2022leverage} proposes an offline multi-task RL method (UDS) that shares data between tasks if their conservative Q value falls above the $k^{th}$ percentile of the task data.  Specifically, before training, you would go through all the tasks' data and share some data from Task $j$ to Task $i$ if the Task $i$ Q value of that data is greater than $k\%$ of the Q values of Task $i$'s data. UDS does not require access to task reward functions like other data-sharing approaches. It simply re-labels any shared data with the minimum task reward, making it applicable to our problem setting as we also do not assume that reward relabeling is possible.  

In order to adapt UDS to online RL, instead of doing data sharing once on the given multi-task dataset, we apply UDS data sharing before every training iteration to the data in the multi-task replay buffers. Concretely, we implement this on-the-fly for every batch of sampled data by sampling one batch of data from Task $i$’s replay buffer, $\beta_i$, and one batch of data from the other task’s replay buffers $\beta_{j \neq i}$.  Then following UDS, we would form the effective batch $\beta_i^{\text{eff}}$ by sharing data from $\beta_{j \neq i}$ if it falls above the $k^{th}$ percentile of Q values for $\beta_i$:
\begin{gather*}
UDS_{\text{online}}: (s,a,r_i, s') \sim \beta_{j \neq i} \in \beta_i^{\text{eff}}\\
\text{ if } \Delta^{\pi}(s,a) := \hat{Q}^{\pi}(s,a,i) - P_{k^{\text{th}}} [ \hat{Q}^{\pi}(s',a',i): s', a' \sim \beta_i ] \geq 0
\end{gather*}

Note the differences here: (i) the `data' used for data-sharing is the sampled replay buffer batch instead of the offline dataset, and (ii) we use the standard Q-function to evaluate data instead of the conservative Q-function since we are doing online (not offline) RL.  We implement it this way as a practical approximation to avoid having to process the entire replay buffer every training iteration.

We use the same default hyperparameters as the other baseline methods.  Additionally, we need to tune the sharing percentile $k$.  For this, we tried $0^{\text{th}}$ percentile (sharing all data) and $80^{\text{th}}$ percentile, and chose the best-performing one.

\end{document}